\documentclass{article}

\usepackage{fullpage}
\usepackage{url,color,cite}
\usepackage{caption}
\usepackage{amsmath,amsthm,amsfonts,amssymb}
\usepackage{cases}
\usepackage{mathtools}

\usepackage{float}
\usepackage{epstopdf}
\usepackage{nicefrac}
\usepackage{algorithm}
\usepackage[noend]{algpseudocode}
\usepackage{pifont}
\usepackage{subcaption}
\usepackage{sidecap}
\usepackage{multirow}
\usepackage{multicol}
\usepackage{bbm}

\newtheorem{lemma}{Lemma}
\newtheorem*{lemma*}{Lemma}
\newtheorem{theorem}{Theorem}
\newtheorem*{theorem*}{Theorem}
\theoremstyle{definition}
\newtheorem{defn}{Definition}
\theoremstyle{definition}

\newtheorem{remark}{Remark}
\theoremstyle{definition}
\newtheorem{corollary}{Corollary}
\theoremstyle{definition}

\newtheorem*{claim*}{Claim}

\newcommand{\bbE}{\mathbb{E}}
\newcommand{\bbN}{\mathbb{N}}
\newcommand{\bbR}{\mathbb{R}}
\newcommand{\calG}{\mathcal{G}}
\newcommand{\calA}{\mathcal{A}}
\newcommand{\calB}{\mathcal{B}}
\newcommand{\calC}{\mathcal{C}}
\newcommand{\calD}{\mathcal{D}}
\newcommand{\calE}{\mathcal{E}}
\newcommand{\calI}{\mathcal{I}}
\newcommand{\calH}{\mathcal{H}}
\newcommand{\calM}{\mathcal{M}}

\newcommand{\calO}{\mathcal{O}}
\newcommand{\calP}{\mathcal{P}}
\newcommand{\calQ}{\mathcal{Q}}
\newcommand{\calR}{\mathcal{R}}
\newcommand{\calS}{\mathcal{S}}
\newcommand{\calT}{\mathcal{T}}
\newcommand{\calU}{\mathcal{U}}
\newcommand{\calV}{\mathcal{V}}
\newcommand{\calX}{\mathcal{X}}
\newcommand{\calY}{\mathcal{Y}}
\newcommand{\calZ}{\mathcal{Z}}

\newcommand{\frS}{\mathfrak{S}}

\renewcommand{\(}{\left(}
\renewcommand{\)}{\right)}

\newcommand{\eps}{\epsilon}
\newcommand{\epsbar}{\overline{\epsilon}}
\newcommand{\delbar}{\overline{\delta}}
\newcommand{\samp}{\mathrm{samp}}
\newcommand{\bmu}{\boldsymbol{\mu}}
\newcommand{\be}{\boldsymbol{e}}

\newcommand{\bu}{\boldsymbol{u}}
\newcommand{\bv}{\boldsymbol{v}}
\newcommand{\bq}{\boldsymbol{q}}
\newcommand{\bx}{\boldsymbol{x}}
\newcommand{\by}{\boldsymbol{y}}

\newcommand{\bz}{\boldsymbol{z}}

\newcommand{\hatx}{\widehat{\bx}}
\newcommand{\barx}{\overline{\bx}}
\newcommand{\hatcalP}{\widehat{\mathcal{P}}}

\newcommand{\bH}{\mathbf{H}}
\newcommand{\bI}{\mathbf{I}}

\usepackage{microtype}
\usepackage{graphicx}
\usepackage{booktabs} 
\usepackage{hyperref}
\usepackage{cleveref}

\usepackage{lipsum}
\allowdisplaybreaks

\title{Shuffled Model of Federated Learning: Privacy, Communication, and Accuracy Trade-offs}

\author{Antonious M. Girgis, Deepesh Data, Suhas Diggavi, \\ Peter  Kairouz, and Ananda Theertha Suresh \thanks{Antonious M. Girgis, Deepesh Data and Suhas Diggavi are with the University of California, Los Angeles, USA.
Email: amgirgis@g.ucla.edu, deepesh.data@gmail.com, suhas@ee.ucla.edu.
Peter  Kairouz and Ananda Theertha Suresh are with Google Research, USA.
Email: kairouz@google.com, theertha@google.com.
}
}

\date{}

\begin{document}
\maketitle

\begin{abstract}

We consider a distributed empirical risk minimization (ERM)
optimization problem with communication efficiency and privacy
requirements, motivated by the federated learning (FL) framework
\cite{kairouz2019advances}. Unique challenges to the traditional ERM
problem in the context of FL include {\sf (i)} need to provide privacy
guarantees on clients' data, {\sf (ii)} compress the
communication between clients and the server, since clients might have
low-bandwidth links, {\sf (iii)} work with a dynamic client population
at each round of communication between the server and the clients, as
a small fraction of clients are sampled at each round. To address
these challenges we develop (optimal) communication-efficient schemes
for private mean estimation for several $\ell_p$ spaces, enabling efficient gradient aggregation
for each iteration of the optimization solution of the ERM.  
We also provide lower and upper bounds for mean estimation with privacy and communication 
constraints for arbitrary $\ell_p$ spaces.
To get the overall communication, privacy, and optimization performance
operation point, we combine this with privacy amplification
opportunities inherent to this setup. Our solution takes advantage of the
inherent privacy amplification provided by client sampling and data
sampling at each client (through Stochastic Gradient Descent) as well as
the recently developed privacy framework using anonymization, which
effectively presents to the server responses that are randomly
shuffled with respect to the clients. Putting these together, we
demonstrate that one can get the same privacy,
optimization-performance operating point developed in recent methods
that use full-precision communication, but at a much lower
communication cost, \emph{i.e.,} effectively getting communication
efficiency for ``free''.

\end{abstract}

\section{Introduction}\label{sec:intro}

In this paper we consider a federated learning (FL) framework
\cite{konevcny2016federated,yang2019federated,kairouz2019advances}, where the data is generated across $m$
clients. The server wants to learn a machine learning
model that minimizes a certain objective function using the $m$ local
datasets, without collecting the data at the central server due to
privacy considerations. Specifically, each client $i$ has a local
dataset $\calD_i=\lbrace d_{i1},\ldots,d_{ir}\rbrace \subset \frS^r$
comprising $r$ data points, where $\frS$ is the set from which the
$i$'th client's data is from.\footnote{The data could be images with labels,
  \emph{e.g.,} $8\times 8$ pixel blocks with labels, where each pixel is represented by
  $32$ bits and each label is represented by an integer from $\lbrace 1\ldots,10\rbrace$, in which case $\frS=\mathbb{F}^{64}\times\mathbb{G} $, where
  $\mathbb{F}=\{1,\ldots,256\}$ and $\mathbb{G}=\lbrace 1,\ldots,10\rbrace$.  
  Another example is the text represented by
  words, in which case $\frS=\mathcal{W}^*$, where $\mathcal{W}$ is
  the language alphabet and $\frS$ are strings of letters from the
  alphabet.}
The server wants to solve the following empirical risk minimization problem:
\begin{equation}~\label{eq:EMP}
\arg\min_{\theta\in\mathcal{C}} \( F(\theta) := \frac{1}{m}\sum_{i=1}^{m} F_i(\theta) \).
\end{equation}
Here, $\mathcal{C}\subset \mathbb{R}^d$ is a
closed convex set, and $F_i(\theta)$ is a local loss function dependent on the local
dataset $\calD_i$ at client $i$ evaluated at the model parameters
$\theta\in\bbR^d$; see Section~\ref{subsec:optimization-results} for more details on the problem setup.
In order to generate a learning model using
\eqref{eq:EMP}, the commonly used mechanism is Stochastic Gradient
Descent (SGD) \cite{bottou2010large}. Federated learning (FL) introduces
several unique challenges to this traditional model that cause tension
with the objective in \eqref{eq:EMP}: {\sf (i)} we need to
provide privacy guarantees on the locally residing data $\calD_i$ at client $i$, as the
data not only needs to be \emph{remain} at the clients but additionally
needs to kept private according to certain
requirements/guarantees; {\sf (ii)} compress (as efficiently as
possible) the communication between clients and the server, since the
clients may connect with low-bandwidth (wireless) links; and {\sf (iii)}
work with a dynamic client population in each round of communication
between the server and the clients. This happens due to scale
(\emph{e.g.,} tens of millions of devices) and only a small fraction of
clients are sampled at each communication round depending on their
availability.

These requirements make the problem challenging, especially when one
wants to give strong privacy guarantees while training models that
give good learning performance. Since we need to give privacy to the
local data residing at the clients, the traditional framework to give
guarantees is through the notion of local differential privacy, where
the server is itself untrusted. The challenge is that the traditional
privacy approach to the learning problem uses local differential
privacy (LDP) \cite{warner1965randomized,evfimievski2004privacy,duchi2013local,beimel2008distributed,kasiviswanathan2011can}, which is known to
give poor learning performance~\cite{duchi2013local,kasiviswanathan2011can,kairouz2016discrete}.

In recent works, a new privacy framework using anonymization has been
proposed in the so-called \emph{shuffling model} \cite{erlingsson2019amplification,ghazi2019power,balle2019improved,ghazi2019scalable,balle2019differentially,ghaziprivate,cheu2019distributed,balle2019privacy,balle2020private}. This
model enables significantly better privacy-utility performance by
amplifying privacy (scaling with number of clients as
$\frac{1}{\sqrt{m}}$ with respect to LDP) through this anonymization,
which effectively presents the central server with responses which are
randomly shuffled with respect to the clients, providing additional
privacy. Another mechanism to amplify privacy is through randomized
sampling \cite{beimel2010bounds,kasiviswanathan2011can,Jonathan2017sampling}. This naturally arises in the considered SGD framework, since clients do mini-batch sampling of local
data and also there is sampling of clients themselves in each iteration,
as in the federated learning framework \cite{konevcny2016federated,yang2019federated,kairouz2019advances}.

In this paper, we enable privacy amplification for the FL problem using
both forms of amplification: shuffling and sampling (data and
clients). Note that privacy amplification by subsampling (both data and
clients) happens automatically\footnote{In this paper, we use an
  abstraction for the federated learning model, where clients are
  sampled randomly. In practice, there are many more complicated
  considerations for sampling, including availability, energy usage,
  time-of-day etc., which we do not model.}, and we quantify that in
this paper, while the secure shuffling (anonymization) is performed
explicitly which adds an additional layer of privacy that allows
transferring the local privacy guarantees to central privacy
guarantees.

Another important aspect is that of requiring communication efficiency
instantiated through compression of the gradients computed by each
active client. There has been a significant recent progress in this
topic (see  \cite{alistarh2017qsgd,pmlr-v97-karimireddy19a,wen2017terngrad,stich2018sparsified,alistarh2018convergence,koloskova2019decentralized,stich2018sparsified,basu2019qsparse,singh2019sparq,singh2020squarm} and references therein). However, there has been less work in combining privacy and compression in the
optimization/learning framework of~\eqref{eq:EMP}, with the notable exception of
\cite{agarwal2018cpsgd}, which we will elaborate on soon. One question that
arises is whether one pays a price to do compression in terms of the
privacy-performance trade-off; a question we address in this paper.

In this paper we (partially) solve the main problem of privately
learning a model with compressed communication, with good learning
performance while giving strong guarantees on privacy. We believe that
this is the first result that analyses the optimization performance
with schemes devised using compressed gradient exchange, mini-batch SGD
while giving data privacy guarantees for clients using a shuffled
framework. Our main contributions are as follows.
\begin{itemize}
\item We prove that one can get communication efficiency ``for free''
  by demonstrating schemes that use $O(\log d)$ bits per client
  (for several cases) to obtain the same privacy-performance operating
  point achieved by full precision gradient exchange.\footnote{Our work focuses on symmetric, private-randomness mechanisms. We do not assume the existence of public randomness in this work as we use the shuffling model.} 
  We do this using the shuffled privacy model and amplification by sampling (client data
  through mini-batch SGD and clients themselves in federated
  sampling). Note that sampling of clients and data points together give non-uniform sampling of data points,
  so we cannot use the existing results on amplification by subsampling. We instead give one privacy proof that combines both sampling and shuffling techniques together and analyze the total privacy gain.
\item At each round of the iterative optimization, one needs to
  privately aggregate the gradients in a communication efficient
  manner. To do this, we develop new private, compressed mean
  estimation techniques in a minimax estimation framework, that are
  (order optimal) under several $\ell_p$ geometries for the
  vectors. We develop both lower bounds and matching schemes for this
  problem. These results may also be of independent interest (see
  Section \ref{sec:PrivMeanEst}).
\item In order to complete the overall optimization-performance
  trade-off with privacy and communication, we use the amplification
  by shuffling framework of~\cite{erlingsson2019amplification,balle2019privacy} adapted to our setup
  where we have mini-batch SGD, compression and client sampling (see Section \ref{sec:OptPerf}).
\end{itemize}
We will put our contributions in context to the existing literature
next.

\subsection{Related Work}

Among the several main challenges in the recently developed FL
framework (see \cite{kairouz2019advances} and references therein), we focus in
this paper on the combination of privacy and communication efficiency, and
examining its impact on model learning.  We briefly review some of the main developments in related papers on these topics below.

\subsubsection{Communication-Privacy Trade-offs}
Distributed mean estimation and its use in training learning models has been studied extensively in the literature (see~\cite{suresh2017distributed,alistarh2017qsgd,gandikota2019vqsgd,mayekar2020limits} and references therein). In~\cite{suresh2017distributed}, the authors have proposed a communication efficient scheme for estimating the mean of set a of vectors distributed over multiple clients. In~\cite{acharya2019hadamard}, Acharya et. al. studied the discrete distribution estimation under LDP. They proposed a randomized mechanism based on Hadamard coding which is optimal for all privacy regime and requires $\calO\left(\log\left(d\right)\right)$ bits per client, where $d$ denotes the support size of the discrete distribution. In~\cite{acharya2019communication}, the authors consider both private and public coin mechanisms, and show that the Hadamard mechanism is near optimal in terms of communication for both distribution and frequency estimation. However, the LDP mechanisms suffer from the utility degradation that motivates other work to find alternative techniques to improve the utility under LDP. One of new developments in privacy is the use of anonymization to amplify the privacy by using secure shuffler.  In~\cite{cheu2019distributed,balle2019privacy,balle2020private}, the authors studied the mean estimation problem under LDP with secure shuffler, where they show that the shuffling provides better utility than the LDP framework without shuffling. 

\subsubsection{Private Optimization}
In~\cite{chaudhuri2011differentially}, Chaudhuri et al.\ studied
\emph{centralized} privacy-preserving machine learning algorithms for
convex optimization problem. The authors
proposed a new idea of perturbing the objective function to preserve
privacy of the training dataset.  In~\cite{bassily2014private},
Bassily et al.\ derived lower bounds on the empirical risk
minimization under \emph{central} differential privacy
constraints. Furthermore, they proposed a differential privacy SGD
algorithm that matches the lower bound for convex functions. In~\cite{abadi2016deep}, the
authors have generalized the private SGD algorithm proposed
in~\cite{bassily2014private} for non-convex optimization framework. In
addition, the authors have proposed a new analysis technique, called
moment accounting, to improve on the strong composition theorems to
compute the central differential privacy guarantee for iterative
algorithms. However, the works
mentioned,~\cite{chaudhuri2011differentially,bassily2014private,abadi2016deep},
assume that there exists a trusted server that collects the clients'
data. This motivates other works to design a distributed SGD
algorithms, where each client perturbs her own data without needing a
trusted server. For this, the natural privacy framework is \emph{local}
differential privacy or LDP (\emph{e.g.,} see
\cite{warner1965randomized,duchi2013local,evfimievski2004privacy,bhowmick2018protection}). However, it is well understood that LDP does not give good performance guarantees as it requires significant
local randomization to give privacy guarantees
\cite{duchi2013local,kasiviswanathan2011can,kairouz2016discrete}. The two most related papers to our work are
~\cite{erlingsson2020encode,agarwal2018cpsgd} which we describe below.

In~\cite{erlingsson2020encode}, the authors have proposed a
distributed local-differential-privacy gradient descent algorithm,
where each client has one sample. In their proposed algorithm, each client
perturbs the gradient of her sample using an LDP mechanism. To
improve upon the LDP performance guarantees, they use the newly
proposed anonymization/shuffling framework
\cite{balle2019privacy}. Therefore in their work, gradients of all clients
are passed through a secure shuffler that eliminates the identities of
the clients to amplify the central privacy guarantee. However, their
proposed algorithm is not communication efficient, where each client has
to send the full-precision gradient without compression. Our work is
different from~\cite{erlingsson2020encode}, as we propose a
communication efficient mechanism for each client that requires $O(\log
d)$ bits per client, which can be significant for large
$d$. Furthermore, our algorithm consider multiple data samples at
client, which is accessed through a mini-batch random sampling at each
iteration of the optimization.  This requires a careful combination of
compression and privacy analysis in order to preserve the variance
reduction of mini-batch as well as privacy.\footnote{The naive method
  of quantizing the aggregated mini-batch gradient will fail to
  preserve the required variance reduction.} In addition we obtain a
gain in privacy by using the fact that (anonymized) clients are sampled
(\emph{i.e.,} not all clients are selected at each iteration) as
motivated by the federated learning framework.

\cite{agarwal2018cpsgd} proposed a communication-efficient algorithm
for learning models with central differential privacy. Let $n$ be the
number of clients per round and $d$ be the dimensionality of the
parameter space. They proposed cp-sgd, a communication efficient
algorithm, where clients need to send $O(\log(1 + \frac{d}{n}
\epsilon^2) + \log \log \log \frac{nd}{\epsilon\delta} )$ bits of
communication \emph{per coordinate} \emph{i.e.,} $O\left
(d\left\{\log(1 + \frac{d}{n} \epsilon^2) + \log \log \log
\frac{nd}{\epsilon\delta} \right \}\right )$ bits per round to
achieve the same local differential privacy guarantees of $\epsilon_0$
as the Gaussian mechanism. Their algorithm is based on a Binomial
noise addition mechanism and secure aggregation. In contrast, we
propose a generic framework to convert any LDP algorithm to a central
differential privacy guarantee and further use recent results on
amplification by shuffling, that also achieves better compression in
terms of number of bits per client. \\

\noindent{\textbf{Organization.} The paper is organized as follows. In
  Section \ref{sec:prelims}, we set up the problem and notation, while
  giving preliminary background results on privacy amplification
  through shuffling and sampling.  We provide the main results of the
  paper in Section \ref{sec:MainRes} and also give some
  interpretations.  In Section \ref{sec:PrivMeanEst} we analyze
  private vector minimax mean estimation for various geometrical
  constraints, applicable to gradient aggregation for optimization;
  providing schemes and impossibility results.  In Section
  \ref{sec:OptPerf} examine the communication-privacy and
  optimization-performance trade-offs of our schemes, putting together
  the results from Section \ref{sec:PrivMeanEst} to give the proof of
  the main theorem~\ref{thm:main-opt-result}. We conclude with a brief discussion
  in Section \ref{sec:Disc}.

\section{Preliminaries}
\label{sec:prelims} 

In this section, we state some preliminary definitions that we use
throughout the paper and also state some results from literature.  We
state the formal definitions of (local) differential privacy (DP) in
Section~\ref{subsec:diff-priv} and strong composition theorem for DP
in Section \ref{subsec:StrComp}.
As mentioned in Section~\ref{sec:intro}, we use subsampling and
shuffling techniques for privacy amplification and we describe them in
Section~\ref{subsec:privacy-amplification}.  Finally, we present one
of our main ingredients in the proposed compressed and private SGD
algorithm, which is a method of private mean estimation using
compressed updates, in Section~\ref{subsec:cp-mean-est}.  We use this
formulation to study the problem in the minimax framework and derive
upper and lower bounds in a variety of settings in Section
\ref{sec:PrivMeanEst}.

\subsection{Differential Privacy}\label{subsec:diff-priv}
In this section, we formally define local differential privacy (LDP) and (central) differential privacy (DP).
First we recall the standard definition of LDP \cite{kasiviswanathan2011can}.
\begin{defn}[Local Differential Privacy - LDP \cite{kasiviswanathan2011can}]\label{defn:LDPdef}
For $\epsilon_0\geq0$ and $b\in\bbN^+:=\{1,2,3,\hdots\}$, a randomized mechanism $\calR:\calX\to\calY$ is said to be $\eps_0$-local differentially private (in short, $\eps_{0}$-LDP), if for every pair of inputs $\bx,\bx'\in\calX$, we have 
\begin{equation}~\label{ldp-def}
\Pr[\calR(\bx)=\by] \leq \exp(\eps)\Pr[\calR(\bx')=\by], \qquad \forall \by\in\calY.
\end{equation}
\end{defn}

In our problem formulation, since each client has a communication budget on what it can send in each SGD iteration while keeping its data private, it would be convenient for us to define two parameter LDP with privacy and communication budget.
\begin{defn}[Local Differential Privacy with Communication Budget - CLDP]\label{defn:local-DP}
For $\epsilon_0\geq0$ and $b\in\bbN^+$, a randomized mechanism $\calR:\calX\to\calY$ is said to be $(\eps_0,b)$-communication-limited-local differentially private (in short, $(\eps_{0},b)$-CLDP), if for every pair of inputs $\bx,\bx'\in\calX$, we have 
\begin{equation}~\label{ldp-comm_def}
\Pr[\calR(\bx)=\by] \leq \exp(\eps)\Pr[\calR(\bx')=\by], \qquad \forall \by\in\calY.
\end{equation}
Furthermore, the output of $\calR$ can be represented using $b$ bits. 
\end{defn}
Here, $\epsilon_{0}$ captures the privacy level, lower the
$\epsilon_0$, higher the privacy.  When we are not concerned about the
communication budget, we succinctly denote the corresponding
$(\eps_{0},\infty)$-CLDP, by its correspondence to the classical LDP as
$\eps_0$-LDP \cite{kasiviswanathan2011can}.

Let $\calD=\lbrace \bx_1,\ldots,\bx_n\rbrace$ denote a dataset comprising $n$ points from $\calX$. We say that two datasets $\calD=\lbrace \bx_1,\ldots,\bx_n\rbrace$ and $\calD^{\prime}=\lbrace \bx_1^{\prime},\ldots,\bx_n^{\prime}\rbrace$ are neighboring if they differ in one data point. In other words, $\calD$ and $\calD^{\prime}$ are neighboring if there exists an index $i\in\left[n\right]$ such that $\bx_i\neq \bx_i^{\prime}$ and $\bx_j=\bx_j^{\prime}$ for all $j\neq i$. 
\begin{defn}[Central Differential Privacy - DP \cite{Calibrating_DP06,dwork2014algorithmic}]\label{defn:central-DP}
For $\epsilon,\delta\geq0$, a randomized mechanism $\calM:\calX^n\to\calY$ is said to be $\left(\epsilon,\delta\right)$-differentially private (in short, $\left(\epsilon,\delta\right)$-DP), if for all neighboring datasets $\calD,\calD^{\prime}\in\calX^{n}$ and every subset $\calE\subseteq \calY$, we have
\begin{equation}~\label{dp_def}
\Pr\left[\calM\left(\calD\right)\in\calE\right]\leq \exp(\epsilon)\Pr\left[\calM\(\calD^{\prime}\)\in\calE\right]+\delta.
\end{equation}
\end{defn}
\begin{remark}
For any $\epsilon_0$-LDP mechanism $\calR:\calX\to\calY$, it is easy to verify that the randomized mechanism $\calM:\calX^n\to\calY$ defined by $\calM\(\bx_1,\hdots,\bx_n\):=\(\calR\(\bx_1\),\ldots,\calR\(\bx_n\)\)$ is $(\epsilon_0,0)$-DP.
\end{remark}
\begin{remark}
Note that in this paper we make a clear distinction between the
notation used for central differential privacy, denoted by
$\left(\epsilon,\delta\right)$-DP (see Definition
\ref{defn:central-DP}), local differential privacy $\eps_{0}$-LDP (see
definition \ref{defn:LDPdef}) and communication limited local
differential privacy, denoted by $(\eps_{0},b)$-CLDP (see Definition
\ref{defn:local-DP}).
\end{remark}
The main objective of this paper is to make SGD differentially private and communication-efficient, suitable for federated learning.
For that we compress and privatize gradients in each SGD iteration.
Since the parameter vectors in any iteration depend on the previous iterations, so do the gradients,
which makes this procedure a sequence of many adaptive DP mechanisms.
We can calculate the final privacy guarantees achieved at the end of this procedure by using composition theorems.

\subsection{Strong Composition \cite{Boosting-DP_DworkRV10}}
\label{subsec:StrComp}
Let $\calM_1\(\calI_1,\calD\),\ldots,\calM_T\(\calI_T,\calD\)$ be a sequence of $T$ adaptive DP mechanisms, where $\calI_i$ denotes the auxiliary input to the $i$th mechanism, which may depend on the previous mechanisms' outputs and the auxiliary inputs $\{(\calI_j,\calM_j(\calI_j,\calD)):j<i\}$.
There are different composition theorems in literature to analyze the privacy guarantees of the composed mechanism $\calM(\calD)= \(\calM_1\(\calI_1,\calD\),\ldots,\calM_T\(\calI_T,\calD\)\)$. \\

Dwork et al.~\cite{Boosting-DP_DworkRV10} provided a strong composition theorem (which is stronger than the basic composition theorem in which the privacy parameters scale linearly with $T$) where the privacy parameter of the composition mechanism scales as $\sqrt{T}$ with some loss in $\delta$. Below, we provide a formal statement of that result from \cite{dwork2014algorithmic}.
\begin{lemma}[Strong Composition~{\cite[Theorem~$3.20$]{dwork2014algorithmic}}]\label{lemm:strong-composition}
Let $\calM_1,\hdots,\calM_T$ be $T$ adaptive $(\epsbar,\delbar)$-DP mechanisms, where $\overline{\epsilon},\overline{\delta}\geq0$.
Then, for any $\delta'>0$, the composed mechanism $\calM=(\calM_1,\hdots,\calM_T)$ is $(\epsilon,\delta)$-DP, where
\[\epsilon=\sqrt{2T\log\left(1/\delta^{\prime}\right)}\overline{\epsilon}+T\overline{\epsilon}\left(e^{\overline{\epsilon}}-1\right),\quad \delta=T\overline{\delta}+\delta^{\prime}.\]
In particular, when $\overline{\epsilon}=\mathcal{O}\left(\sqrt{\frac{\log\left(1/\delta^{\prime}\right)}{T}}\right)$, we have $\epsilon=\mathcal{O}\left(\overline{\epsilon}\sqrt{T\log\left(1/\delta^{\prime}\right)}\right)$.
\end{lemma}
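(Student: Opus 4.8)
The plan is to bound the \emph{privacy loss random variable} of the composed mechanism and then invoke the standard equivalence between a high-probability bound on the privacy loss and an $(\epsilon,\delta)$-DP guarantee. Fix neighboring datasets $\calD,\calD'$ and, for an output transcript $y=(y_1,\dots,y_T)$, write the privacy loss as
\[
L(y)=\sum_{i=1}^T L_i(y_i), \qquad L_i(y_i):=\log\frac{\Pr[\calM_i(\calI_i,\calD)=y_i\mid y_1,\dots,y_{i-1}]}{\Pr[\calM_i(\calI_i,\calD')=y_i\mid y_1,\dots,y_{i-1}]},
\]
where each $\calI_i$ is the (adaptively chosen) auxiliary input. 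Since the composition is adaptive, $L$ decomposes as a sum along the natural filtration $\mathcal{F}_{i-1}=\sigma(y_1,\dots,y_{i-1})$, so it suffices to prove that, for $y\sim\calM(\calD)$, one has $\Pr[L(y)>\epsilon]\le\delta$; a symmetric argument (or the standard reduction from the density-ratio bound to the set-probability bound of Definition~\ref{defn:central-DP}) then upgrades this to $(\epsilon,\delta)$-DP.

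First I would handle the pure case $\delbar=0$. Here $(\epsbar,0)$-DP of each $\calM_i$ gives the deterministic bound $|L_i(y_i)|\le\epsbar$ for every $y_i$, and the elementary fact that the KL divergence between two distributions whose density ratio lies in $[e^{-\epsbar},e^{\epsbar}]$ is at most $\epsbar(e^{\epsbar}-1)$ gives $0\le\bbE[L_i\mid\mathcal{F}_{i-1}]\le\epsbar(e^{\epsbar}-1)$. I would then apply a martingale concentration argument to $\sum_i L_i$: the centered increments $Z_i:=L_i-\bbE[L_i\mid\mathcal{F}_{i-1}]$ are conditionally mean-zero and, because $L_i\in[-\epsbar,\epsbar]$, lie in an interval of length $2\epsbar$, so Hoeffding's lemma bounds the conditional moment generating function of $Z_i$ by $\exp(s^2\epsbar^2/2)$. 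A Chernoff bound on $\sum_i Z_i$ gives $\Pr\bigl[\sum_i L_i> T\epsbar(e^{\epsbar}-1)+z\bigr]\le\exp(-z^2/2T\epsbar^2)$, and taking $z=\epsbar\sqrt{2T\log(1/\delta')}$ makes the right-hand side equal to $\delta'$, yielding exactly $\epsilon=\epsbar\sqrt{2T\log(1/\delta')}+T\epsbar(e^{\epsbar}-1)$ and hence $(\epsilon,\delta')$-DP.

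To lift this to $\delbar>0$, I would use the ``good event'' description of approximate DP: for each $i$ (and each realization of $\mathcal{F}_{i-1}$) there exist events of probability at least $1-\delbar$ under both $\calM_i(\calI_i,\calD)$ and $\calM_i(\calI_i,\calD')$ on which $\calM_i$ behaves like an $\epsbar$-pure-DP mechanism, in particular on which $|L_i|\le\epsbar$. A union bound over the $T$ rounds shows that the transcript leaves the intersection of these good events with probability at most $T\delbar$; conditioned on the good event, the pure-case analysis applies with the same $\epsilon$, and combining the $T\delbar$ slack with the $\delta'$ from the concentration step gives $\delta=T\delbar+\delta'$. The step I expect to be the main obstacle is making this decomposition precise and fully compatible with adaptivity—ensuring the per-round ``bad'' events can be defined measurably with respect to $\mathcal{F}_{i-1}$, that the union bound over rounds is legitimate, and that conditioning on the good event does not destroy the conditional-mean and boundedness properties that drive the Azuma--Hoeffding estimate; the concentration computation itself is routine once $|L_i|\le\epsbar$ and $\bbE[L_i\mid\mathcal{F}_{i-1}]\le\epsbar(e^{\epsbar}-1)$ are established.

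Finally, the ``in particular'' claim follows directly: when $\epsbar=\calO(\sqrt{\log(1/\delta')/T})$ the second term obeys $T\epsbar(e^{\epsbar}-1)=\calO(T\epsbar^2)=\calO(\epsbar\sqrt{T\log(1/\delta')})$, so it is dominated by the first term and $\epsilon=\calO(\epsbar\sqrt{T\log(1/\delta')})$.
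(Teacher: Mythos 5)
The paper does not prove this lemma; it imports it verbatim from \cite[Theorem~3.20]{dwork2014algorithmic}. Your proposal is a correct reconstruction of that standard argument -- privacy-loss decomposition along the filtration, the KL bound $\bbE[L_i\mid\mathcal{F}_{i-1}]\leq\epsbar(e^{\epsbar}-1)$, Azuma--Hoeffding on the centered increments, and the good-event reduction to handle $\delbar>0$ -- so there is nothing to add beyond noting that the adaptivity-compatible good-event step you flag is exactly what Dwork--Roth's Lemma~3.17 supplies.
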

Note that training large-scale machine learning models (e.g., in deep learning) typically requires running SGD for millions of iterations, as the dimension of the model parameter is quite large. We can make it differentially private by adding noise to the gradients in each iteration, and appeal to the strong composition theorem to bound the privacy loss of the entire process (which in turn dictates the amount of noise to be added in each iteration).

\subsection{Privacy Amplification}\label{subsec:privacy-amplification}
In this section, we describe the techniques that can be used for privacy amplification.
The first one amplifies privacy by subsampling the data (to compute stochastic gradients) as well as the clients (as in FL), and the other one amplifies privacy by shuffling. \\

\subsubsection{Privacy Amplification by Subsampling}\label{subsubsec:amplification_sampling}
Suppose we have a dataset $\calD'=\lbrace U_1,\hdots, U_{r_1}\rbrace\in\calU^{r_1}$ consisting of $r_1$ elements from a universe $\calU$. A subsampling procedure takes a dataset $\calD'\in\calU^{r_1}$ and subsamples a subset from it as formally defined below.

\begin{defn}[Subsampling] 
The subsampling operation $\samp_{r_1,r_2}:\calU^{r_1}\to\calU^{r_2}$ takes a dataset $\calD'\in\calU^{r_1}$ as input and selects uniformly at random a subset $\calD''$ of $r_2\leq r_1$ elements from $\calD'$. 
Note that each element of $\calD'$ appears in $\calD''$ with probability $q=\frac{r_2}{r_1}$.
\end{defn}
The following result states that the above subsampling procedure amplifies the privacy guarantees of a DP mechanism.
\begin{lemma}[Amplification by Subsampling~\cite{kasiviswanathan2011can}]\label{lemma:amplification_sampling}
Let $\calM:\calU^{r_2}\to\calV$ be an $(\eps,\delta)$-DP mechanism. Then, the mechanism $\calM':\calU^{r_1}\to\calV$ defined by $\calM'=\calM\circ \samp_{r_1,r_2}$ is $(\eps',\delta')$-DP, where $\eps'=\log(1+q(e^{\eps}-1))$ and $\delta'=q\delta$ with $q=\frac{r_2}{r_1}$.
In particular, when $\eps<1$, $\calM'$ is $(\calO(q\eps),q\delta)$-DP. 
\end{lemma}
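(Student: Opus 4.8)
The plan is to fix neighboring datasets $\calD,\calD'\in\calU^{r_1}$ that differ at a single index $i$, fix a measurable set $\calE\subseteq\calV$, and prove $\Pr[\calM'(\calD)\in\calE]\le e^{\eps'}\Pr[\calM'(\calD')\in\calE]+\delta'$. The first step is to condition on the random index set $S$ produced by $\samp_{r_1,r_2}$, coupling the two executions so that the \emph{same} $S$ is used (legitimate because the law of $S$ does not depend on the input dataset), and then to split on the events $i\notin S$ (probability $1-q$) and $i\in S$ (probability $q$), with $q=r_2/r_1$. Throughout, $\calD_T$ denotes the rows of $\calD$ indexed by a size-$r_2$ set $T$.

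On $\{i\notin S\}$ the sampled sub-datasets $\calD_S$ and $\calD'_S$ are identical — they only contain the $r_1-1$ common points — so $\calM$ induces the same conditional law in both executions; denote the common conditional probability of $\calE$ by $\alpha$. On $\{i\in S\}$, I would use that for every size-$r_2$ set $T\ni i$ the sub-datasets $\calD_T$ and $\calD'_T$ are neighboring, apply $(\eps,\delta)$-DP of $\calM$, and average over the conditional law of $S$ to get $P_1\le e^{\eps}Q_1+\delta$, where $P_1,Q_1$ are the conditional probabilities of $\calE$ under $\calD,\calD'$ given $i\in S$. The extra ingredient I expect to need is a second, ``re-coupling'' bound: for $T\ni i$ and any $j\notin T$, the set $T'=(T\setminus\{i\})\cup\{j\}$ has size $r_2$, avoids $i$, and $\calD_T,\calD_{T'}$ are neighboring, so drawing $j$ uniformly from $[r_1]\setminus T$ — which makes $T'$ uniform over size-$r_2$ subsets of the common points — gives, by the same DP argument, $P_1\le e^{\eps}\alpha+\delta$ (the case $r_2=r_1$ being trivial, since then $\calM'=\calM$).

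Finally I would combine the pieces: since $\Pr[\calM'(\calD)\in\calE]=(1-q)\alpha+qP_1$ and $\Pr[\calM'(\calD')\in\calE]=(1-q)\alpha+qQ_1$, I bound $P_1$ by the convex combination $P_1\le e^{\eps}\big(\lambda Q_1+(1-\lambda)\alpha\big)+\delta$ with the choice $\lambda=e^{\eps'-\eps}$ for $e^{\eps'}=1+q(e^{\eps}-1)$ (note $e^{\eps'}\le e^{\eps}$ since $q\le 1$, so $\lambda\in[0,1]$); substituting and matching the coefficients of $\alpha$ and of $Q_1$ on the two sides leaves exactly the slack $q\delta=\delta'$. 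For the ``in particular'' clause, $\log(1+x)\le x$ together with $e^{\eps}-1\le(e-1)\eps\le 2\eps$ for $\eps\in[0,1)$ gives $\eps'\le q(e^{\eps}-1)=\calO(q\eps)$, while $\delta'=q\delta$ is immediate.

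I expect the main obstacle to be exactly this last algebraic combination: the naive coupling by itself only yields the bound $(1-q)\alpha+qe^{\eps}Q_1+q\delta$, which does \emph{not} simplify to $e^{\eps'}\big((1-q)\alpha+qQ_1\big)+\delta'$, so the auxiliary inequality $P_1\le e^{\eps}\alpha+\delta$ and the precise weight $\lambda=e^{\eps'-\eps}$ are genuinely necessary to close the gap. The only other supporting computation is the short counting check that the map $(T,j)\mapsto(T\setminus\{i\})\cup\{j\}$, with $j$ uniform on $[r_1]\setminus T$, pushes the conditional-uniform law on $\{T:i\in T\}$ forward to the uniform law on size-$r_2$ subsets of $[r_1]\setminus\{i\}$.
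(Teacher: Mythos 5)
Your proof is correct. The paper itself cites this lemma from prior work rather than proving it, but its Appendix~\ref{app:supp_composition_priv} proves a generalization (Lemma~\ref{lemm:priv_opt}) by essentially the same route you take: condition on the sampled index set, split on whether the differing index $i$ is selected, use a bipartite double-counting (your ``re-coupling'' map $T\mapsto(T\setminus\{i\})\cup\{j\}$ is exactly the paper's balanced bipartite graph between $\{T:i\in T\}$ and $\{T:i\notin T\}$) to get $P_1\le e^{\eps}\alpha+\delta$ alongside $P_1\le e^{\eps}Q_1+\delta$, and then close via a convex combination. The only cosmetic difference is at the last step: you solve for the explicit weight $\lambda=e^{\eps'-\eps}$, whereas the paper writes $P_1\le (e^{\eps}-1)\min\{Q_1,\alpha\}+\min\{Q_1,\alpha\}+\delta$, bounds the first $\min$ by the convex combination $qQ_1+(1-q)\alpha$ and the second by $Q_1$, and the factor $1+q(e^{\eps}-1)=e^{\eps'}$ drops out directly; the two closings are algebraically identical.
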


Note that in the case of subsampling the data for computing stochastic
gradients, where client $i$ selects a mini-batch of size $s$ from its
local dataset $\calD_i$ that has $r$ data points, we take
$\calD'=\calD_i$, $r_1=r$, and $r_2=s$.  In the case of subsampling
the clients, $k$ clients are randomly selected from the $m$ clients,
we take $\calD'=\{1,2,\hdots,m\}$, $r_1=m$, and $r_2=k$. An important
point is that such a sub-sampling is not uniform overall 
(i.e., this does not imply that any subset of $ks$ data points is chosen with equal probability) 
and we cannot directly apply the above result. We need to revisit the proof of Lemma~\ref{lemma:amplification_sampling}
to adapt it to our case, and we do it in Lemma~\ref{lemm:priv_opt}, which is proved in
Appendix \ref{app:supp_composition_priv}. In fact, the proof of Lemma~\ref{lemm:priv_opt} is more general than just adapting the amplification by subsampling to our setting, it also incorporates the amplification by shuffling, which is crucial for obtaining strong privacy guarantees. We describe it next.

\subsubsection{Privacy Amplification by Shuffling}\label{subsubsec:amplification_shuffling}
Consider a set of $m$ clients, where client $i\in[m]$ has a data $\bx_i\in\calX$. 
Let $\calR:\calX\to\calY$ be an $\eps_0$-LDP mechanism. 
The $i$-th client applies $\calR$ on her data $\bx_i$ to get a private message $\by_i=\calR(\bx_i)$. There is a secure shuffler $\calH_m:\calY^{m}\to \calY^{m}$ that receives the set of $m$ messages $(\by_1,\ldots,\by_m)$ and generates the same set of messages in a uniformly random order. 

The following lemma states that the shuffling amplifies the privacy of an LDP mechanism by a factor of $\frac{1}{\sqrt{m}}$.

\begin{lemma}[Amplification by Shuffling]\label{lemma:amplification_shuffling}
Let $\mathcal{R}$ be an $\epsilon_0$-LDP mechanism. Then, the mechanism $\calM(\bx_1,\hdots,\bx_m):=\calH_m\circ(\calR(\bx_1),\ldots,\calR(\bx_m))$ satisfies $(\eps,\delta)$-differential privacy, where
\begin{enumerate}
\item {\em \cite[Corollary~$5.3.1$]{balle2019privacy}.} 
If $\eps_0\leq \frac{\log\(m/\log\left(1/\delta\right)\)}{2}$, then for any $\delta>0$, we have \\ $\eps=\calO\left(\min\{\eps_0,1\}e^{\eps_0}\sqrt{\frac{\log\left(1/\delta\right)}{m}}\right)$. 
\item {\em \cite[Corollary~$9$]{erlingsson2019amplification}.} 
If $\eps_0<\frac{1}{2}$, then for any $\delta\in(0,\frac{1}{100})$ and $m\geq 1000$, we have $\eps=12\eps_0\sqrt{\frac{\log\left(1/\delta\right)}{m}}$.
\end{enumerate}
\end{lemma}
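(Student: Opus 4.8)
Since the statement is a verbatim specialization of known results, the most direct route is to invoke them: item~1 is \cite[Corollary~5.3.1]{balle2019privacy} and item~2 is \cite[Corollary~9]{erlingsson2019amplification}, each applied with $m$ clients running the $\epsilon_0$-LDP randomizer $\calR$ and then the shuffler $\calH_m$. If instead a self-contained argument were wanted, I would proceed as follows.

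First I would reduce to a pair of neighboring datasets differing in a single client, say $\calD=(\bx_1,\bx_2,\ldots,\bx_m)$ and $\calD'=(\bx_1',\bx_2,\ldots,\bx_m)$, and note that the output $\calH_m\circ(\calR(\bx_1),\ldots,\calR(\bx_m))$ is, after discarding the uniformly random ordering, exactly the multiset $\{\!\{\calR(\bx_i)\}\!\}_{i\in[m]}$. The core structural tool is the \emph{blanket decomposition} of an $\epsilon_0$-LDP mechanism: for every input $\bx$ the output law can be written as a mixture $\calR(\bx)=(1-\gamma)\,\omega_{\bx}+\gamma\,\mu$, where $\mu$ is a fixed ``blanket'' distribution independent of $\bx$, $\omega_{\bx}$ is an input-dependent residual, and $\gamma=\Omega(e^{-\epsilon_0})$ is forced by the likelihood-ratio bound of Definition~\ref{defn:LDPdef}.

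Next, for each of the $m-1$ unchanged clients I would toss an independent $\mathrm{Bernoulli}(\gamma)$ coin recording whether its message is drawn from $\mu$; the count $C$ of such clients is $\mathrm{Bin}(m-1,\gamma)$ and concentrates around $\gamma(m-1)=\Theta(e^{-\epsilon_0}m)$. Conditioned on the coins and on all non-blanket messages, the only difference between the two neighboring executions is the single message $\calR(\bx_1)$ vs.\ $\calR(\bx_1')$, sitting in a uniformly random slot among the $C$ blanket messages. Distinguishing the two scenarios is thus a ``hide one special draw among $C$ i.i.d.\ draws from $\mu$'' problem, whose privacy cost scales like $1/\sqrt{C}$; combined with $C=\Theta(e^{-\epsilon_0}m)$, a correction for the rare event that $C$ is atypically small, and the standard $\sqrt{\log(1/\delta)}$ factor incurred when converting a R\'enyi/moment bound into an $(\epsilon,\delta)$ guarantee, this produces a bound of the stated form --- the $\min\{\epsilon_0,1\}$ factor of item~1 coming from the mutual closeness of the residuals $\omega_{\bx_1},\omega_{\bx_1'}$ when $\epsilon_0$ is small, and the $e^{\epsilon_0}$ factor from $\gamma=\Omega(e^{-\epsilon_0})$; sharper constant-tracking in the regime $\epsilon_0<\tfrac{1}{2}$ gives the explicit $12\epsilon_0\sqrt{\log(1/\delta)/m}$ of item~2.

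I expect the main obstacle to be this last conversion step: turning the conditional indistinguishability of the two shuffled mixtures into a clean $(\epsilon,\delta)$-DP statement with the precise dependence on $\delta$ and $m$, which is exactly where the ``amplification by shuffling'' analyses of \cite{balle2019privacy,erlingsson2019amplification} do their careful work --- a moments/R\'enyi computation in one case, a hypothesis-testing/coupling argument in the other --- and which also requires controlling the contribution of the low-probability event that the blanket count $C$ falls well below its mean.
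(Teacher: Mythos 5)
Your primary approach---simply invoking \cite[Corollary~5.3.1]{balle2019privacy} and \cite[Corollary~9]{erlingsson2019amplification}---is exactly what the paper does: Lemma~\ref{lemma:amplification_shuffling} is stated as a citation of those results and no proof is given. Your supplementary sketch via the blanket decomposition is a reasonable summary of the underlying argument in \cite{balle2019privacy}, but it is not needed to match the paper's treatment.
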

In our proposed algorithm, only $k\leq m$ clients send messages and each client sends a mini-batch of $s$ gradients.
So, in total, shuffler applies the shuffling operation on $ks$ gradients.
In our algorithm, though sampling and shuffling are applied one after another (first $k$ clients are sampled, then each client samples $s$ data points, and then shuffling of these $ks$ data points is performed), we analyze the privacy amplification we get using both of these techniques by analyzing them together; see Lemma \ref{lemm:priv_opt} proved in Appendix \ref{app:supp_composition_priv}.

\subsection{Compressed and Private Mean Estimation via Minimax Risk}\label{subsec:cp-mean-est}
Recall that in each SGD iteration, server sends the current parameter vector to all clients, upon receiving which they compute stochastic gradients from their local datasets and send them to the server, who then computes the average/mean of received gradients and updates the parameter vector. Note that these gradients (over the entire execution of algorithm) may also leak information about the datasets. As mentioned in Section~\ref{sec:intro}, we also compress the gradients to mitigate the communication bottleneck.

In this section, we formulate the generic mimimax estimation framework
for mean estimation of a given set of $n$ vectors that preserves
privacy and is also communication-efficient. We then apply that method
at the server in each SGD iteration for aggregating the gradients. We
derive upper and lower bounds for various $\ell_p$ geometries for
$p\geq1$ including the $\ell_{\infty}$-norm.  Let us setup the
problem.  For any $p\geq1$ and $d\in\mathbb{N}$, let
$\calB_p^{d}\left(a\right)=\{\bx\in\bbR^d : \|\bx\|_p\leq a\}$ denote
the $p$-norm ball with radius $a$ centered at the origin in
$\bbR^{d}$,\footnote{Assuming that the ball is centered at origin is
  without loss of generaility; otherwise, we can translate the ball to
  origin and work with that.} where
$\|\bx\|_{p}=\left(\sum_{j=1}^{d}|\bx_{j}|^{p}\right)^{1/p}$.  Each
client $i\in[n]$ has an input vector $\bx_i\in\mathcal{B}_p^{d}(a)$ and
the server wants to estimate the mean $\barx :=
\frac{1}{n}\sum_{i=1}^{n}\bx_i$. We have two constraints: {\sf (i)}
each client has a communication budget of $b$ bits to transmit the
information about its input vector to the server, and {\sf (ii)} each
client wants to keep its input vector private from the server.  We
develop {\em private-quantization} mechanisms to simultaneously
address these constraints.  Specifically, we design mechanisms
$\calM_i:\calB_p^d(a)\to\{0,1\}^d$ for $i\in[n]$ that are quantized in
the sense that they produce a $b$-bit output and are also locally
differentially private. In other words, $\calM_i$ is $(\eps_0,b)$-LDP
for some $\eps_0\geq0$ (see Definition~\ref{defn:local-DP}).

The procedure goes as follows. client $i\in[n]$ applies a private-quantization mechanism $\calM_i$ on her input $\bx_i$ and obtains a private output $\by_i=\calM_i(\bx_i)$ and sends it to the server. Upon receiving $\by^n=[\by_1,\hdots,\by_n]$, server applies a decoding function to estimate the mean vector $\barx=\frac{1}{n}\sum_{i=1}^n\bx_i$.
Our objective is to design private-quantization mechanisms $\calM_i:\calB_p^d(a)\to\{0,1\}^d$ for all $i\in[n]$ and also a (stochastic) decoding function $\hatx: \(\{0,1\}^{b}\)^{n}\to\calB_p^{d}$ that minimizes the worst-case expected error $\sup_{\{\bx_i\}\in\calB_p^d}\bbE\|\barx-\hatx(\by^n)\|^2$. In other words, we are interested in characterizing the following quantity.
\begin{equation}~\label{eqn1_1}
\begin{aligned}
r_{\eps,b,n}^{p,d}(a)&=\inf_{\{\calM_i\in\calQ_{(\eps,b)}\}} \inf_{\hatx}\sup_{\{\bx_i\}\in\calB_p^{d}(a)}\bbE\left\|\barx-\hatx(\by^{n})\right\|_{2}^{2},
\end{aligned}
\end{equation} 
where $\calQ_{(\eps,b)}$ is the set of all $(\eps, b)$-LDP mechanisms, and
the expectation is taken over the randomness of $\{\calM_i:i\in[n]\}$ and the estimator $\hatx$. 
Note that in~\eqref{eqn1_1} we do not assume any probabilistic assumptions on the vectors $\bx_1,\hdots,\bx_n$. \\

Now we extend the formulation in \eqref{eqn1_1} to a probabilistic model. 
Let $\calP_p^d(a)$ denote the set of all probability density functions on $\calB^{d}_p(a)$. 
For every distribution $\bq\in\calP_p^d(a)$, let $\bmu_{\bq}$ denote its mean.
Since the support of each distribution $\bq\in\calP_p^d$ is $\calB_{p}^{d}(a)$ and $\ell_p$ is a norm, we have that $\bmu_{\bq}\in\calB_{p}^{d}(a)$. For a given unknown distribution $\bq\in\calP_p^d(a)$, client $i\in[n]$ observes $\bx_i$, where $\bx_1,\ldots,\bx_n$ are i.i.d.~according to $\bq$, and the goal for the server is to estimate $\bmu_{\bq}$, while satisfying the same two constraints as above, i.e., only $b$ bits of communication is allowed from any client to the server while preserving the privacy of clients' inputs.
Analogous to \eqref{eqn1_1}, we are interested in characterizing the following quantity.
\begin{equation}~\label{eqn1_2}
\begin{aligned}
R_{\eps,b,n}^{p,d}(a)&=\inf_{\{\calM_i\in\calQ_{(\eps,b)}\}} \inf_{\hatx}\sup_{\bq\in\calP_p^d(a)}\bbE\left\|\bmu_{\bq}-\hatx(\by^{n})\right\|_{2}^{2},
\end{aligned}
\end{equation}  
where the expectation is taken over the randomness of the output $\by^{n}$ and the estimator $\hatx$.

In this paper, we design private-quantization mechanisms $\{\calM_1,\hdots,\calM_n\}$ such that they are symmetric (i.e., $\calM_i$'s are same for all $i\in[n]$) and any client uses only private source of randomness that is not accessible by any other party in the system.

\section{Main Results}\label{sec:MainRes}

This section is divided into two parts.  In
Section~\ref{subsec:optimization-results}, we setup the problem,
describe our algorithm, state our main results for optimization,
including the results on convergence, privacy, and communication bits
used. We also discuss their implications.  One of the main
ingredients in obtaining these results is the compressed \& private
mean estimation, which we study in a variety of settings; the
corresponding results are presented in
Section~\ref{subsec:mean-est-results}. A summary of the notation used throughout the paper is given in Table~\ref{Tab_summary}.

\begin{table}[t!]
\centering
\begin{tabular}{ |c  c |   }
\hline
  Symbol & Description \\ 
 \hline\hline
 $[n]$ & $=\{1,2,\hdots,n\}$, for any $n\in\mathbb{N}$ \\
 $m$ & Total number of clients in the system \\ 
 $k$ & ($\leq m$) Number of clients chosen per iteration \\ 
 $r$ & Total number of samples per client\\
 $s$ & ($\leq r$) Number of samples chosen per client per iteration\\
 $n$ & ($=mr$) Total number of samples in the dataset \\ 
 $q_1$ & ($=\frac{k}{m}$) Can be seen as probability of choosing a client in any iteration\\  
 $q_2$ & ($=\frac{s}{r}$) Can be seen as probability of picking a sample (from a chosen client) in any iteration\\  
 $q$ & ($=q_1q_2=\frac{ks}{mr}$) Can be seen as probability of choosing a sample in any iteration\\  
 $\calD_i$ & Local dataset of client $i$ for $i\in\left[m\right]$ \\ 
 $\calD$& ($\bigcup_{i=1}^{m}\calD_i$) The entire dataset\\
 $\epsilon_0$ & Local differential privacy parameter \\ 
 $\epsilon $ & Central differential privacy parameter \\ 
 $\theta$& ($\in\mathbb{R}^{d}$) Model parameter vector \\ 
 $\calC$ &($\subset\mathbb{R}^{d}$) convex set of interest \\
 $D$ & ($=\|\calC\|_2$) Diameter of the set $\calC$ \\ 
 $L$& Lipschitz continuous parameter\\
 $\calB_{p}^{d}\left(a\right)$ & $\ell_p$ norm ball of radius $a$  \\
 \hline  
\end{tabular}
\caption{Notation used throughout the paper}\label{Tab_summary}
\end{table}

\subsection{Optimization}\label{subsec:optimization-results}
In the subsection, we present a compressed and differentially-private stochastic gradient descent algorithm for the federated learning problem and state our main results about its privacy, communication, and convergence.
The problem that we study is as follows. There are $m$ clients, each client $i\in[m]$ has a local dataset $\calD_i=\{d_{i1},\hdots,d_{ir}\}\in\frS^r$ consisting of $r$ data points. Let $F_i(\theta)$ denotes the local loss function induced by $\calD_i$ at client $i$ evaluated at the model parameter vector $\theta\in\bbR^d$, where $F_i(\theta)=\frac{1}{r}\sum_{j=1}^r f(\theta;d_{ij})$, where $f(\theta; \cdot):\calC\to\bbR$ is a convex function. The goal of the server is to find an optimal model parameter vector $\theta^*\in\calC$ that minimizes $\min_{\theta\in\calC}\(F(\theta)=\frac{1}{m}\sum_{i=1}^mF_i(\theta)\)$; also see \eqref{eq:EMP}, while satisfying the privacy constraint of a single data point at any client, as formalized in Section~\ref{sec:prelims}, and also the communication constraints.

In our proposed compressed and differentially-private SGD algorithm, 
at each step, we choose at random a set $\mathcal{U}_t$ of $k\leq m$ clients out of $m$ clients. Each client $i\in\mathcal{U}_t$ computes the gradient $\nabla_{\theta_t} f\left(\theta_{t};d_{ij}\right)$ for a random subset $\mathcal{S}_{it}$ of $s\leq r$ samples. The $i$'th client clips the $\ell_p$-norm of the gradient $\nabla_{\theta_t} f\left(\theta_t;d_{ij}\right)$ for each $j\in\mathcal{S}_{it}$ and applies the LDP-compression mechanism $\mathcal{R}_p$ (with the privacy parameter $\epsilon_0$) to the clipped gradients. After that, each client $i$ sends the set of $s$ LDP-compressed gradients $\lbrace \mathcal{R}_p\left(\mathbf{g}_{t}\left(d_{ij}\right)\right)\rbrace_{j\in\mathcal{S}_{it}}$ in a communication-efficient manner to the secure shuffler. The shuffler randomly shuffles (i.e., outputs a random permutation of) all the received $ks$ gradients and sends them to the server. Finally, the server takes the average of the received gradients and updates the parameter vector. 
We describe this procedure in Algorithm~\ref{algo:optimization-algo}. 
Let $\ell_g$ denote the dual norm of $\ell_p$ norm, where $\frac{1}{p}+\frac{1}{g}=1$ and $p,g\geq1$. Thus, when the loss function $f\left(\theta,d_{ij}\right)$ is convex and $L$-Lipschitz continuous with respect to $\ell_g$-norm, then the gradient $\nabla_{\theta} f\left(\theta;.\right)$ has a bounded $\ell_p$ norm~\cite[Lemma~$2.6$]{shalev2012online}. In this case, we do not need the clipping step.

In the next theorems, we state the privacy guarantees, the communication cost per client, and the privacy-convergence trade-offs for the CLDP-SGD Algorithm. Let $n=mr$ denote the total number of data points in the dataset $\mathcal{D}$. Observe that the probability that an arbitrary data point $d_{ij}\in\mathcal{D}$ is chosen at time $t\in\left[T\right]$ is given by $q=\frac{ks}{mr}$.

\begin{algorithm}[t]
\caption{$\mathcal{A}_{\text{cldp}}$: CLDP-SGD}\label{algo:optimization-algo}
\begin{algorithmic}[1]
\State \textbf{Inputs:} Datasets $\mathcal{D}=\bigcup_{i\in\left[m\right]}\mathcal{D}_i$, $\mathcal{D}_i=\lbrace d_{i1},\ldots,d_{ir}\rbrace$, loss function $F\left(\theta\right)=\frac{1}{m r}\sum_{i=1}^{m}\sum_{j=1}^{r} f\left(\theta;d_{ij}\right)$, LDP privacy parameter $\epsilon_0$, gradient norm bound $C$, and learning rate $\eta_t$.
\State \textbf{Initialize:} $\theta_0\in\mathcal{C}$ 
\For {$t\in\left[T\right]$}
\State \textbf{Sampling of clients:} A random  set $\mathcal{U}_t$ of $k$ clients is chosen.
\For  {clients $i\in\mathcal{U}_t$}
\State \textbf{Sampling of data:} Client $i$ chooses uniformly at random a set $\mathcal{S}_{it}$ of $s$ samples.
\For {Samples $j\in\mathcal{S}_{it}$}
\State \textbf{Compute gradient:} $\mathbf{g}_t\left(d_{ij}\right)\gets \nabla_{\theta_t}f\left(\theta_t;d_{ij}\right)$
\State \textbf{Clip gradient:} $\tilde{\mathbf{g}}_t\left(d_{ij}\right)\gets \mathbf{g}_t\left(d_{ij}\right)/\max\left\{1,\frac{\|\mathbf{g}_t\left(d_{ij}\right)\|_p}{C}\right\}$\footnotemark
\State \textbf{LDP-compressed gradient:} $\mathbf{q}_t\left(d_{ij}\right)\gets \mathcal{R}_p\left(\tilde{\mathbf{g}}_t\left(d_{ij}\right)\right)$
\EndFor
\State Client $i$ sends the set of private-compressed gradients $\lbrace \mathbf{q}_{t}\left(d_{ij}\right):j\in\mathcal{S}_{it}\rbrace$ to the shuffler.
\EndFor
\State \textbf{Aggregate:} $\overline{\mathbf{g}}_t\gets \frac{1}{ks}\sum_{i\in\mathcal{U}}\sum_{j\in\mathcal{S}_{it}}\bq_t\left(d_{ij}\right)$
\State \textbf{Gradient Descent} $\theta_{t+1}\gets \prod_{\mathcal{C}}\left( \theta_t -\eta_t \overline{\mathbf{g}}_t\right)$ 
\EndFor
\State \textbf{Output:} The model $\theta_{T}$ and the privacy parameters $\epsilon$, $\delta$
\end{algorithmic}
\end{algorithm}
\footnotetext{Note that gradient clipping may not preserve unbiasedness of the stochastic gradients. However for the case when the loss function $f$ is $L$-Lipschitz, this is not necessary for the following reason.
If the loss function $f$ is $L$-Lipschitz (with respect to the model parameters) in the dual norm $\ell_g$, where $\frac{1}{p}+\frac{1}{g}=1, p,g\geq1$, then  the norm of the gradients (with respect to some $\ell_p$-norm, for $p\geq1$) is bounded, and hence we do not need to clip it.}

\begin{theorem}\label{thm:main-opt-result} 
Let the set $\mathcal{C}$ be convex with diameter $D,$\footnote{Diameter of a bounded set $\calC\subseteq\bbR^d$ is defined as $\sup_{\bx,\by\in\calC}\|\bx-\by\|$.} and the function $f\left(\theta;.\right):\mathcal{C}\to \mathbb{R}$ be convex and $L$-Lipschitz continuous with respect to the $\ell_g$-norm, which is the dual of the $\ell_p$-norm.\footnote{For any data point $d\in\frS$, the function $f:\calC\to\bbR$ is $L$-Lipschitz continuous w.r.t.\ $\ell_g$-norm if for every $\theta_1,\theta_2\in\calC$, we have $|f(\theta_1;d)-f(\theta_2;d)|\leq L\|\theta_1-\theta_2\|_g$.} Let $\theta^{*}=\arg\min_{\theta\in\mathcal{C}} F\left(\theta\right)$ denote the minimizer of the problem~\eqref{eq:EMP}. For $s=1$ and $q=\frac{k}{mr}$, if we run Algorithm $\calA_{cldp}$ with $\eps_0=\calO\(\sqrt{\frac{n\log(2/\delta)}{qT\log(2qT/\delta)}}\)$, then we have
\begin{enumerate}
\item \textbf{Privacy:} $\calA_{cldp}$ is $\left(\epsilon,\delta\right)$-DP, where $\delta>0$ is arbitrary, and 
\begin{equation}\label{final-epsilon}
\epsilon=\mathcal{O}\left(\epsilon_0\sqrt{\frac{qT\log\left(2qT/\delta\right)\log\left(2/\delta\right)}{n}}\right).
\end{equation}

\item \textbf{Communication:} $\calA_{cldp}$ requires $\frac{k}{m}s\times\left(\log\left(e\right)+\log\left(\frac{s+2^{b}-1}{s}\right)\right)$ bits of communication in expectation\footnote{A client communicates in an iteration only when that client is selected (sampled) in that iteration.} per client per iteration, where expectation is taken with respect to the sampling of clients. Here, $b=\log\left(d\right)+1$ if $p\in\lbrace 1,\infty\rbrace$ and $b=d\left(\log\left(e\right)+1\right)$ otherwise. 
\item \textbf{Convergence:} For $G^2=L^2\max\lbrace d^{1-\frac{2}{p}},1\rbrace \( 1+ \frac{cd}{qn}\left(\frac{e^{\epsilon_0}+1}{e^{\epsilon_0}-1}\right)^2 \)$, if we run $\calA_{cldp}$ with learning rate schedule $\eta_t=\frac{D}{G\sqrt{t}}$, then 
we have
\begin{equation}\label{general-convergence}
\mathbb{E}\left[F\left(\theta_{T}\right)\right] - F\left(\theta^{*}\right) \leq \calO\(\frac{LD\log(T)\max\lbrace d^{\frac{1}{2}-\frac{1}{p}},1\rbrace}{\sqrt{T}} \sqrt{\frac{cd}{qn}}\left(\frac{e^{\epsilon_0}+1}{e^{\epsilon_0}-1}\right) \).
\end{equation}
where $c$ is an absolute constant, see Lemma~\ref{lem:2nd-moment-bound} on page~\pageref{lem:2nd-moment-bound}.
\end{enumerate}
\end{theorem}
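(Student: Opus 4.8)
I would prove the three conclusions separately. For privacy I would treat $\calA_{cldp}$ as an adaptive $T$-fold composition of one-round mechanisms, bound the per-round privacy by combining amplification by subsampling (of clients and of samples) with amplification by shuffling, and then apply strong composition. For communication I would use a short counting argument that exploits the fact that the shuffler destroys the ordering within a client's batch. For convergence I would verify that $\overline{\bg}_t$ is an unbiased estimator of $\nabla F(\theta_t)$ with a controlled second moment, and then invoke the textbook convergence rate of projected stochastic subgradient descent for convex Lipschitz objectives.

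\textbf{Privacy.} Fix an iteration $t$ and regard the round as a mechanism $\calM_t$ acting on the full dataset $\calD$ that (i) subsamples $k$ of $m$ clients and then $s$ of $r$ points per chosen client, (ii) applies the $\eps_0$-LDP map $\calR_p$ to each of the $ks$ (clipped) gradients, and (iii) shuffles the $ks$ messages; the iterate $\theta_t$ plays the role of an auxiliary input determined by earlier outputs. First I would use Lemma~\ref{lemma:amplification_shuffling} (with $\eps_0<\tfrac12$) to argue that the shuffled $\eps_0$-LDP reports are $\big(O(\eps_0\sqrt{\log(1/\bar\delta)/(ks)}),\bar\delta\big)$-DP with respect to changing one of the $ks$ chosen points, and then account for the additional amplification coming from the subsampling out of $\calD$. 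Since that subsampling is \emph{not} uniform over size-$ks$ subsets of $\calD$ (it is a client-then-sample product), Lemma~\ref{lemma:amplification_sampling} cannot be invoked directly; this is the technical crux, and I would handle it via Lemma~\ref{lemm:priv_opt}, which reworks the amplification-by-subsampling argument jointly with the shuffling step and yields that $\calM_t$ is $(\bar\eps,\bar\delta)$-DP with $\bar\eps=O\big(q\,\eps_0\sqrt{\log(1/\bar\delta)/(ks)}\big)=O\big(\eps_0\sqrt{ks\,\log(1/\bar\delta)}/n\big)$, using $q=ks/(mr)$ and $n=mr$. Then I would apply Lemma~\ref{lemm:strong-composition} across the $T$ rounds with $\bar\delta=\Theta(\delta/T)$ and composition slack $\Theta(\delta)$ to get $\eps=O(\bar\eps\sqrt{T\log(1/\delta)})$, valid provided $\bar\eps=O(\sqrt{\log(1/\delta)/T})$; substituting the expression for $\bar\eps$ turns this admissibility condition into $\eps_0=O\big(\sqrt{n\log(2/\delta)/(qT\log(2qT/\delta))}\big)$ and the resulting guarantee into \eqref{final-epsilon}. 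I expect the main obstacle to live entirely here: making the subsampling-plus-shuffling amplification rigorous under the non-product sampling distribution, and keeping the $\bar\delta$-dependence clean enough that it aggregates to the stated $\log(2qT/\delta)$ factor after composition.

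\textbf{Communication.} A selected client sends $s$ messages that get shuffled together with everyone else's, so only the \emph{multiset} of those $s$ symbols is relevant; since each $\calR_p$ output lives in an alphabet of size $2^b$, the client can transmit the type of its $s$ outputs, of which there are $\binom{s+2^b-1}{s}$ by stars-and-bars, using at most $\log\binom{s+2^b-1}{s}\le s\big(\log e+\log\tfrac{s+2^b-1}{s}\big)$ bits. Since a given client participates in an iteration with probability $k/m$, the expected per-client-per-iteration cost is $\tfrac{k}{m}s\big(\log e+\log\tfrac{s+2^b-1}{s}\big)$. It then remains only to record the value of $b$ for the mechanism $\calR_p$ used inside Algorithm~\ref{algo:optimization-algo}, which is read off from the private-quantization schemes of Section~\ref{sec:PrivMeanEst}: $b=\log d+1$ when $p\in\{1,\infty\}$ and $b=d(\log e+1)$ otherwise.

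\textbf{Convergence.} Under the $L$-Lipschitz (w.r.t.\ $\ell_g$) hypothesis the gradients satisfy $\|\nabla_\theta f(\theta;d)\|_p\le L$, so taking the clip threshold at least $L$ makes the clipping step vacuous; together with the built-in unbiasedness of $\calR_p$ and the uniform random choice of clients and samples, this gives $\bbE[\overline{\bg}_t\mid\theta_t]=\nabla F(\theta_t)$. Next I would invoke Lemma~\ref{lem:2nd-moment-bound} for the second-moment bound $\bbE\|\overline{\bg}_t\|_2^2\le G^2$ with $G^2=L^2\max\{d^{1-2/p},1\}\big(1+\tfrac{cd}{qn}(\tfrac{e^{\eps_0}+1}{e^{\eps_0}-1})^2\big)$; here $\max\{d^{1-2/p},1\}L^2$ is the $\ell_2$-versus-$\ell_p$ conversion of the true gradient, while the additive term is the variance injected by privatization and quantization, of order $d/\eps_0^2$ per report, reduced by $1/(qn)=1/(ks)$ after averaging the $ks$ received reports. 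Finally, feeding this unbiased bounded-second-moment oracle into the standard last-iterate guarantee for projected stochastic subgradient descent on the convex $F$ over the diameter-$D$ set $\calC$ with step size $\eta_t=D/(G\sqrt t)$ gives $\bbE[F(\theta_T)]-F(\theta^*)=O(DG\log T/\sqrt T)$; substituting $G\le L\max\{d^{1/2-1/p},1\}\big(1+\sqrt{\tfrac{cd}{qn}}\cdot\tfrac{e^{\eps_0}+1}{e^{\eps_0}-1}\big)$ and retaining the dominant term yields \eqref{general-convergence}. This last part should be mostly bookkeeping once Lemma~\ref{lem:2nd-moment-bound}, and the private mean-estimation bounds of Section~\ref{sec:PrivMeanEst} behind it, are available.
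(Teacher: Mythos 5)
Your proposal is correct and follows essentially the same route as the paper: you recognize that the per-round privacy must be established by a joint subsampling-plus-shuffling argument (Lemma~\ref{lemm:priv_opt}), because the client-then-sample selection is not a uniform subsample of $\calD$ and so Lemma~\ref{lemma:amplification_sampling} cannot be applied as a black box; you then compose with Lemma~\ref{lemm:strong-composition} and tune $\tilde{\delta},\delta'$, you use the stars-and-bars type representation of the $s$ per-client outputs for the communication count, and you use unbiasedness of $\calR_p$ together with the second-moment bound of Lemma~\ref{lem:2nd-moment-bound} fed into the last-iterate SGD rate of \cite{shamir2013stochastic} for convergence. The only cosmetic deviations are that you invoke the Erlingsson et al.\ form of the shuffling bound (part 2 of Lemma~\ref{lemma:amplification_shuffling}, requiring $\eps_0<\tfrac12$) whereas the paper's proof of Lemma~\ref{lemm:priv_opt} uses the Balle et al.\ bound (part 1), and your choice $\bar\delta=\Theta(\delta/T)$ for the shuffling parameter gives $\log(T/\delta)$ rather than the paper's tighter bookkeeping $\tilde\delta=\delta/(2qT)$ giving $\log(2qT/\delta)$; both lead to the same order-level guarantee.
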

We prove Theorem~\ref{thm:main-opt-result} in Section~\ref{sec:OptPerf}.
\begin{remark}[Arbitrary SGD mini-batch size $s$]
The communication and convergence results in Theorem 1 are general and hold for any $s\in\left[r\right]$, but the privacy result is stated when $s = 1$, i.e., each client only samples a single data point in each SGD iteration. We also have results for any mini-batch size $s\in\left[r\right]$ (see Appendix~\ref{app:supp_composition_priv}), and the desired privacy amplification occurs when $k=m$, i.e., when we select all the clients in every iteration, but for other cases it is unclear if we get the same result (see Appendix~\ref{app:supp_composition_priv} for more details) .
%
%
\end{remark}
%
%
%
\begin{remark}[Recovering the Result {\cite[ESA]{erlingsson2020encode}}]\label{rem:CompESA}
In \cite{erlingsson2020encode}, each client has only one data point and all clients participate in each iteration, and gradients have bounded $\ell_2$-norm.
If we put $p=2$, $T= n/\log^2(n)$, and $q=1$ in \eqref{general-convergence}, we get the following privacy-accuracy trade-off, which is the same as that in \cite[Theorem VI.1]{erlingsson2020encode}.
\begin{align*}
\mathbb{E}\left[F\left(\theta_{T}\right)\right] - F\left(\theta^{*}\right) \leq \calO\(\frac{LD\log^2(n)\sqrt{d}}{n} \left(\frac{e^{\epsilon_0}+1}{e^{\epsilon_0}-1}\right) \); \quad  
\epsilon = \mathcal{O}\left(\epsilon_0\sqrt{\frac{T\log\left(T/\delta\right)\log\left(1/\delta\right)}{n}}\right) 
\end{align*}
We want to emphasize that the above privacy-accuracy trade-off in \cite{erlingsson2020encode} is achieved by full-precision gradient exchange, whereas,
we can achieve the same trade-off with compressed gradients. 
Moreover, our results are in more general setting, where clients' local datasets have multiple data-points (no bound on that) and we do two types of sampling, one of clients and other of data for SGD.
\end{remark}
\begin{remark}[Optimality of CLDP-SGD for $\ell_2$-norm case]\label{remark:optimality-algo-L-2}
Suppose $\epsilon=\calO\(\log(1/\delta)\)$, which includes $\eps=\calO(1)$, as $\delta \ll 1$.
Substituting $\epsilon_0=\epsilon\sqrt{\frac{n}{qT\log\left(2qT/\delta\right)\log\left(2/\delta\right)}}$, 
$T=n/q$, and $p=2$ in \eqref{general-convergence}, we get
\begin{equation}
\mathbb{E}\left[ F\left(\theta_{T}\right)\right]-F\left(\theta^{*}\right)=\mathcal{O}\left(\frac{LD\log^{\frac{3}{2}}\left(\frac{n}{\delta}\right)\sqrt{d \log\left(\frac{1}{\delta}\right)}}{n\epsilon}\right). \label{optimal-accuracy-L2}
\end{equation}
This matches the optimal excess risk of central differential privacy presented in~\cite{bassily2014private}.
Note that the results in \cite{bassily2014private} are for centralized SGD with full precision gradients, whereas, our results are for federated learning (which is a distributed setup) with compressed gradient exchange.
\end{remark}

\subsection{Compressed and Private Mean Estimation}\label{subsec:mean-est-results}
In this subsection, we state our lower and upper bound results on minimax risks both in the worst case model (see \eqref{eqn1_1}) and the probabilistic model (see \eqref{eqn1_2}). 
For the lower bounds, we state our results when there is no communication constraints, and for clarity, we denote the corresponding minimax risks by $r_{\epsilon,\infty,n}^{p,d}(a)$ and $R_{\epsilon,\infty,n}^{p,d}(a)$.

\begin{theorem} 
\label{thm:minimax_prob-lb} 
For any $d,n\geq 1$, $a,\eps_0>0$, and $p\in\left[1,\infty\right]$, 
the minimax risk in~\eqref{eqn1_2} satisfies
\[ 
R_{\epsilon,\infty,n}^{p,d}(a) \geq \begin{cases}
\Omega\left(a^2\min\left\{ 1,\frac{d}{n\eps_0^{2}}\right\}\right)& \text{if}\ 1\leq p\leq 2,\\
\Omega\left(a^2d^{1-\frac{2}{p}}\min\left\{ 1,\frac{d}{n\min\{\eps_0,\eps_0^2\}}\right\}\right)& \text{if}\ p\geq 2.
\end{cases}
\]
\end{theorem}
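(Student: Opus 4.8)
\medskip
\noindent\emph{Proof strategy.}
The plan is to lower‑bound $R^{p,d}_{\eps,\infty,n}(a)$ of \eqref{eqn1_2} by restricting the supremum to a well‑chosen finite sub‑family of $\calP_p^d(a)$ and applying an information‑theoretic argument adapted to $\eps_0$‑LDP observations: Assouad's hypercube method for the decaying (``$d/(n\eps_0^2)$'') part of the $\min$, and a two‑point/indistinguishability argument for the constant (``$1$'') part. The only place local privacy enters quantitatively is through a strong‑data‑processing (information‑contraction) inequality for $\eps_0$‑LDP channels, which is the technical heart of the proof.

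\noindent For $p\ge 2$ I would fix a free parameter $\gamma\in(0,1]$ and, for each sign pattern $v\in\{-1,+1\}^{d}$, take the product law $\bq_v$ on $\{-\beta,+\beta\}^{d}$ whose $j$‑th coordinate equals $+\beta$ with probability $(1+\gamma v_j)/2$, where $\beta=a\,d^{-1/p}$ (so $\|\bx\|_p=a$ a.s.\ and $\bq_v\in\calP_p^d(a)$). Then $\bmu_{\bq_v}=\beta\gamma\,v$, so the per‑coordinate ``signal'' is $\tau:=\beta\gamma$ and $\|\bmu_{\bq_v}-\bmu_{\bq_{v'}}\|_2^{2}=4\tau^{2}d_{\mathrm H}(v,v')$, where $d_{\mathrm H}$ is Hamming distance. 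Writing $\by^{n}$ for the $n$ private messages and $P_{\pm j}$ for the law of $\by^{n}$ with $v_j$ frozen to $\pm1$ and $v_{-j}$ uniform, Assouad's lemma together with Cauchy--Schwarz gives
\[
R^{p,d}_{\eps,\infty,n}(a)\ \ge\ c\,d\,\tau^{2}\Bigl(1-\Bigl[\tfrac1d\textstyle\sum_{j=1}^{d}\|P_{+j}-P_{-j}\|_{\mathrm{TV}}^{2}\Bigr]^{1/2}\Bigr).
\]
The key input is the local‑privacy information‑contraction inequality: since every $\calM_i$ is $\eps_0$‑LDP and flipping $v_j$ perturbs only the $j$‑th marginal (by total variation $\gamma$), one shows $\sum_{j=1}^{d}\bigl(D_{\mathrm{KL}}(P_{+j}\|P_{-j})+D_{\mathrm{KL}}(P_{-j}\|P_{+j})\bigr)\le c'\,n\min\{\eps_0,\eps_0^{2}\}\,\gamma^{2}$ --- crucially with the sum over coordinates on the left but \emph{no} factor $d$ on the right; a naive coordinate‑by‑coordinate data‑processing bound would lose exactly this $d$. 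Plugging this in via Pinsker's inequality and choosing $\gamma^{2}\asymp\min\{1,\,d/(n\min\{\eps_0,\eps_0^{2}\})\}$ keeps the bracket bounded away from $1$, so $R\gtrsim d\tau^{2}=a^{2}d^{\,1-2/p}\gamma^{2}\asymp a^{2}d^{\,1-2/p}\min\{1,\,d/(n\min\{\eps_0,\eps_0^{2}\})\}$, which is the claimed bound for $p\ge2$.

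\noindent The case $1\le p\le2$ is handled symmetrically: the above scheme loses the factor $d^{\,1-2/p}\le1$, so I would instead run the construction on the favorable sub‑ball of $\calB_p^d(a)$ (equivalently, a sparse variant of the hypercube supported on the optimal number $k\le d$ of active coordinates), together with the two‑point instance $\bq_\pm=\delta_{\pm a e_1}$ for the constant part of the $\min$; the same LDP contraction inequality supplies the post‑privatization statistical‑distance bound. Here one uses $(e^{\eps_0}-1)^{2}=\Theta(\eps_0^{2})$ in the relevant regime $\eps_0\le1$, while for $\eps_0>1$ the stated bound is weaker and hence implied. All remaining steps --- the Assouad/Le Cam bookkeeping, the norm inequalities relating $\|\cdot\|_p$ and $\|\cdot\|_2$, and the splitting of the $\min$ into its two regimes --- are routine.

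\noindent The main obstacle is precisely the local‑privacy information‑contraction inequality in the form just stated (sum over $j$, no $d$ on the right), which is the one genuinely nontrivial ingredient. Its proof follows the ``linearization'' route: write the induced output law as a perturbation of a reference measure, use $\lvert Q(y\mid\bx)/Q(y\mid\bx')-1\rvert\le e^{\eps_0}-1$ to linearize, apply Cauchy--Schwarz with the reference measure as weight, bound the resulting $\sum_j(\text{coordinate differences})^{2}$ by a variance / Efron--Stein argument that exploits the product structure of $\bq_v$, and finally tensorize over the $n$ i.i.d.\ samples. Getting the sharp $\min\{\eps_0,\eps_0^{2}\}$ dependence (rather than $(e^{\eps_0}-1)^{2}$) in the weak‑privacy regime needed for the $p\ge2$ case is the most delicate part of this estimate.
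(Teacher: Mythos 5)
Your proposal is correct in spirit but follows a genuinely different route from the paper's proof, which is a reduction-plus-citation argument. For $p\geq2$ the paper restricts $\calP_p^d$ to the sub-family $\calP^{\text{Bern}}_{p,d}$ of Bernoulli product distributions on $\{0,d^{-1/p}\}^d$ (Lemma~\ref{lem:bernoulli}) and then cites the private mean-estimation lower bound of Duchi and Rogers~\cite[Cor.~3]{duchi2019lower}, simply rescaling by $d^{-1/p}$; for $p\leq2$ it uses $\calB_1^d\subseteq\calB_p^d$, restricts further to distributions supported on the standard basis vectors so that estimating $\bmu_{\bq}$ becomes discrete-distribution estimation over the simplex $\Delta_d$, and cites the LDP distribution-estimation lower bound of Girgis et al.~\cite[Thm.~1]{Girgis_SuccRefPriv20}. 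Your proposal effectively inlines a unified proof of what those citations black-box: the Bernoulli hypercube, Assouad's lemma, and the local-privacy information-contraction inequality are exactly the engine behind Duchi--Rogers, and your treatment of $p\leq2$ via the $\calB_2^d(a)\subseteq\calB_p^d(a)$ embedding (the ``favorable sub-ball'') yields $\Omega(a^2\min\{1,d/(n\min\{\eps_0,\eps_0^2\})\})$, which dominates the claimed $\Omega(a^2\min\{1,d/(n\eps_0^2)\})$ since $\min\{\eps_0,\eps_0^2\}\leq\eps_0^2$; so your ``the stated bound is weaker and hence implied'' remark is justified, and the auxiliary two-point construction is not actually needed. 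The one place your write-up is genuinely unfinished is the contraction step itself: the inequality $\sum_{j}\bigl(D_{\mathrm{KL}}(P_{+j}\|P_{-j})+D_{\mathrm{KL}}(P_{-j}\|P_{+j})\bigr)\leq c'\,n\min\{\eps_0,\eps_0^2\}\gamma^2$ with no extra factor of $d$ and with the linear (not $(e^{\eps_0}-1)^2$) dependence for $\eps_0>1$. You correctly flag this as the delicate ingredient but only sketch the ``linearization'' route; that inequality is precisely the content of the results the paper outsources to \cite{duchi2019lower} and \cite{Girgis_SuccRefPriv20}. So modulo supplying (or citing) that lemma, your proof closes and agrees with the paper's bound; the paper's approach is shorter because it leans on the citations rather than re-deriving them.
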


\begin{theorem} 
\label{thm:minimax_worst-lb} 
For any $d,n\geq 1$, $a,\eps_0>0$, and $p\in\left[1,\infty\right]$, the minimax risk in~\eqref{eqn1_1} satisfies
\[
r_{\epsilon,\infty,n}^{p,d}(a) \geq \begin{cases}
\Omega\left(a^2\min\left\{ 1,\frac{d}{n\eps_0^{2}}\right\}\right)& \text{if}\ 1\leq p\leq 2, \\
\Omega\left(a^2d^{1-\frac{2}{p}}\min\left\{ 1,\frac{d}{n\min\{\eps_0,\eps_0^2\}}\right\}\right)& \text{if}\ p\geq 2.
\end{cases}
\]
\end{theorem}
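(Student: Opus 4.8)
The plan is to reduce the worst-case minimax problem in~\eqref{eqn1_1} to the probabilistic one in~\eqref{eqn1_2}, and then invoke Theorem~\ref{thm:minimax_prob-lb}. The key observation is that the supremum over fixed tuples $\{\bx_i\}\in\calB_p^d(a)$ is at least the supremum over tuples generated i.i.d.\ from any distribution $\bq\in\calP_p^d(a)$, since the worst case dominates the average case. Concretely, for any mechanisms $\{\calM_i\}$ and any decoder $\hatx$,
\[
\sup_{\{\bx_i\}\in\calB_p^d(a)}\bbE\left\|\barx-\hatx(\by^n)\right\|_2^2 \;\geq\; \bbE_{\bx_1,\dots,\bx_n\sim\bq}\bbE\left\|\barx-\hatx(\by^n)\right\|_2^2,
\]
where on the right the inner expectation is over the mechanism and decoder randomness. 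The remaining issue is that the probabilistic objective measures error against $\bmu_{\bq}$, not against the empirical mean $\barx=\frac1n\sum_i\bx_i$; so I would use the triangle inequality (or the bound $\|u-w\|^2 \geq \frac12\|u-v\|^2 - \|v-w\|^2$) to write $\bbE\|\bmu_{\bq}-\hatx\|_2^2 \leq 2\bbE\|\barx-\hatx\|_2^2 + 2\bbE\|\bmu_{\bq}-\barx\|_2^2$, and control the last term by the variance of the empirical mean.

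The main technical point is therefore to choose the hard distribution $\bq$ in the proof of Theorem~\ref{thm:minimax_prob-lb} to be supported on few points (so that $\bbE\|\bmu_{\bq}-\barx\|_2^2$, which is $\frac1n$ times a per-sample variance, is of the same order as, or smaller than, the claimed lower bound) rather than an arbitrary density. Inspecting the construction: for $1\le p\le 2$ the natural hard instance is a product of scaled Rademacher-type coordinates confined to $\calB_p^d(a)$, and for $p\ge 2$ a sparse/one-hot style construction scaled by $a d^{1/p-1}$-type factors; in both cases each $\bx_i$ lies in a bounded set of $\ell_2$-diameter $O(a)$ (resp.\ $O(a d^{1/2-1/p})$), so $\bbE\|\bmu_\bq-\barx\|_2^2 = O(a^2/n)$ (resp.\ $O(a^2 d^{1-2/p}/n)$). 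Since the lower bound in Theorem~\ref{thm:minimax_prob-lb} is exactly $\Omega(a^2 \min\{1, d/(n\eps_0^2)\})$ (resp.\ the $p\ge2$ analogue), and $d/(n\eps_0^2)$ together with the other factors makes this at least the fluctuation term whenever it is the binding constraint, the $2\bbE\|\bmu_\bq-\barx\|_2^2$ term is absorbed into the constant, and we conclude $r_{\eps,\infty,n}^{p,d}(a) = \Omega(\cdot)$ with the same expression.

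The hard part will be verifying that, for the specific hard instances used in Theorem~\ref{thm:minimax_prob-lb}, the empirical-mean fluctuation $\bbE\|\bmu_\bq-\barx\|_2^2$ never exceeds a constant multiple of the stated lower bound across \emph{all} parameter regimes (in particular when $d/(n\eps_0^2)\ge 1$ and the bound saturates at $\Omega(a^2)$, versus the small-$\eps_0$ regime where it is $\Omega(a^2 d/(n\eps_0^2))$). If the chosen $\bq$ in the companion proof is not already finitely supported with the right variance, one would instead re-run that argument verbatim with a finitely-supported prior (e.g.\ replacing any continuous component by a two-point distribution on the extreme scaled values), which does not change the information-theoretic lower-bound computation (Assouad / Le~Cam style) but makes the empirical mean concentrate at rate $1/n$. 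Given the structure already present in Theorem~\ref{thm:minimax_prob-lb}, I expect this reduction to be short, and the bulk of the work to be the bookkeeping of constants and the case split on $p$.
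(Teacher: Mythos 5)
Your proposal is correct and follows essentially the same route as the paper: lower-bound the worst case by the average case under the hard prior from Theorem~\ref{thm:minimax_prob-lb}, then use the Young-type inequality $\bbE\|\barx-\hatx\|_2^2 \geq \tfrac12\bbE\|\bmu_\bq-\hatx\|_2^2 - \bbE\|\bmu_\bq-\barx\|_2^2$ and absorb the fluctuation term. The one place where you are over-engineering is the concern about whether the hard prior $\bq$ is finitely supported: this is a non-issue. For \emph{any} $\bq$ supported on $\calB_p^d(a)$, the norm inequality $\|\bx\|_2 \leq \max\{1,d^{\frac12-\frac1p}\}\|\bx\|_p$ gives a per-sample bound $\bbE\|\bx_i^{(q)}\|_2^2 \leq a^2\max\{1,d^{1-\frac2p}\}$, hence $\bbE\|\bmu_\bq-\barx\|_2^2 \leq \frac{a^2}{n}$ for $p\in[1,2]$ and $\leq \frac{a^2 d^{1-2/p}}{n}$ for $p\geq 2$ regardless of the structure of $\bq$; this is exactly what the paper uses, so no re-running of the Assouad/Le~Cam construction with a discretized prior is needed. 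Your final concern about the absorption holding in all regimes is legitimate, and the paper resolves it by imposing the (mild, essentially always satisfied) condition $\eps_0 \leq \calO(\sqrt{d})$ for $p\leq 2$ and $\min\{\eps_0,\eps_0^2\}\leq\calO(d)$ for $p\geq 2$; the paper also streamlines the $p\in[1,2]$ case by first noting $r_{\eps,\infty,n}^{p,d}\geq r_{\eps,\infty,n}^{1,d}$ (ball containment) and only running the argument at $p=1$, but your direct treatment works equally well.
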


We prove Theorem~\ref{thm:minimax_prob-lb} and Theorem~\ref{thm:minimax_worst-lb} in Section~\ref{subsec:minimax_prob-lb} and Section~\ref{subsec:minimax_worst-lb}, respectively.

\begin{theorem}~\label{thm:minimax_comm_lb}
For any private-randomness, symmetric mechanism $\mathcal{R}$ with communication budget $b<\log\left(d\right)$ bits per client, and any decoding function $g:\lbrace 0,1\rbrace^{b}\to \mathbb{R}^{d}$, when $\hatx=\frac{1}{n}\sum_{i=1}^{n}g\left(\mathcal{R}\left(\bx_i\right)\right)$, we have
\begin{equation}
r_{\epsilon,b,n}^{p,d}(a) > a^2\max\left\{1,d^{1-\frac{2}{p}}\right\}.
\end{equation}
\end{theorem}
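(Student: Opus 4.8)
The mechanism here sends a $b$-bit message with $b<\log d$, hence at most $2^{b}\le d-1$ distinct values, so the decoded estimate is trapped in a polytope whose affine hull is a proper (codimension $\ge 2$) subspace of $\bbR^{d}$; a single adversarial input that is $\ell_{2}$-far from that subspace then forces a large error, and this uses only the communication budget, so privacy is irrelevant. Concretely, for a message $\by\in\{0,1\}^{b}$ put $\bmu_{\by}:=\bbE[g(\by)]$ (expectation over any internal randomness of $g$) and let $\calS:=\{\bmu_{\by}:\by\in\{0,1\}^{b}\}$, so $|\calS|\le 2^{b}\le d-1$; write $\hatcalP:=\mathrm{conv}(\calS)$ and let $V:=\mathrm{aff}(\calS)$ be its affine hull, with $\dim V\le 2^{b}-1\le d-2$. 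Take the instance $\bx_{1}=\cdots=\bx_{n}=\bv$ for an arbitrary $\bv\in\calB_{p}^{d}(a)$; then $\barx=\bv$, and since the mechanism and decoder are identical across clients, $\bbE[\hatx]=\bbE_{\calR}[\bmu_{\calR(\bv)}]\in\hatcalP$. The bias--variance decomposition then gives
\[
\bbE\bigl\|\barx-\hatx\bigr\|_{2}^{2}\ \ge\ \bigl\|\bv-\bbE[\hatx]\bigr\|_{2}^{2}\ \ge\ \mathrm{dist}_{2}(\bv,\hatcalP)^{2}\ \ge\ \mathrm{dist}_{2}(\bv,V)^{2},
\]
so the worst-case error of any such scheme is at least $\sup_{\bv\in\calB_{p}^{d}(a)}\mathrm{dist}_{2}(\bv,V)^{2}$, the mechanism's only freedom being the choice of a subspace $V$ of dimension $\le 2^{b}-1$.

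Next I would pass to a subspace-width estimate. Writing $V=\bv_{0}+W$ with $W$ linear and $L:=W^{\perp}$ of dimension $m:=d-\dim W\ge d-2^{b}+1\ge 2$, and letting $\Pi_{L}$ be orthogonal projection onto $L$, symmetry of the ball about the origin gives $\sup_{\bv\in\calB_{p}^{d}(a)}\|\Pi_{L}\bv-\Pi_{L}\bv_{0}\|_{2}\ge\sup_{\bv\in\calB_{p}^{d}(a)}\|\Pi_{L}\bv\|_{2}$ (since for any vectors $\bz,\bu$ one of $\|\bz-\bu\|_{2},\|\bz+\bu\|_{2}$ is at least $\|\bz\|_{2}$, because $\|\bz-\bu\|_{2}^{2}+\|\bz+\bu\|_{2}^{2}=2\|\bz\|_{2}^{2}+2\|\bu\|_{2}^{2}$). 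Hence the worst-case error is at least $\bigl(\inf_{\dim L=m}\sup_{\bv\in\calB_{p}^{d}(a)}\|\Pi_{L}\bv\|_{2}\bigr)^{2}$, and by $\ell_{p}$--$\ell_{q}$ duality ($\tfrac1p+\tfrac1q=1$) one checks $\sup_{\bv\in\calB_{p}^{d}(a)}\|\Pi_{L}\bv\|_{2}=a\sup\{\|\bw\|_{q}:\bw\in L,\ \|\bw\|_{2}\le 1\}$, so everything reduces to lower bounding $\nu(m):=\inf_{\dim L=m}\sup\{\|\bw\|_{q}:\bw\in L,\ \|\bw\|_{2}\le 1\}$.

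I would finish in two cases. If $p\le 2$ (so $q\ge 2$) then $\|\bw\|_{q}\ge\max_{j}|w_{j}|$, so $\sup\{\|\bw\|_{q}:\bw\in L,\|\bw\|_{2}\le 1\}\ge\max_{j}\|\Pi_{L}\be_{j}\|_{2}\ge\bigl(\tfrac1d\sum_{j}\|\Pi_{L}\be_{j}\|_{2}^{2}\bigr)^{1/2}=\sqrt{m/d}$, using $\sum_{j}\|\Pi_{L}\be_{j}\|_{2}^{2}=\mathrm{tr}(\Pi_{L})=m$. If $p\ge 2$ (so $q\le 2$) I would average over $\bw$ uniform on the unit sphere of $L$: writing $w_{j}=\|\Pi_{L}\be_{j}\|_{2}\langle\bw,\bu_{j}\rangle$ for a suitable unit $\bu_{j}\in L$ gives $\bbE|w_{j}|^{q}\ge c_{0}\,\|\Pi_{L}\be_{j}\|_{2}^{q}\,m^{-q/2}$ with $c_{0}>0$ absolute (the $\sqrt m$-scaled marginal of a uniform point on $S^{m-1}$ has $q$-th absolute moment bounded away from $0$ for $q\in[1,2]$); since $t\mapsto t^{q/2}$ is concave and the polytope $\{(t_{j})_{j}:\sum_{j}t_{j}=m,\ 0\le t_{j}\le 1\}$ has only $\{0,1\}$-valued vertices, $\sum_{j}\|\Pi_{L}\be_{j}\|_{2}^{q}\ge m$, whence $\bbE\|\bw\|_{q}^{q}\ge c_{0}m^{1-q/2}$ and $\nu(m)\ge(\bbE\|\bw\|_{q}^{q})^{1/q}\gtrsim m^{1/2-1/p}$. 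Plugging $m\ge d-2^{b}+1=\Theta(d)$ (since $b$ lies below $\log d$, with a little room) makes the worst-case error of order $a^{2}\max\{1,d^{1-2/p}\}$, which is exactly the order of the unconstrained ($b=\infty$) minimax value $\sup_{\|\bv\|_{p}\le a}\|\bv\|_{2}^{2}$ attained by the trivial zero-bit estimator; so the $b$ bits buy nothing, which is the content of $r_{\epsilon,b,n}^{p,d}(a)\ge \Omega\bigl(a^{2}\max\{1,d^{1-2/p}\}\bigr)$.

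The crux is the $p\ge 2$ bound $\nu(m)\gtrsim m^{1/2-1/p}$ uniformly over all $m$-dimensional $L$: coordinate subspaces are extremal, and one must rule out that a ``spread-out'' subspace does strictly better. The spherical-averaging computation above does this, but leans on the two elementary inputs flagged there---uniformity over $q\in[1,2]$ of the moment constant $c_{0}$, and the concavity/vertex argument pinning $\min\sum_{j}\|\Pi_{L}\be_{j}\|_{2}^{q}=m$; converting $\bbE\|\bw\|_{q}^{q}$ into a lower bound on $\sup\|\bw\|_{q}$ is trivial. The rest---the single-point bias reduction, the duality identity, and the $p\le 2$ case---is routine. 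Finally, to upgrade the bound to the strict form stated, one uses that $\hatcalP$, being bounded, is a proper subset of its affine hull $V$, so $\mathrm{dist}_{2}(\bv,\hatcalP)^{2}>\mathrm{dist}_{2}(\bv,V)^{2}$ at the maximizer.
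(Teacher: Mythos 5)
Your approach shares the core idea with the paper's proof --- a $b$-bit message, $b<\log d$, confines the decoded value to a proper subspace of $\bbR^{d}$, so an adversarial input orthogonal to that subspace forces a large error --- but you take a noticeably more involved route. You pass to the expectations $\bmu_{\by}=\bbE[g(\by)]$, invoke a bias--variance decomposition, project onto the affine hull $V$ of $\{\bmu_{\by}\}$, and then carry out a subspace-width computation ($\nu(m)$ via spherical averaging, a moment estimate, and a concavity/vertex argument) to lower bound $\sup_{\bv\in\calB_p^d(a)}\mathrm{dist}_2(\bv,V)$. The paper's own argument is considerably leaner: form the $d\times M$ matrix $Q=[g(o_1),\ldots,g(o_M)]$ with $M=2^b<d$, pick $\bx$ in the orthogonal complement of $\mathrm{col}(Q)$, and set every client's input to $\barx=\bx/\|\bx\|_p$. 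Since $\hatx=\frac{1}{n}\sum_i g(\calR(\bx_i))$ is always in $\mathrm{col}(Q)$, it is \emph{exactly} orthogonal to $\barx$, and the error splits cleanly by Pythagoras as $\bbE\|\barx-\hatx\|_2^2=\|\barx\|_2^2+\bbE\|\hatx\|_2^2$. No expectation $\bmu_{\by}$, no Jensen, no convex or affine hull, and no width estimate are needed to get to the key display --- the whole reduction is two lines. Your detour through $V$ and $\mathrm{conv}(\calS)$ is not wrong, but it complicates the step that the paper handles by a single orthogonality observation, and your spherical-averaging lemma is substantial machinery for what the paper treats as a one-liner.

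That said, there is a substantive point where your writeup is more honest than the paper. Both proofs ultimately hinge on the claim that the orthogonal complement of the decoder's range (codimension $\geq d-2^b$, which $b<\log d$ only forces to be $\geq 1$) contains a direction $\bx$ with $\|\bx\|_2^2/\|\bx\|_p^2 \geq \max\{1,d^{1-2/p}\}$. The paper asserts this final inequality ($\|\barx\|_2^2\geq\max\{1,d^{1-2/p}\}$) without argument, and it is simply false in general: if, say, $M=d-1$ and the one-dimensional null space happens to be spanned by a standard basis vector, then $\|\barx\|_2/\|\barx\|_p=1$, which is strictly below $d^{1/2-1/p}$ for $p>2$ and also does not give anything new for $p<2$. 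Your proof makes this dependence explicit through the parameter $m=\mathrm{codim}\,V$ and shows $\nu(m)\gtrsim m^{1/2-1/p}$, but then you must assume $m=\Theta(d)$ --- i.e.\ that $2^b\leq(1-c)d$ for some constant $c$ --- which you acknowledge (``with a little room'') is strictly stronger than $b<\log d$. Your bound is also an $\Omega(\cdot)$ statement with hidden constants, not the exact strict inequality $>a^2\max\{1,d^{1-2/p}\}$ that the theorem claims. So neither proof establishes the theorem at the stated generality; yours is more transparent about where the slack is needed, while the paper's conceals the gap behind an unjustified ``$\geq$''. If you rewrite, I would suggest adopting the paper's clean $\mathrm{col}(Q)$-orthogonality reduction, and then doing your width estimate honestly in terms of $m=d-2^b$, stating the conclusion under a hypothesis that actually forces $m=\Theta(d)$.
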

\begin{remark}
Note that Theorem~\ref{thm:minimax_comm_lb} works only when the estimator $\hatx$ applies the decoding function $g$ on individual responses and then takes the average.  We leave its extension for arbitrary decoders as a future work.
\end{remark}
We prove Theorem~\ref{thm:minimax_comm_lb} in Section~\ref{subsec:minimax_comm_lb_proof}.

Though our lower bound results are for arbitrary estimators $\hatx(\by^n)$, for the minimax risk estimation problems~\eqref{eqn1_1} and~\eqref{eqn1_2}, we can show that the optimal estimator $\hatx(\by^n)$ is a {\em deterministic} function of $\by^n$. In other words, the randomized decoder does not help  in reducing the minimax risk. See Lemma~\ref{lem:deterministic-estimator} in Appendix~\ref{app:minimax}.

\begin{theorem}[$\ell_1$-norm]\label{thm:L1-norm_ub}
For any $d,n\geq 1$, $a,\eps_0>0$, we have
\begin{align*}
r_{\eps_0,b,n}^{1,d}\left(a\right) \leq \frac{a^2d}{n}\left(\frac{e^{\eps_0}+1}{e^{\eps_0}-1}\right)^{2} \quad \text{and} \quad
R_{\eps_0,b,n}^{1,d}\left(a\right) \leq \frac{4a^2d}{n}\left(\frac{e^{\eps_0}+1}{e^{\eps_0}-1}\right)^{2},
\end{align*}
for $b=\log(d)+1$.
\end{theorem}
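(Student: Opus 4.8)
The plan is to construct a single symmetric $(\eps_0,b)$-LDP mechanism $\calR_1$ with $b=\log d+1$ and to couple it with the averaging decoder $\hatx=\Pi_{\calB_1^d(a)}\bigl(\frac1n\sum_{i=1}^n g(\by_i)\bigr)$, where $\Pi$ is Euclidean projection onto the ball (which only decreases the error, since both $\barx$ and $\bmu_\bq$ lie in $\calB_1^d(a)$). If each per-client decoding $g(\by_i)$ is an unbiased estimate of $\bx_i$, then by independence of the clients' coins the mean-squared error is exactly $\frac1{n^2}\sum_i\bbE\|g(\by_i)-\bx_i\|_2^2\le\frac1{n^2}\sum_i\bbE\|g(\by_i)\|_2^2$, and likewise with $\bx_i$ replaced by $\bmu_\bq$ in the i.i.d.\ model; so both bounds reduce to showing $\bbE\|g(\by_i)\|_2^2\le a^2 d\,c^2$ with $c:=\frac{e^{\eps_0}+1}{e^{\eps_0}-1}$. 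For cleanliness I assume $d$ is a power of two; otherwise one pads to the next power of two $K\le2d$, which keeps $b=\log K+1=\log d+1$ and costs at most a constant factor (this is the slack behind the ``$4$'' for $R_{\eps_0,b,n}^{1,d}$).

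The mechanism is: let $\bH\in\{\pm1\}^{d\times d}$ be the symmetric Walsh--Hadamard matrix, so $\bH^2=d\bI$ and every row has $\ell_\infty$-norm $1$; hence $\bu:=\bH\bx$ satisfies $\|\bu\|_\infty\le\|\bx\|_1\le a$. On input $\bx$, the client draws $R\sim\mathrm{Unif}[d]$ and outputs $(R,Z)$, where $Z\in\{-a,a\}$ is the one-bit randomized response of the single bounded scalar $\bu_R$, namely $\Pr[Z=a\mid\bu_R]=\tfrac12+\tfrac{\bu_R}{2a}\cdot\tfrac{e^{\eps_0}-1}{e^{\eps_0}+1}$. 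This costs $\log d+1$ bits, is a valid probability since $|\bu_R|\le a$, and is $\eps_0$-LDP: the released pair factorizes as $\Pr[(r,z)\mid\bx]=\tfrac1d\Pr[Z=z\mid\bu_r(\bx)]$, and the scalar channel has likelihood ratio at most $\frac{(e^{\eps_0}+1)+(e^{\eps_0}-1)}{(e^{\eps_0}+1)-(e^{\eps_0}-1)}=e^{\eps_0}$ uniformly in $\bu_r\in[-a,a]$.

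The decoder is then dictated by unbiasedness. The scalar mechanism gives $\bbE[Z\mid\bu,R]=\bu_R/c$, so the server forms $\widehat{\bu}^{(i)}:=d\,c\,Z_i\,\be_{R_i}$ and $g(\by_i):=\tfrac1d\bH\,\widehat{\bu}^{(i)}$; then $\bbE[\widehat{\bu}^{(i)}\mid\bx_i]=\sum_r\tfrac1d\cdot d c\cdot\tfrac{\bu_r}{c}\be_r=\bu=\bH\bx_i$, hence $\bbE[g(\by_i)\mid\bx_i]=\tfrac1d\bH^2\bx_i=\bx_i$. For the second moment, $Z_i^2=a^2$ deterministically so $\|\widehat{\bu}^{(i)}\|_2^2=d^2c^2a^2$, and since $\bH/\sqrt d$ is orthogonal, $\|g(\by_i)\|_2^2=\tfrac1{d^2}\|\bH\widehat{\bu}^{(i)}\|_2^2=\tfrac1d\|\widehat{\bu}^{(i)}\|_2^2=d\,c^2a^2$. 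Substituting into the first paragraph gives $r_{\eps_0,b,n}^{1,d}(a)\le\frac{a^2d}{n}c^2$, and the same estimator is unbiased for $\bmu_\bq$ in the i.i.d.\ model, giving $R_{\eps_0,b,n}^{1,d}(a)\le\frac{a^2d}{n}c^2\le\frac{4a^2d}{n}c^2$.

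The conceptual crux---the step I would think hardest about---is the Hadamard pre-transform. A naive scheme that reports one randomly chosen, one-bit-randomized coordinate of $\bx$ itself is also $\eps_0$-LDP using $\log d+1$ bits, but its unbiased estimator has second moment of order $a^2d^2c^2$, because a single coordinate of $\bx$ can carry all of the $\ell_1$-mass $a$ and one must amplify a uniformly-chosen coordinate by $d$. Replacing $\bx$ by $\bu=\bH\bx$ forces $\|\bu\|_\infty\le a$ while spreading the energy, so that the inverse transform $\tfrac1d\bH$ contracts the $\ell_2$-norm of the amplified one-sparse estimate by $\sqrt d$; this is exactly what turns $d^2$ into $d$. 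Everything else---validity of the probabilities, the $\eps_0$-LDP bound, unbiasedness, and the moment computation---is routine once the mechanism is fixed.
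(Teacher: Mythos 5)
Your proof is correct and uses the same mechanism as the paper: a Hadamard pre-rotation to force $\|\bH\bx\|_\infty\le\|\bx\|_1$, a uniformly chosen coordinate, one-bit randomized response on that bounded scalar, and a linear averaging decoder; the LDP ratio, unbiasedness, and the exact second moment $\bbE\|g(\by_i)\|_2^2=a^2dc^2$ all coincide with the paper's Lemma~\ref{R1-quantize-properties}. The only genuine divergence is how the factor $4$ for $R_{\eps_0,b,n}^{1,d}$ arises: the paper decomposes the error through the empirical mean $\barx$ and applies Jensen's inequality ($\|\bu+\bv\|^2\le2\|\bu\|^2+2\|\bv\|^2$), which is what produces the $4$; your direct argument, exploiting that $g(\by_1),\hdots,g(\by_n)$ are i.i.d.\ with mean $\bmu_\bq$, yields the tighter bound $a^2dc^2/n$ for both $r$ and $R$. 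Your parenthetical attributing the $4$ to padding $d$ to a power of two is therefore a mischaracterization---the paper silently assumes $d$ is a power of two, and the $4$ is slack from the $\barx$-plus-Jensen step---but since your bound is stronger than the theorem's claim anyway, this misattribution is cosmetic and does not create a gap.
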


\begin{theorem}[$\ell_2$-norm]\label{thm:L2-norm_ub}
For any $d,n\geq 1$, $a,\eps_0>0$, we have
\begin{align*}
r_{\eps_0,b,n}^{2,d}\left(a\right)\leq  \frac{6a^2 d}{n}\left(\frac{e^{\eps_0}+1}{e^{\eps_0}-1}\right)^{2} \quad \text{and} \quad
R_{\eps_0,b,n}^{2,d}\left(a\right) \leq  \frac{14 a^2 d}{n}\left(\frac{e^{\eps_0}+1}{e^{\eps_0}-1}\right)^{2},
\end{align*}
for $b=d\log(e)+1$.
\end{theorem}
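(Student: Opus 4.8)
The plan is to prove both bounds simultaneously by exhibiting a single \emph{symmetric}, private‑randomness mechanism $\calR_2:\calB_2^d(a)\to\{0,1\}^b$ with $b=d\log e+1$, together with a \emph{deterministic linear} decoder $g$, such that (i) $\calR_2$ is $\eps_0$‑LDP, (ii) $\bbE[g(\calR_2(\bx))\mid\bx]=\bx$ for every $\bx\in\calB_2^d(a)$, and (iii) $\bbE\|g(\calR_2(\bx))-\bx\|_2^2\le Ca^2d\left(\frac{e^{\eps_0}+1}{e^{\eps_0}-1}\right)^2$ for an absolute constant $C$. Given such a pair, set $\by_i=\calR_2(\bx_i)$ and $\hatx=\frac1n\sum_{i=1}^n g(\by_i)$. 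In the worst‑case model the terms $g(\by_i)-\bx_i$ are independent with conditional mean $\mathbf{0}$ given $\{\bx_i\}$, so $\bbE\|\hatx-\barx\|_2^2=\frac1{n^2}\sum_{i=1}^n\bbE\|g(\by_i)-\bx_i\|_2^2\le\frac{Ca^2d}{n}\left(\frac{e^{\eps_0}+1}{e^{\eps_0}-1}\right)^2$, giving the bound on $r_{\eps_0,b,n}^{2,d}(a)$. In the probabilistic model, with $\bx_i\sim\bq$ i.i.d., write $g(\by_i)-\bmu_\bq=(g(\by_i)-\bx_i)+(\bx_i-\bmu_\bq)$; conditioning on $\bx_i$ kills the cross term by (ii), so $\bbE\|\hatx-\bmu_\bq\|_2^2=\frac1{n^2}\sum_i\left(\bbE\|g(\by_i)-\bx_i\|_2^2+\bbE\|\bx_i-\bmu_\bq\|_2^2\right)\le\frac1n(Ca^2d+a^2)\left(\frac{e^{\eps_0}+1}{e^{\eps_0}-1}\right)^2$, using $\bbE\|\bx_i-\bmu_\bq\|_2^2\le\bbE\|\bx_i\|_2^2\le a^2$; since $d\ge1$ this is $\le\frac{(C+1)a^2d}{n}\left(\frac{e^{\eps_0}+1}{e^{\eps_0}-1}\right)^2$.

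For the mechanism I would combine a \emph{flattening (Kashin‑type)} step with a single‑coin randomized response on the hypercube. Fix a deterministic tight frame of redundancy $\rho=\log e$: a linear map $A:\bbR^d\to\bbR^{D}$, $D=\rho d$, with a left inverse $B$ ($BA=\bI$, $\|B\|_{\mathrm{op}}=O(1)$) obeying Kashin's flat‑coefficient bound $\|A\bx\|_\infty\le \frac{Ka}{\sqrt D}$ for all $\bx\in\calB_2^d(a)$. Given $\bx$, set $\boldsymbol{c}=A\bx$; randomized‑round $\boldsymbol{c}$ coordinatewise to $\boldsymbol{w}\in\{-\tfrac{Ka}{\sqrt D},+\tfrac{Ka}{\sqrt D}\}^{D}$ with $\bbE[\boldsymbol{w}\mid\bx]=\boldsymbol{c}$ (valid by the flat bound); draw $\bh$ uniformly from $\{-1,+1\}^{D}$, let $s=\mathrm{sign}\langle\bh,\boldsymbol{w}\rangle$ (breaking ties by an independent fair coin), and output $\by=\tfrac{Ka}{\sqrt D}\,s\bh$ with probability $\frac{e^{\eps_0}}{e^{\eps_0}+1}$ and $\by=-\tfrac{Ka}{\sqrt D}\,s\bh$ otherwise; the server decodes by $g(\by)=\frac{e^{\eps_0}+1}{e^{\eps_0}-1}\cdot\frac{D}{\bbE|\Sigma_D|}\cdot B\by$, where $\Sigma_D$ is a sum of $D$ i.i.d.\ Rademacher signs. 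The output is one of $2^{D}$ vertices (plus at most one flag bit), hence $b\le D+1=d\log e+1$. The role of the flattening is the crux: it keeps $\|\boldsymbol{w}\|_2=Ka=\Theta(a)$ rather than the $\Theta(a\sqrt d)$ blow‑up that rounding $\bx$ itself to a hypercube would cause (which would yield an $a^2d^2$, not $a^2d$, rate), and it makes $\boldsymbol{w}$ a vertex with equal‑magnitude coordinates, which is exactly what makes the randomized‑response step unbiased up to a dimension‑only scalar.

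The verification then has three pieces. \emph{Privacy}: conditioned on $\boldsymbol{w}$, every output vertex $\bv$ gets probability $2^{1-D}$ times a number in $\left[\frac1{e^{\eps_0}+1},\frac{e^{\eps_0}}{e^{\eps_0}+1}\right]$ (which endpoint is governed by $\mathrm{sign}\langle\bv,\boldsymbol{w}\rangle$); averaging over the input‑determined law of $\boldsymbol{w}$ is a convex combination and stays in the same interval, so the likelihood ratio between any two inputs is at most $e^{\eps_0}$ — this ``blanket'' argument is what lets one $\eps_0$‑coin privatize the entire $D$‑dimensional output without a composition factor. \emph{Unbiasedness}: because $\boldsymbol{w}$ has all coordinates of magnitude $\frac{Ka}{\sqrt D}$, a short symmetry computation gives $\bbE[s\bh\mid\boldsymbol{w}]=\frac{\bbE|\Sigma_D|}{Ka\sqrt D}\,\boldsymbol{w}$, hence $\bbE[\by\mid\boldsymbol{w}]=\frac{e^{\eps_0}-1}{e^{\eps_0}+1}\cdot\frac{\bbE|\Sigma_D|}{D}\,\boldsymbol{w}$ and $\bbE[g(\by)\mid\bx]=B\,\bbE[\boldsymbol{w}\mid\bx]=BA\bx=\bx$. \emph{Variance}: since $\|\by\|_2=Ka$ deterministically, $\bbE\|g(\by)-\bx\|_2^2\le\|B\|_{\mathrm{op}}^2\,\bbE\|\widehat{\boldsymbol{w}}-\boldsymbol{c}\|_2^2\le\|B\|_{\mathrm{op}}^2\,\bbE\|\widehat{\boldsymbol{w}}\|_2^2=\|B\|_{\mathrm{op}}^2\left(\frac{e^{\eps_0}+1}{e^{\eps_0}-1}\right)^2\frac{D^2}{\bbE|\Sigma_D|^2}K^2a^2$, and since $\bbE|\Sigma_D|^2\ge\frac{(\bbE\Sigma_D^2)^3}{\bbE\Sigma_D^4}\ge\frac D3$ (as $\bbE\Sigma_D^2=D$, $\bbE\Sigma_D^4\le 3D^2$) this is $\le 3\|B\|_{\mathrm{op}}^2K^2\rho\cdot a^2d\left(\frac{e^{\eps_0}+1}{e^{\eps_0}-1}\right)^2$, i.e.\ property (iii). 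The main obstacle is precisely the no‑inflation step together with pinning the constants: one must invoke (or construct) a \emph{deterministic} tight frame of small redundancy whose flat‑coefficient constant $K$ and inverse norm $\|B\|_{\mathrm{op}}$ are small enough that, after the $\frac1n$ averaging, the accumulated constants come out as the stated $6$ and $14$ rather than merely $O(1)$; once the frame is in place, the privacy and communication claims are immediate and the unbiasedness/variance bookkeeping is routine.
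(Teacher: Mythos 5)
Your overall reduction is sound and matches the paper's template: build a single symmetric, $\eps_0$-LDP, unbiased, bounded-variance, $b$-bit mechanism $\calR_2$; let the server average $g(\by_i)$; and get both $r^{2,d}$ and $R^{2,d}$ bounds from independence and a bias--variance decomposition exactly as you wrote. The privacy calculation for your single-coin randomized response (conditioned on $\boldsymbol{w}$, every vertex has probability $2^{1-D}\cdot\tfrac{e^{\eps_0}}{e^{\eps_0}+1}$ or $2^{1-D}\cdot\tfrac{1}{e^{\eps_0}+1}$, and convex averaging over the law of $\boldsymbol{w}$ stays in the closed interval) is correct, as is the identity $\bbE[s\bh\mid\boldsymbol{w}]=\tfrac{\bbE|\Sigma_D|}{Ka\sqrt D}\boldsymbol{w}$ and the bound $(\bbE|\Sigma_D|)^2\ge D/3$.

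However, the flattening step contains a gap that is not merely a matter of "pinning constants": the object you invoke cannot exist. You require a \emph{linear} map $A:\bbR^d\to\bbR^D$ with $\sup_{\|\bx\|_2\le 1}\|A\bx\|_\infty\le K/\sqrt D$ and a left inverse $B$ with $BA=\bI_d$ and $\|B\|_{\mathrm{op}}=\calO(1)$. The first condition says every row of $A$ has $\ell_2$-norm at most $K/\sqrt D$, hence $\|A\|_F^2\le D\cdot K^2/D=K^2$. But $\sqrt d=\|\bI_d\|_F=\|BA\|_F\le\|B\|_{\mathrm{op}}\|A\|_F\le\|B\|_{\mathrm{op}}K$, forcing $K\ge\sqrt d/\|B\|_{\mathrm{op}}=\Omega(\sqrt d)$. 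Plugging this back into your variance bound $3\|B\|_{\mathrm{op}}^2K^2\rho\, a^2 d\left(\tfrac{e^{\eps_0}+1}{e^{\eps_0}-1}\right)^2$ gives $\Theta(a^2 d^2)$, i.e., exactly the $d^2$ rate you correctly said you were trying to avoid, so the mechanism as written does not achieve the claim. Kashin's flat-coefficient representation is fundamentally a \emph{nonlinear} encoding $\bx\mapsto\boldsymbol{c}(\bx)$ with $B\boldsymbol{c}(\bx)=\bx$ (the coefficients are produced by an iterative truncation, not by applying the analysis operator $A$ of the frame), and that is what your argument actually needs. Even after making the encoder nonlinear, your plan still leaves the sharp constants $6$ and $14$ unsubstantiated at redundancy $\rho=\log e\approx 1.44$; no deterministic frame at such small redundancy is known with both $K$ and $\|B\|_{\mathrm{op}}$ controlled well enough to yield these numbers. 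The paper avoids all of this by composing two off-the-shelf unbiased primitives, Duchi et al.'s $\textsf{Priv}$ (which privatizes $\bx$ to a point on a sphere of radius $M\le a\sqrt d\cdot\tfrac{3\sqrt\pi}{4}\tfrac{e^{\eps_0}+1}{e^{\eps_0}-1}$) followed by Mayekar--Tyagi's $\textsf{Quan}$ (unbiased, variance at most twice the squared input norm, $d(\log e+1)$ bits); $\eps_0$-LDP is then automatic by post-processing, unbiasedness and the factor $6$ follow from $3M^2$, and the extra $2a^2/n\le 2a^2d/n$ sampling term yields $14$. Your route trades those modular lemmas for a single genuinely novel private-quantizer, which is an appealing idea, but its nonlinear-encoder dependency and the open constant-tracking at small redundancy are real holes.
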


\begin{theorem}[$\ell_{\infty}$-norm]\label{thm:L-infty-norm_ub}
For any $d,n\geq 1$, $a,\eps_0>0$, we have
\begin{align*}
r_{\eps_0,b,n}^{\infty,d}(a) \leq \frac{a^2 d^2}{n}\left(\frac{e^{\eps_0}+1}{e^{\eps_0}-1}\right)^{2} \quad \text{and} \quad
R_{\eps_0,b,n}^{\infty,d}(a) \leq \frac{4a^2d^2}{n}\left(\frac{e^{\eps_0}+1}{e^{\eps_0}-1}\right)^{2},
\end{align*}
for $b=\log(d)+1$.
\end{theorem}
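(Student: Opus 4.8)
Writing $\kappa:=\frac{e^{\eps_0}+1}{e^{\eps_0}-1}\ge 1$, the plan is to exhibit a single symmetric, private-randomness mechanism $\calR:\calB_\infty^d(a)\to\{0,1\}^b$ with $b=\log d+1$ that is conditionally unbiased with deterministic squared norm $a^2d^2\kappa^2$, and to let the decoder be the Euclidean projection onto $\calB_\infty^d(a)$ of the empirical average of the debiased reports; both bounds will then fall out of a short independence computation. Since $\calB_\infty^d(a)=[-a,a]^d$, every coordinate of the input already sits in $[-a,a]$, so I would use the natural $(\log d+1)$-bit primitive --- uniform coordinate subsampling followed by a one-bit randomized response: on input $\bx$, draw $J$ uniformly from $[d]$; set $w\in\{-1,+1\}$ with $w=+1$ w.p.\ $\tfrac12+\tfrac{\bx_J}{2a}$ (so $\bbE[w\mid J]=\bx_J/a$); and output $(J,z)$, where $z=w$ w.p.\ $\tfrac{e^{\eps_0}}{e^{\eps_0}+1}$ and $z=-w$ otherwise. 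The index costs $\log d$ bits and $z$ costs one, so the budget is met, and all the randomness is the client's own.

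\textbf{Privacy.} Here I would argue that since $J$ is data-independent, $\Pr[\calR(\bx)=(j,\cdot)]=\tfrac1d\Pr[z=\cdot\mid J=j,\bx]$, and for each fixed $j$ the map $t_{\bx}:=\tfrac12+\tfrac{\bx_j}{2a}\in[0,1]\mapsto\Pr[z=+1\mid j,\bx]$ is affine and monotone; hence $\Pr[z=+1\mid j,\bx]$, and likewise $\Pr[z=-1\mid j,\bx]$, always lies in $\big[\tfrac1{e^{\eps_0}+1},\tfrac{e^{\eps_0}}{e^{\eps_0}+1}\big]$. Thus the likelihood ratio between any two inputs at any output is at most $e^{\eps_0}$, so $\calR$ is $\eps_0$-LDP, and being $(\log d+1)$-bit it lies in $\calQ_{(\eps_0,b)}$.

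\textbf{Accuracy.} From $\bbE[z\mid J{=}j,\bx]=\kappa^{-1}\bx_j/a$ one gets that the debiased report $\hat{\bx}_i:=da\kappa\,z_i\,\be_{J_i}$ has $\bbE[\hat{\bx}_i\mid\bx_i]=\bx_i$ and $\|\hat{\bx}_i\|_2^2=a^2d^2\kappa^2$ surely. Taking $\hatx=\Pi_{\calB_\infty^d(a)}\big(\tfrac1n\sum_i\hat{\bx}_i\big)$, and using that $\Pi_{\calB_\infty^d(a)}$ is $1$-Lipschitz and fixes $\barx$, the cross terms vanish by independence and
\[
\bbE\|\hatx-\barx\|_2^2\le\frac1{n^2}\sum_{i=1}^n\bbE\|\hat{\bx}_i-\bx_i\|_2^2\le\frac1{n^2}\sum_{i=1}^n\bbE\|\hat{\bx}_i\|_2^2=\frac{a^2d^2}{n}\kappa^2,
\]
which is the claimed bound on $r_{\eps_0,b,n}^{\infty,d}(a)$. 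For $R_{\eps_0,b,n}^{\infty,d}(a)$ the $\bx_i$ are i.i.d.\ $\sim\bq\in\calP_\infty^d(a)$, the $\hat{\bx}_i$ are then i.i.d.\ with mean $\bmu_{\bq}$, and I would bound $\bbE\|\hatx-\bmu_{\bq}\|_2^2\le\tfrac1n\bbE\|\hat{\bx}_1-\bmu_{\bq}\|_2^2$ via the split $\|\hat{\bx}_1-\bmu_{\bq}\|_2^2\le 2\|\hat{\bx}_1-\bx_1\|_2^2+2\|\bx_1-\bmu_{\bq}\|_2^2$ together with $\bbE\|\bx_1-\bmu_{\bq}\|_2^2\le\bbE\|\bx_1\|_2^2\le a^2d\le a^2d^2\kappa^2$, which gives $\tfrac{4a^2d^2}{n}\kappa^2$.

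\textbf{Where the difficulty is.} I do not expect a genuine obstacle: the estimator is explicit and the analysis elementary. The two places that need care are (i) the privacy accounting for the \emph{composite} output $(J,z)$ --- noticing that releasing the data-independent index is free and that, conditioned on $J$, the bit is an exact $\eps_0$-randomized response --- and (ii) the constant bookkeeping; exact unbiasedness of $\tfrac1n\sum_i\hat{\bx}_i$ would actually remove the factor $4$ from the probabilistic bound, but the cruder split keeps the $\ell_1$/$\ell_2$/$\ell_\infty$ statements uniform. Conceptually, the one real point is that for the $\ell_\infty$ ball --- unlike the $\ell_1$ ball, where coordinate sampling proportional to magnitude avoids a dimension blow-up --- all coordinates can be saturated at $\pm a$, which forces uniform sampling and hence a $\times d$ rescaling applied twice, producing the $d^2$.
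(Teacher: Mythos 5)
Your construction and analysis are essentially the paper's Algorithm~\ref{R_infty-quantize-client} and Lemma~\ref{R_infty-quantize-properties}: the two-stage description (first drawing $w$ with $\bbE[w\mid J]=x_J/a$, then flipping $w$ with the randomized-response bias) collapses to exactly the one-step quantizer~\eqref{L-infty-1-bit-quantizer}, since $\Pr[z{=}{+}1\mid J{=}j,\bx]=\frac12+\frac{x_j}{2a}\cdot\frac{e^{\eps_0}-1}{e^{\eps_0}+1}$ in both, and the debiased report $da\kappa\,z\,\be_J$ coincides with the paper's output $\bz$. The only cosmetic departures are (i) adding a projection $\Pi_{\calB_\infty^d(a)}$ at the decoder, which is harmless but unnecessary --- the paper's plain average already achieves the same bound --- and (ii) carrying out the probabilistic-model split at the per-sample level rather than directly on the averages; both give identical constants, including the factor $4$ in $R_{\eps_0,b,n}^{\infty,d}(a)$.
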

We prove Theorem~\ref{thm:L1-norm_ub}, Theorem~\ref{thm:L2-norm_ub}, and Theorem~\ref{thm:L-infty-norm_ub}, in Section~\ref{subsec:L1-norm_ub_proof}, Section~\ref{subsec:L2-norm_ub_proof}, and Section~\ref{subsec:L-infty-norm_ub_proof}, respectively.

Note that when $\eps_0=\calO(1)$, then the upper and lower bounds on minimax risks match for $p\in[1,2]$.
Furthermore, when $\eps_0\leq1$, then they match for all $p\in[1,\infty]$.

Now we give a general achievability result for any $\ell_p$-norm ball $\calB_p^d(a)$ for any $p\in[1,\infty)$.
For this, we use standard inequalities between different norms, and probabilistically use the mechanisms for $\ell_1$-norm or $\ell_2$-norm with expanded radius of the corresponding ball. 
We assume that every work can pick any mechanisms with the same probability $\bar{p}\in[0,1]$.
This gives the following result, which we prove in Section~\ref{subsec:L-p-norm_ub_proof}.
\begin{corollary}[General $\ell_p$-norm, $p\in[1,\infty)$]\label{corol:L-p-norm_ub}
Suppose clients pick the mechanism for $\ell_1$-norm with probability $\bar{p}\in[0,1]$. Then, 
for any $d,n\geq 1$, $a,\eps_0>0$, we have:
\begin{align}
r_{\eps_0,b,n}^{p,d}(a) \ &\leq \ \bar{p}\,d^{2-\frac{2}{p}}\cdot r_{\eps_0,b,n}^{1,d}(a) + (1-\bar{p})\max\big\lbrace d^{1-\frac{2}{p}}, 1\big\rbrace\cdot r_{\eps_0,b,n}^{2,d}(a), \label{achieve_L-p-norm_worst} \\
R_{\eps_0,b,n}^{p,d}(a) \ &\leq \ \bar{p}\,d^{2-\frac{2}{p}}\cdot R_{\eps_0,b,n}^{1,d}(a) + (1-\bar{p})\max\big\lbrace d^{1-\frac{2}{p}}, 1\big\rbrace\cdot R_{\eps_0,b,n}^{2,d}(a). \label{achieve_L-p-norm_prob}
\end{align}
for $b=\bar{p}\log(d)+(1-\bar{p})d\log(e)+1$. Note that this communication is in expectation, which is taken over the sampling of selecting $\ell_1$ or $\ell_2$ mechanisms.
\end{corollary}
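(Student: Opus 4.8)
The plan is to reduce the general $\ell_p$ statement to the two cases already settled, $\ell_1$ (Theorem~\ref{thm:L1-norm_ub}) and $\ell_2$ (Theorem~\ref{thm:L2-norm_ub}), using only standard inter-norm inequalities and the scale-invariance of the minimax risk. Two preliminary facts drive everything. First, both $r^{p,d}_{\eps_0,b,n}(\cdot)$ and $R^{p,d}_{\eps_0,b,n}(\cdot)$ satisfy $r^{p,d}_{\eps_0,b,n}(a)=a^2\,r^{p,d}_{\eps_0,b,n}(1)$ (and likewise for $R$): rescaling every client input by $a>0$ rescales the estimand by $a$ and the squared $\ell_2$-error by $a^2$, while preserving both the $\eps_0$-LDP and the $b$-bit constraints, so it is an exact bijection between schemes for radius $1$ and schemes for radius $a$. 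Second, for every $\bx\in\bbR^d$ and $p\ge1$ one has $\|\bx\|_1\le d^{1-1/p}\|\bx\|_p$ and $\|\bx\|_2\le\max\{1,d^{1/2-1/p}\}\|\bx\|_p$, hence $\calB_p^d(a)\subseteq\calB_1^d(a_1)$ with $a_1:=d^{1-1/p}a$ and $\calB_p^d(a)\subseteq\calB_2^d(a_2)$ with $a_2:=\max\{1,d^{1/2-1/p}\}\,a$. Consequently any mechanism/decoder pair valid on the enlarged $\ell_1$- or $\ell_2$-ball is in particular valid on $\calB_p^d(a)$.

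With this in hand I would exhibit a mixture mechanism. For any $\eta>0$, fix an $(\eps_0,b_1)$-LDP scheme on $\calB_1^d(1)$ ($b_1=\log(d)+1$) and an $(\eps_0,b_2)$-LDP scheme on $\calB_2^d(1)$ ($b_2=d\log(e)+1$) whose worst-case risks are within $\eta$ of $r^{1,d}_{\eps_0,b_1,n}(1)$ and $r^{2,d}_{\eps_0,b_2,n}(1)$ respectively, and rescale them to schemes $A_1$ for $\calB_1^d(a_1)$ and $A_2$ for $\calB_2^d(a_2)$; by the homogeneity above, $A_1$ has worst-case risk within $\bar p$-scaled $\eta$ of $a_1^2\,r^{1,d}_{\eps_0,b_1,n}(a)=d^{2-2/p}\,r^{1,d}_{\eps_0,b_1,n}(a)$ and $A_2$ of $a_2^2\,r^{2,d}_{\eps_0,b_2,n}(a)=\max\{1,d^{1-2/p}\}\,r^{2,d}_{\eps_0,b_2,n}(a)$. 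Each client independently tosses a coin that is ``$1$'' with probability $\bar p$, runs $A_1$ on outcome ``$1$'' and $A_2$ on outcome ``$2$'', and forwards its message together with the coin outcome (no extra bit is actually needed: for $d\ge2$ the two component message lengths differ, and in any case the ``$+1$'' in the stated budget absorbs the flag). Since both $A_1,A_2$ are $\eps_0$-LDP and the coin is data-independent, the composite is $\eps_0$-LDP, symmetric, and private-coin; its expected communication is $\bar p(\log d+1)+(1-\bar p)(d\log e+1)=\bar p\log d+(1-\bar p)d\log e+1$ bits per client, matching the claimed $b$.

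For the accuracy I would use that $A_1$ and $A_2$ each produce, per client, an estimate $\widehat{\bx}_i$ that is \emph{unbiased} for $\bx_i$ --- the structure underlying the $1/n$ factors in Theorems~\ref{thm:L1-norm_ub}--\ref{thm:L2-norm_ub}. The server reads each client's coin, applies the matching per-client decoder, and outputs $\hatx=\frac1n\sum_{i}\widehat{\bx}_i$. Because the two component estimates are conditionally unbiased given the coin, the law of total variance shows the coin toss contributes neither bias nor a cross term, so $\bbE\|\bx_i-\widehat{\bx}_i\|_2^2=\bar p\,V_1(\bx_i)+(1-\bar p)\,V_2(\bx_i)$ where $V_j(\bx)$ is the per-client mean-squared error of $A_j$ at $\bx$; independence across clients then gives
\[
\sup_{\{\bx_i\}\subseteq\calB_p^d(a)}\bbE\|\barx-\hatx\|_2^2
=\sup_{\{\bx_i\}}\frac1{n^2}\sum_{i=1}^n\bbE\|\bx_i-\widehat{\bx}_i\|_2^2
\le \bar p\, d^{2-2/p}\,r^{1,d}_{\eps_0,b_1,n}(a)+(1-\bar p)\max\{1,d^{1-2/p}\}\,r^{2,d}_{\eps_0,b_2,n}(a)+\eta',
\]
and letting $\eta'\to0$ yields \eqref{achieve_L-p-norm_worst}. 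The probabilistic bound \eqref{achieve_L-p-norm_prob} follows by the identical computation with $\barx$ replaced by $\bmu_\bq$: per-client unbiasedness makes $\hatx$ unbiased for $\bmu_\bq$, and the extra sampling variance $\bbE\|\bx_1-\bmu_\bq\|_2^2$ is already folded into the $R^{1,d}$ and $R^{2,d}$ bounds of Theorems~\ref{thm:L1-norm_ub}--\ref{thm:L2-norm_ub}.

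The step I expect to require the most care is making the ``convex combination'' honest: the number of clients that pick $A_1$ is a random (binomial) quantity, so one cannot simply say ``$\bar p n$ clients run the $\ell_1$-scheme on $n$-client inputs.'' The device that sidesteps this --- and the crux of the argument --- is to phrase the aggregate through the per-client unbiased estimates $\widehat{\bx}_i$, so that the estimator's squared error is literally the average of per-client errors and hence linear in $\bar p$. The only genuine external input is the (easily verified, by construction) fact that the $\ell_1$- and $\ell_2$-mechanisms behind Theorems~\ref{thm:L1-norm_ub}--\ref{thm:L2-norm_ub} have this unbiased per-client form, together with the two elementary norm inequalities used above.
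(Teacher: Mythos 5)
Your proposal is correct and follows essentially the same route as the paper: embed $\calB_p^d(a)$ into the enlarged $\ell_1$- and $\ell_2$-balls via the standard norm inequalities, run the scaled $\calR_1$ or $\calR_2$ mechanism with probability $\bar p$ or $1-\bar p$, and combine the risk bounds of Theorems~\ref{thm:L1-norm_ub} and~\ref{thm:L2-norm_ub} linearly. Your treatment of why the mixture gives a clean convex combination of risks (per-client unbiasedness making the squared error additive, rather than reasoning about the binomial split of clients) is more explicit than the paper's terse argument, but it is the same construction.
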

We can recover Theorem~\ref{thm:L1-norm_ub} by setting $p=1$ and $\bar{p}=1$ and Theorem~\ref{thm:L2-norm_ub} by setting $p=2$ and $\bar{p}=0$.

\section{Compressed and Private Mean Estimation}
\label{sec:PrivMeanEst}

In this section, we study the private mean-estimation problem in the
minimax framework given in Section~\ref{subsec:cp-mean-est}. Note that
in this section we focus on giving $(\eps_{0},b)$-CLDP)
privacy-communication guarantees for the mean-estimation problem and
give the performance of schemes in terms of the associated minimax
risk. This framework is applied at each round of the optimization
problem, and is then converted to the eventual central DP privacy
guarantees using the shuffling framework in Section~
\ref{sec:OptPerf}, yielding the main result Theorem
\ref{thm:main-opt-result} stated in Section~\ref{sec:MainRes}.

This section is divided into six subsections.
We prove the lower bound results (Theorems~\ref{thm:minimax_prob-lb}, \ref{thm:minimax_worst-lb}) in the first two subsections and the achievable results (Theorems~\ref{thm:L1-norm_ub}, \ref{thm:L2-norm_ub}, \ref{thm:L-infty-norm_ub}, and Corollary~\ref{corol:L-p-norm_ub}) in the last four subsections, respectively.

We prove lower bounds for private mechanisms with no communication constraints, and for clarity, we denote such mechanisms by $(\eps,\infty)$-CLDP mechanisms. Our achievable schemes use finite amount of randomness.

For lower bounds, for simplicity, we assume that the inputs come from an $\ell_p$-norm ball of unit radius -- the bounds will be scaled by the factor of $a^2$ if inputs come from an $\ell_p$-norm ball of radius $a$. For convenience, we denote $\calB_p^d(1),\calP_p^d(1),r_{\eps,b,n}^{p,d}(1)$, and $R_{\eps,b,n}^{p,d}(1)$ by $\calB_p^d ,\calP_p^d, r_{\eps,b,n}^{p,d}$, and $R_{\eps,b,n}^{p,d}$, respectively.

\subsection{Lower Bound on $R_{\eps,\infty,n}^{p,d}$: Proof of Theorem~\ref{thm:minimax_prob-lb}}\label{subsec:minimax_prob-lb}
Theorem~\ref{thm:minimax_prob-lb} states separate lower bounds on $R_{\eps,\infty,n}^{p,d}$ depending on whether $p\geq 2$ or $p\leq2$ (at $p=2$, both bounds coincide), and we prove them below in Section~\ref{subsubsec:minimax_prob-lb_p-geq-2} and Section~\ref{subsubsec:minimax_prob-lb_p-leq-2}, respectively. \\
\subsubsection{Lower bound for $p\in[2,\infty]$}\label{subsubsec:minimax_prob-lb_p-geq-2} 
The main idea of the lower bound is to transform the problem to the private mean estimation when the inputs are sampled from Bernoulli distributions.
Recall that $\calP_p^d$ denote the set of all distributions on the $p$-norm ball $\calB^{d}_p$.
Let $\calP^{\text{Bern}}_{p,d}$ denote the set of Bernoulli distributions on $\left\{ 0,\frac{1}{d^{\nicefrac{1}{p}}}\right\}^{d}$, 
i.e., any element of $\calP^{\text{Bern}}_{p,d}$ is a product of $d$ independent Bernoulli distributions, one for each coordinate.
We first prove a lower bound on $R_{\eps,\infty,n}^{p,d}$ when the input distribution belongs to $\calP^{\text{Bern}}_{p,d}$.
\begin{lemma}\label{lem:bernoulli}
For any $p\in[2,\infty]$, we have
\begin{equation}\label{minimax_prob-lb_p-geq-2_interim1}
\inf_{\{\calM_i\}\in\calQ_{(\eps,\infty)}}\inf_{\hatx}\sup_{\bq\in\calP^{\text{Bern}}_{p,d}}\bbE\left\|\bmu_{\bq}-\hatx\left(\by^n\right)\right\|_{2}^{2} \geq \Omega\(d^{1-\frac{2}{p}}\min\left\lbrace 1,\frac{d}{n\min\{\eps,\eps^2\}}\right\rbrace\).
\end{equation}
\end{lemma}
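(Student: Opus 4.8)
The plan is to reduce the problem to the classical locally private lower bound for estimating a product of $d$ Bernoulli parameters, and then run Assouad's method against this product family.

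\textbf{Step 1: reduction by scaling.} Write each input as $\bx_i = d^{-1/p}\bz_i$ with $\bz_i\in\{0,1\}^d$; a mechanism is $(\eps,\infty)$-CLDP on $\bx_i$ iff it is on $\bz_i$. If $\bz_i$ has independent coordinates with $\Pr[z_{ij}=1]=\theta_j$, then $\bmu_{\bq}=d^{-1/p}\theta$ for $\theta=(\theta_1,\dots,\theta_d)\in[0,1]^d$, and $\|\bmu_{\bq}-\hatx\|_2^2=d^{-2/p}\|\theta-\widehat\theta\|_2^2$. Hence the left-hand side of~\eqref{minimax_prob-lb_p-geq-2_interim1} equals $d^{-2/p}M_n(d,\eps)$, where $M_n(d,\eps)$ is the minimax $\ell_2^2$-risk of estimating $\theta\in[0,1]^d$ from $n$ i.i.d.\ draws of $\bigotimes_j\mathrm{Bern}(\theta_j)$ under $\eps$-LDP. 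It therefore suffices to prove $M_n(d,\eps)=\Omega\bigl(d\min\{1,\tfrac{d}{n\min\{\eps,\eps^2\}}\}\bigr)$.

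\textbf{Step 2: Assouad against the Bernoulli hypercube.} Fix $\gamma\in(0,\tfrac12]$ and for $\nu\in\{-1,+1\}^d$ let $\theta_\nu=(\tfrac12+\nu_1\gamma,\dots,\tfrac12+\nu_d\gamma)$. Given any estimator, set $\widehat\nu_j=\mathrm{sign}(\widehat\theta_j-\tfrac12)$, so that $\|\widehat\theta-\theta_\nu\|_2^2\ge\gamma^2\sum_j\mathbf 1[\widehat\nu_j\ne\nu_j]$. Averaging over uniform $\nu$ and applying the two-point (Le Cam) bound coordinatewise gives
\[
M_n(d,\eps)\;\ge\;\gamma^2\sum_{j=1}^d\Bigl(\tfrac12-\tfrac12\bigl\|M_{+j}-M_{-j}\bigr\|_{\mathrm{TV}}\Bigr)\;=\;\tfrac{\gamma^2}{2}\Bigl(d-\sum_{j=1}^d\bigl\|M_{+j}-M_{-j}\bigr\|_{\mathrm{TV}}\Bigr),
\]
where $M_{\pm j}$ is the distribution of the full transcript $\by^n$ when $\nu_j$ is frozen to $\pm1$ and the remaining signs $\nu_{-j}$ are integrated out uniformly.

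\textbf{Step 3: private contraction and optimization.} Each $\by_i$ is an $\eps$-LDP view of $\bz_i$, and the perturbations of distinct coordinates are ``orthogonal'' because $\bz_i$ has independent coordinates; the private analogue of Assouad's inequality (as in the local-privacy minimax literature, e.g.\ \cite{duchi2013local}) then yields $\sum_j\|M_{+j}-M_{-j}\|_{\mathrm{TV}}^2\le c\,n\,(e^{\eps}-1)^2\gamma^2$, so by Cauchy--Schwarz $\sum_j\|M_{+j}-M_{-j}\|_{\mathrm{TV}}\le(e^{\eps}-1)\gamma\sqrt{cnd}$. For $\eps\le1$ this is $O(\gamma\sqrt{n\eps^2 d})$; for $\eps>1$ one instead uses the refinement that an $\eps$-LDP message about a $d$-bit string can discriminate only $O(\eps)$ coordinates' worth of sign tests, replacing $(e^{\eps}-1)^2$ by $O(\eps)$ and giving $O(\gamma\sqrt{n\eps d})$. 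Thus $\sum_j\|M_{+j}-M_{-j}\|_{\mathrm{TV}}=O\bigl(\gamma\sqrt{nd\min\{\eps,\eps^2\}}\bigr)$. Choosing $\gamma=\min\{\tfrac12,\ c'\sqrt{d/(n\min\{\eps,\eps^2\})}\}$ so that this sum is at most $d/2$, Step 2 gives $M_n(d,\eps)\ge\gamma^2 d/4=\Omega\bigl(d\min\{1,\tfrac{d}{n\min\{\eps,\eps^2\}}\}\bigr)$, and multiplying by $d^{-2/p}$ (Step 1) completes the proof.

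\textbf{Main obstacle.} The delicate point is the contraction bound in Step 3 in the large-privacy-budget regime $\eps>1$: the direct DJW-style bound only delivers a factor $(e^{\eps}-1)^2\asymp e^{2\eps}$, which is far too weak, and recovering the correct $\min\{\eps,\eps^2\}$ scaling requires the sharper structural fact that an $\eps$-LDP channel on $\{0,1\}^d$ cannot simultaneously carry discriminative information about all $d$ coordinatewise perturbations, only $O(\eps)$ of them -- exactly the phenomenon that makes coordinate-sampling/Hadamard mechanisms optimal in that regime. Everything else (the scaling reduction, the Assouad decomposition, and the $\gamma$-optimization) is routine.
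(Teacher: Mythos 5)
The paper's proof of this lemma is a one-paragraph reduction: it cites Duchi and Rogers \cite{duchi2019lower} (Corollary~3) for the $\ell_2^2$ minimax lower bound on estimating a product of $d$ Bernoullis under $\eps$-LDP, and then observes that rescaling the support from $\{0,1\}^d$ to $\{0,d^{-1/p}\}^d$ rescales the squared error by $d^{-2/p}$. Your Step~1 is exactly this scaling reduction, and it is correct. Where you depart from the paper is that you try to \emph{reprove} the Duchi--Rogers result from scratch, via Assouad against the Bernoulli hypercube (your Steps~2--3), rather than citing it as a black box.

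For $\eps\le 1$ your Steps~2--3 are a standard and essentially complete argument: the DJW-style private contraction with the $(e^\eps-1)^2\asymp\eps^2$ factor, combined with Cauchy--Schwarz and the $\gamma$-optimization, does yield $M_n(d,\eps)=\Omega(d\min\{1,d/(n\eps^2)\})$. The gap is in the regime $\eps>1$, and you yourself flag it: you assert that one can ``replace $(e^{\eps}-1)^2$ by $O(\eps)$'' because an $\eps$-LDP channel on a $d$-bit string ``can discriminate only $O(\eps)$ coordinates' worth of sign tests,'' but you do not prove that claim, and it is not a routine strengthening of the DJW contraction inequality. This is precisely the nontrivial content of Duchi and Rogers' communication-complexity approach, which is what the paper invokes. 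In other words, you have correctly identified the one lemma that would close the argument, but you have not supplied it; a reader cannot verify the $\eps>1$ case from what you have written.

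A secondary remark: the paper's prose states the Duchi--Rogers bound as $\Omega(\min\{1,d/(n\min\{\eps,\eps^2\})\})$, which is off by a factor of $d$ (the $\ell_2^2$ risk over $d$ coordinates is $\Omega(d\min\{\cdot\})$, as your Step~1 correctly requires). You internalized the correct target, so this is a typo in the paper rather than an error on your part. The upshot is: your scaling reduction matches the paper, your Assouad route is a legitimate alternative for $\eps\le 1$, but for general $\eps$ the shortest sound proof is the paper's — cite the Bernoulli-product LDP lower bound and scale — whereas your version leaves the $\eps>1$ contraction unproved.
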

\begin{proof}
The proof is straightforward from the proof of Duchi and Rogers~\cite[Corollary $3$]{duchi2019lower}.
In their setting, $\calP^{\text{Bern}}_{p,d}$ is supported on $\{0,1\}^d$, and they proved a lower bound of $\Omega\(\min\left\lbrace 1,\frac{d}{n\min\{\eps,\eps^2\}}\right\rbrace\)$. In our setting, since $\calP^{\text{Bern}}_{p,d}$ is supported on $\left\{ 0,\frac{1}{d^{\nicefrac{1}{p}}}\right\}^{d}$, we can simply scale the elements in the support of $\calP^{\text{Bern}}_{p,d}$ by a factor of $1/d^{\nicefrac{1}{p}}$, which will also scale the mean $\bmu_{\bq}$ by the same factor. 
Note that the best estimator $\hatx$ will be equal to the scaled version of the best estimator from \cite[Corollary $3$]{duchi2019lower} with the same value $1/d^{\nicefrac{1}{p}}$. This proves Lemma~\ref{lem:bernoulli}.
\end{proof}

In order to use Lemma~\ref{lem:bernoulli}, first observe that for every $\bx\in\calP^{\text{Bern}}_{p,d}$, we have $\|\bx\|_p\leq 1$, which implies that $\bx\in\calP_p^d$. Thus we have $\calP^{\text{Bern}}_{p,d}\subset\calP_p^d$.
Now our bound on $R_{\eps,\infty,n}^{p,d}$ trivially follows from the following inequalities:
\begin{align}
R_{\eps,\infty,n}^{p,d} = \inf_{\{\calM_i\}\in\calQ_{(\eps,\infty)}}\inf_{\hatx}\sup_{\bq\in\calP_p^d}\bbE\left\|\bmu_{\bq}-\hatx\left(\by^n\right)\right\|_{2}^{2} &\geq \inf_{\{\calM_i\}\in\calQ_{(\eps,\infty)}}\inf_{\hatx}\sup_{\bq\in\calP^{\text{Bern}}_{p,d}}\bbE\left\|\bmu_{\bq}-\hatx\left(\by^n\right)\right\|_{2}^{2} \notag \\
&\geq \Omega\(d^{1-\frac{2}{p}}\min\left\lbrace 1,\frac{d}{n\min\{\eps,\eps^2\}}\right\rbrace\), \label{minimax_prob-lb_p-geq-2_interim2}
\end{align}

where the last inequality follows from \eqref{minimax_prob-lb_p-geq-2_interim1}. \\

\subsubsection{Lower bound for $p\in[1,2]$}\label{subsubsec:minimax_prob-lb_p-leq-2}
Fix an arbitrary $p\in[1,2]$.
Note that $\|\bx\|_p\leq \|\bx\|_1$, which implies that $\calB_1^{d}\subset\calB_p^{d}$, and therefore, 
we have $\calP_1^d\subset\calP_p^d$. These imply that the lower bound derived for $\calP_1^d$ also holds for $\calP_p^d$, i.e., $R_{\eps,\infty,n}^{p,d}\geq R_{\eps,\infty,n}^{1,d}$ holds for any $p\in[1,2]$. 
So, in the following, we only lower-bound $R_{\eps,\infty,n}^{1,d}$.

The main idea of the lower bound is to transform the problem to the
private discrete distribution estimation when the inputs are sampled
from a discrete distribution taken from a simplex in $d$ dimensions.
Recall that $\calP_1^d$ denotes all probability density functions $q$
over the $1$-norm ball $\calB_1^d$.  Note that $q$ may be a continuous
distribution supported over all of $\calB_1^d$.  Let $\hatcalP_1^d$
denote a set of all discrete distributions $\bq$ supported over the
$d$ standard basis vectors $\be_1,\hdots,\be_d$, \emph{i.e.,} the
distribution has support on $\{\be_1,\hdots,\be_d$\}. Since
$\{\be_1,\hdots,\be_d\}\subset \calB_1^d$, we have
$\hatcalP_1^d\subset\calP_1^d$. Moreover, since any $q\in\hatcalP_1^d$
is a discrete distribution, by abusing notation, we describe $q$
through a $d-$dimensional vector $\bq$ of its probability mass
function.  Note that, for any $\bq\in\hatcalP_1^d$, the average over
this distribution is $\bmu_{\bq}=\mathbb{E}_{\bq}[\mathbf{U}]$, where
$\mathbb{E}_{\bq}[\cdot]$ denotes the expectation over the
distribution $\bq$ for a discrete random variable $\mathbf{U}\sim\bq$,
where we denote $q_i=\Pr[\mathbf{U}=\be_i]$. Therefore we have
$\bmu_{\bq}=\sum_{i=1}^dq_i\be_i = (q_1,\hdots,q_d)^T = \bq$, for
every $\bq\in\hatcalP_1^d$. Let $\Delta_d$ denote the probability
simplex in $d$ dimensions. Since the discrete distribution
$q\in\hatcalP_1^d$ is representable as $\bq\in\Delta_d$, we have an
isomorphism between $\Delta_d$ and $\hatcalP_1^d$, \emph{i.e.,} we can
equivalently think of $\hatcalP_1^d=\Delta_d$. Fix arbitrary
$(\eps,\infty)$-CLDP mechanisms $\{\calM_i:i\in[n]\}$ and an estimator
$\hatx$.  Using the above notations and observations, we have:
\begin{equation}\label{minimax_prob-lb_p-leq-2_interim1}
\sup_{\bq\in\calP_1^d}\bbE\left\|\bmu_{\bq}-\hatx\left(\by^n\right)\right\|_{2}^{2} \geq \sup_{\bq\in\hatcalP_1^d}\bbE\left\|\bmu_{\bq}-\hatx\left(\by^n\right)\right\|_{2}^{2} 
= \sup_{\bq\in\hatcalP_1^d}\bbE\left\|\bq-\hatx\left(\by^n\right)\right\|_{2}^{2}.
\end{equation}
Using $\hatcalP_1^d=\Delta_d$, and taking the infimum in \eqref{minimax_prob-lb_p-leq-2_interim1} over all $(\eps,\infty)$-CLDP mechanisms $\{\calM_i:i\in[n]\}$ and estimators $\hatx$, we get
\begin{equation}\label{minimax_prob-lb_p-leq-2_interim2}
\inf_{\lbrace \calM_i\in\calQ_{\left(\eps,\infty\right)}\rbrace}\inf_{\hatx}\sup_{\bq\in\calP_1^d}\bbE\left\|\bmu_{\bq}-\hatx\left(\by^n\right)\right\|_{2}^{2} \geq 
\inf_{\lbrace \calM_i\in\calQ_{\left(\eps,\infty\right)}\rbrace}\inf_{\hatx}\sup_{\bq\in\Delta_d}\bbE\left\|\bq-\hatx\left(\by^n\right)\right\|_{2}^{2}.
\end{equation}
Girgis et al.~\cite[Theorem $1$]{Girgis_SuccRefPriv20} lower-bounded the RHS of \eqref{minimax_prob-lb_p-leq-2_interim2} in the context of characterizing a privacy-utility-randomness tradeoff in LDP. When specializing to our setting, where we are not concerned about the amount of randomness used, their lower bound result gives 
$\inf_{\lbrace \calM_i\in\calQ_{\left(\eps,\infty\right)}\rbrace}\inf_{\hatx}\sup_{\bq\in\Delta_d}\bbE\left\|\bq-\hatx\left(\by^n\right)\right\|_{2}^{2} \geq \Omega\(\min\left\lbrace 1,\frac{d}{n\eps^{2}}\right\rbrace\)$.
Substituting this in \eqref{minimax_prob-lb_p-leq-2_interim2} gives
\begin{equation}\label{minimax_prob-lb_p-leq-2_interim3}
R_{\eps,\infty,n}^{1,d} = \inf_{\lbrace \calM_i\in\calQ_{\left(\eps,\infty\right)}\rbrace}\inf_{\hatx}\sup_{\bq\in\calP_1^d}\bbE\left\|\bmu_{\bq}-\hatx\left(\by^n\right)\right\|_{2}^{2} \geq \Omega\(\min\left\lbrace 1,\frac{d}{n\eps^{2}}\right\rbrace\).
\end{equation}

\subsection{Lower Bound on $r_{\eps,\infty,n}^{p,d}$: Proof of Theorem~\ref{thm:minimax_worst-lb}}\label{subsec:minimax_worst-lb}
Similar to Section~\ref{subsec:minimax_prob-lb}, we prove the lower bound on $r_{\eps,\infty,n}^{p,d}$ separately depending on whether $p\geq2$ or $p\leq2$ (at $p=2$, both bounds coincide) below in Section~\ref{subsubsec:minimax_worst-lb_p-geq-2} and Section~\ref{subsubsec:minimax_worst-lb_p-leq-2}, respectively. 
In both the proofs, the main idea is to transform the worst-case lower bound to the average case lower bound and then use relation between different norms.

\subsubsection{Lower bound for $p\in[2,\infty]$}\label{subsubsec:minimax_worst-lb_p-geq-2}

Fix arbitrary $(\eps,\infty)$-CLDP mechanisms $\{\calM_i:i\in[n]\}$ and an estimator $\hatx$. 
It follows from \eqref{minimax_prob-lb_p-geq-2_interim2} that there exists a distribution $\bq\in\calP_p^d$, such that if we sample $\bx_i^{(q)}\sim\bq$, i.i.d.\ for all $i\in\left[n\right]$ and letting $\by_i=\calM_i(\bx_i^{(q)})$, we would have $\bbE\left\|\bmu_{\bq}-\hatx\left(\by^n\right)\right\|_{2}^{2}\geq \Omega\( d^{1-\frac{2}{p}}\min\left\lbrace 1,\frac{d}{n\min\{\eps,\eps^2\}}\right\rbrace\)$. We have
\begin{align}
\sup_{\{\bx_i\}\in\calB_p^{d}}\bbE\left\|\frac{1}{n}\sum_{i=1}^{n}\bx_i-\hatx\left(\by^n\right)\right\|_{2}^{2} &\stackrel{\text{(a)}}{\geq}
\bbE\left\|\frac{1}{n}\sum_{i=1}^{n}\bx_i^{(q)}-\hatx\left(\by^n\right)\right\|_{2}^{2} \notag \\
&\stackrel{\text{(b)}}{\geq} 
\frac{1}{2}\bbE\left\|\bmu_{\bq}-\hatx\left(\by^n\right)\right\|_{2}^{2} -
\bbE\left\|\frac{1}{n}\sum_{i=1}^{n}\bx_i^{(q)}-\bmu_{\bq}\right\|_{2}^{2} \label{minimax_worst-lb_p-geq-2_interim1} \\
&\stackrel{\text{(c)}}{\geq} \Omega\(d^{1-\frac{2}{p}}\min\left\lbrace 1,\frac{d}{n\min\{\eps,\eps^2\}}\right\rbrace\) - \frac{d^{1-\frac{2}{p}}}{n} \notag \\
&\stackrel{\text{(d)}}{\geq} \Omega\(d^{1-\frac{2}{p}}\min\left\lbrace 1,\frac{d}{n\min\{\eps,\eps^2\}}\right\rbrace\) \label{minimax_worst-lb_p-geq-2_interim2}
\end{align}
In the LHS of (a), the expectation is taken over the randomness of the
mechanisms $\{\calM_i\}$ and the estimator $\hatx$; whereas,
in the RHS of (a), in addition, the expectation is also taken over
sampling $\bx_i$'s from the distribution ${\bq}$. Moreover (a) holds
since the LHS is supremum $\{\bx_i\}\in\calB_p^{d}$ and the RHS of (a)
takes expectation w.r.t.\ a distribution over $\calB_p^{d}$ and hence lower-bounds the
LHS.  The inequality $(b)$ follows from the Jensen's inequality $2\|{\bf
  u}\|_{2}^{2}+2\|{\bf v}\|_{2}^{2}\geq \|\bu + {\bf v}\|_{2}^{2}$
by setting $\bu=\frac{1}{n}\sum_{i=1}^{n}\bx_i^{(q)}-\hatx(\by^n)$ and
${\bf v}=\bmu_{\bq}-\frac{1}{n}\sum_{i=1}^{n}\bx_i^{(q)}$. 
In (c) we used $\bbE\left\|\frac{1}{n}\sum_{i=1}^{n}\bx_i^{(q)}-\bmu_{\bq}\right\|_{2}^{2} \leq \frac{d^{1-\frac{2}{p}}}{n}$, which we show below.
In (d), we assume $\min\{\eps,\eps^2\} \leq \calO(d)$.

Note that for any vector $\bu\in\mathbb{R}^d$, we have $\|\bu\|_{2}\leq d^{\frac{1}{2}-\frac{1}{p}} \|\bu\|_{p}$, for any $p\geq 2$. Since each $\bx_i^{(q)}\in\calB_p^d$, which implies $\|\bx_i^{(q)}\|_p\leq1$, we have that $\|\bx_i^{(q)}\|_2\leq d^{\frac{1}{2}-\frac{1}{p}}$. Hence, $\bbE\|\bx_i^{(q)}\|_{2}^{2} \leq d^{1-\frac{2}{p}}$ holds for all $i\in[n]$. 
Now, since $\bx_i$'s are i.i.d.\ with $\bbE[\bx_i^{(q)}]=\bmu_{\bq}$, we have 
\begin{align}\label{2-norm-bnd}
\bbE\left\|\frac{1}{n}\sum_{i=1}^{n}\bx_i^{(q)}-\bmu_{\bq}\right\|_{2}^{2} = \frac{1}{n^2}\sum_{i=1}^{n}\bbE\left\|\bx_i^{(q)}-\bmu_{\bq}\right\|_{2}^{2} 
\stackrel{\text{(a)}}{\leq} \frac{1}{n^2}\sum_{i=1}^{n}\bbE\left\|\bx_i^{(q)}\right\|_{2}^{2} 
\leq \frac{1}{n^2}\sum_{i=1}^{n}d^{1-\frac{2}{p}}
= \frac{d^{1-\frac{2}{p}}}{n},
\end{align}
where (a) uses $\bbE\|\bx-\bbE[\bx]\|_2^2 \leq \bbE\|\bx\|_2^2$, which holds for any random vector $\bx$.

Taking supremum in \eqref{minimax_worst-lb_p-geq-2_interim2} over all $(\eps,\infty)$-CLDP mechanisms $\{\calM_i:i\in[n]\}$ and estimators $\hatx$, we get
\begin{align}
r_{\eps,\infty,n}^{p,d} = \inf_{\lbrace \calM_i\in\calQ_{\left(\eps,\infty\right)}\rbrace}\inf_{\hatx}\sup_{\{\bx_i\}\in\calB_p^{d}}\bbE\left\|\frac{1}{n}\sum_{i=1}^{n}\bx_i-\hatx\left(\by^n\right)\right\|_{2}^{2} \geq \Omega\(d^{1-\frac{2}{p}}\min\left\lbrace 1,\frac{d}{n\min\{\eps,\eps^2\}}\right\rbrace\).
\end{align}

\subsubsection{Lower bound for $p\in[1,2]$}\label{subsubsec:minimax_worst-lb_p-leq-2} 
Similar to the argument given in Section~\ref{subsubsec:minimax_prob-lb_p-leq-2},
since $r_{\eps,\infty,n}^{p,d} \geq r_{\eps,\infty,n}^{1,d}$ holds for any $p\in[1,2]$, it suffices to lower-bound $r_{\eps,\infty,n}^{1,d}$.

Fix arbitrary $(\eps,\infty)$-CLDP mechanisms $\{\calM_i:i\in[n]\}$ and an estimator $\hatx$. 
It follows from \eqref{minimax_prob-lb_p-leq-2_interim3} that there exists a distribution $\bq\in\calP_p^d$, such that if we sample $\bx_i^{(q)}\sim\bq$, i.i.d.\ for all $i\in\left[n\right]$ and letting $\by_i=\calM_i(\bx_i^{(q)})$, we would have $\bbE\left\|\bmu_{\bq}-\hatx\left(\by^n\right)\right\|_{2}^{2}\geq \Omega\(\min\left\lbrace 1,\frac{d}{n\eps^2}\right\rbrace\)$. 
Now, by the same reasoning using which we obtained \eqref{minimax_worst-lb_p-geq-2_interim1}, we have
\begin{align}
\sup_{\{\bx_i\}\in\calB_p^{d}}\bbE\left\|\frac{1}{n}\sum_{i=1}^{n}\bx_i-\hatx\left(\by^n\right)\right\|_{2}^{2} 
&\geq \frac{1}{2}\bbE\left\|\bmu_{\bq}-\hatx\left(\by^n\right)\right\|_{2}^{2} -
\bbE\left\|\frac{1}{n}\sum_{i=1}^{n}\bx_i^{(q)}-\bmu_{\bq}\right\|_{2}^{2} \notag \\
&\stackrel{\text{(a)}}{\geq} \Omega\(\min\left\lbrace 1,\frac{d}{n\eps^2}\right\rbrace\) - \frac{1}{n} 
\stackrel{\text{(b)}}{\geq} \Omega\(\min\left\lbrace 1,\frac{d}{n\eps^2}\right\rbrace\) \label{minimax_worst-lb_p-leq-2_interim2}
\end{align}
In (a) we used 
\begin{equation}\label{1-norm-bnd}
\bbE\left\|\frac{1}{n}\sum_{i=1}^{n}\bx_i^{(q)}-\bmu_{\bq}\right\|_{2}^{2}\leq \frac{1}{n},
\end{equation} 
which can be obtained by first noting that for any $\bu\in\bbR^d$, we have $\|\bu\|_2\leq \|\bu\|_p$ for $p\in[1,2]$, and then using this in the set of inequalities which give \eqref{2-norm-bnd}. In (b), we assume $\eps\leq\calO(\sqrt{d})$.

Taking supremum in \eqref{minimax_worst-lb_p-geq-2_interim2} over all $(\eps,\infty)$-CLDP mechanisms $\{\calM_i:i\in[n]\}$ and estimators $\hatx$, we get $r_{\eps,\infty,n}^{1,d}\geq \Omega\(\min\left\lbrace 1,\frac{d}{n\eps^2}\right\rbrace\)$.

\subsection{Lower Bound on $r_{\epsilon,b,n}^{p,d}$: Proof of Theorem~\ref{thm:minimax_comm_lb}}\label{subsec:minimax_comm_lb_proof}
Let $M=2^{b}<d$ be the total number of possible outputs of the mechanism $\mathcal{R}$.
Let $\{o_1,o_2,\hdots,o_M\}$ be the set of $M$ possible outputs of $\mathcal{R}$.
For every $i\in[M]$, let $q_i=g(o_i)$.
We can write the $M$ possible outputs of $\mathcal{R}$ as columns of a $d\times M$ matrix $Q=\left[q_1,\ldots,q_M\right]$. Since $M<d$, the rank of the matrix $Q$ is at most $M$. Let $\bx\in\mathbb{R}^{d}$ be a vector in the null space of the matrix $Q$, i.e., $\bx^{T}q_j=0$ for all $j\in\left[M\right]$. Then, we set the sample of each client by $\bx_i=\overline{\bx}= \frac{\bx}{\|\bx\|_{p}}$ for all $i\in\left[n\right]$, and hence, $\bx_i\in\mathcal{B}_{p}^{d}$.  Observe that the estimator $\hat{\bx}=\frac{1}{n}\sum_{i=1}^{n}g\left(\mathcal{M}\left(\bx_i\right)\right)$ is in the column space of the matrix $Q$. Thus, we get
\begin{equation*}
r_{\epsilon,b,n}^{p,d}\geq\mathbb{E}\bigg\|\overline{\bx}-\frac{1}{n}\sum_{i=1}^{n}g\left(\mathcal{R}\left(\bx_i\right)\right)\bigg\|_{2}^{2} 
\stackrel{\left(a\right)}{=}\left\|\overline{\bx}\right\|_{2}^{2}+\mathbb{E}\bigg\|\frac{1}{n}\sum_{i=1}^{n}g\left(\mathcal{R}\left(\bx_i\right)\right)\bigg\|_{2}^{2} 
\geq \max\left\{1,d^{1-\frac{2}{p}}\right\}
\end{equation*} 
where step $\left(a\right)$ follows from the fact that $\overline{\bx}$ is in the null space of $Q$, while the estimator $\hat{\bx}$ is in the column space of $Q$.
This completes the proof of Theorem~\ref{thm:minimax_comm_lb}.

\subsection{Achievability for $\ell_1$-norm Ball: Proof of Theorem~\ref{thm:L1-norm_ub}}\label{subsec:L1-norm_ub_proof}
In this section, we propose an $\eps_0$-LDP mechanism that requires
$\calO\left(\log(d)\right)$-bits of communication per client using
private randomness and $1$-bit of communication per client using
public randomness. In other words we can guarantee
$(\eps_0,\calO\left(\log(d)\right))$-CLDP with private randomness and
$(\eps_0,1)$-CLDP using public randomness.
The proposed mechanism is based on the Hadamard matrix and is inspired from the Hadamard mechanism proposed by Acharya et al.~\cite{acharya2019hadamard}. We assume that $d$ is a power of $2$. Let $\bH_d$ denote the Hadamard matrix of order $d$, which can be constructed by the following recursive mechanism:
\[\bH_{d} = 
\begin{bmatrix}
\bH_{\nicefrac{d}{2}} & \hfill \bH_{\nicefrac{d}{2}} \\
\bH_{\nicefrac{d}{2}} & -\bH_{\nicefrac{d}{2}}
\end{bmatrix} 
\qquad\qquad
\bH_1 = 
\begin{bmatrix}
1
\end{bmatrix}
\]
Client $i$ has an input $\bx_i\in\calB_1^d\left(a\right)$. It computes $\by_i=\frac{1}{\sqrt{d}}\bH_d\bx_i$.
Note that each coordinate of $\by_i$ lies in the interval $\left[-\nicefrac{a}{\sqrt{d}}, \nicefrac{a}{\sqrt{d}}\right]$.
Client $i$ selects $j\sim \textsf{Unif}\left[d\right]$ and quantize $y_{i,j}$ privately according to \eqref{L1-hadamard_1-bit-quantizer} and obtains $\bz_i\in\big\{\pm a\bH_d(j)\big(\frac{e^{\eps_0}+1}{e^{\eps_0}-1}\big)\big\}$, which can be represented using only $1$-bit. Here, $\bH_d(j)$ denotes the $j$-th column of the Hadamard matrix $\bH_d$. Server receives the $n$ messages $\lbrace\bz_1,\ldots,\bz_n\rbrace$ from the clients and outputs their average $\frac{1}{n}\sum_{i=1}^{n}\bz_i$. 
We present this mechanism in Algorithm~\ref{R1-quantize-client} -- we only present the client-side part of the algorithm, as server only averages the messages received from the clients.

\begin{algorithm}
\caption{\textsf{$\ell_1$-MEAN-EST} ($\calR_1$: the client-side algorithm)}
\label{R1-quantize-client}
\begin{algorithmic}[1]
\State \textbf{Input:} Vector $\bx\in\calB_{1}^{d}\left(a\right)$, and local privacy level $\eps_0>0$. 
\State Construct $\by=\frac{1}{\sqrt{d}}\bH_d\bx$
\State Sample $j\sim \textsf{Unif}[d]$ and quantize $y_j$ as follows:
\begin{align}\label{L1-hadamard_1-bit-quantizer}
\bz=\left\{\begin{array}{rl}
+a\bH_d\left(j\right)\left(\frac{e^{\eps_0}+1}{e^{\eps_0}-1}\right)& \text{ w.p. } 
\frac{1}{2} + \frac{\sqrt{d}y_j}{2a}\frac{e^{\eps_0}-1}{e^{\eps_0}+1} \\
-a\bH_d\left(j\right)\left(\frac{e^{\eps_0}+1}{e^{\eps_0}-1}\right)& \text{ w.p. }
\frac{1}{2} - \frac{\sqrt{d}y_j}{2a}\frac{e^{\eps_0}-1}{e^{\eps_0}+1}
\end{array} \right.
\end{align} 
\State Return $\bz$.
\end{algorithmic}
\end{algorithm}
\begin{lemma}\label{R1-quantize-properties} 
The mechanism $\calR_1$ presented in Algorithm~\ref{R1-quantize-client} satisfies the following properties, where $\eps_0>0$:
\begin{enumerate}
\item $\calR_1$ is $\left(\eps_0,\log\left(d\right)+1\right)$-CLDP and requires only $1$-bit of communication using public randomness. 
\item $\calR_1$ is unbiased and has bounded variance, i.e., for every $\bx\in\calB_1^d\left(a\right)$, we have
$$\bbE\left[\calR_1\left(\bx\right)\right]=\bx \quad \text{and} \quad \bbE\|\calR_1\left(\bx\right)-\bx\|_{2}^{2} \leq a^2d\left(\frac{e^{\eps_0}+1}{e^{\eps_0}-1}\right)^{2}.$$
\end{enumerate}
\end{lemma}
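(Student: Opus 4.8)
\textbf{Proof plan for Lemma~\ref{R1-quantize-properties}.}

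The plan is to verify the two properties separately, starting with privacy and communication, then moving to unbiasedness and the variance bound.

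\emph{Privacy and communication.} First I would argue the communication cost: the output $\bz$ takes only two possible values once the index $j$ is fixed (namely $\pm a\bH_d(j)(\frac{e^{\eps_0}+1}{e^{\eps_0}-1})$), so a client can transmit $j\in[d]$ using $\log(d)$ bits together with one sign bit, giving $\log(d)+1$ bits of private-randomness communication; with public (shared) randomness the index $j$ is known to the server, so only the sign bit is needed. For the $\eps_0$-LDP guarantee, I would fix any two inputs $\bx,\bx'\in\calB_1^d(a)$ and compute the worst-case likelihood ratio of outputting a particular $\bz$. Since $j$ is drawn uniformly and independently of $\bx$, the output distribution conditioned on $j$ is a Bernoulli over $\{+a\bH_d(j)(\cdot),-a\bH_d(j)(\cdot)\}$ with bias $\frac12 \pm \frac{\sqrt d\, y_j}{2a}\cdot\frac{e^{\eps_0}-1}{e^{\eps_0}+1}$; because $|y_j|\le a/\sqrt d$, this bias ranges over the interval $[\frac{1}{e^{\eps_0}+1},\frac{e^{\eps_0}}{e^{\eps_0}+1}]$. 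The ratio of the largest to smallest probability of any fixed outcome is therefore at most $e^{\eps_0}$, and summing/marginalizing over the common uniform choice of $j$ preserves this ratio, establishing $\eps_0$-LDP.

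\emph{Unbiasedness.} Conditioned on $j$, $\bbE[\bz\mid j] = a\bH_d(j)(\frac{e^{\eps_0}+1}{e^{\eps_0}-1})\cdot\big(\frac{\sqrt d\, y_j}{a}\cdot\frac{e^{\eps_0}-1}{e^{\eps_0}+1}\big) = \sqrt d\, y_j\, \bH_d(j)$. Taking expectation over $j\sim\textsf{Unif}[d]$ gives $\bbE[\bz] = \frac{\sqrt d}{d}\sum_{j=1}^d y_j\bH_d(j) = \frac{1}{\sqrt d}\bH_d^T\by = \frac{1}{\sqrt d}\bH_d^T\cdot\frac{1}{\sqrt d}\bH_d\bx = \frac{1}{d}\bH_d^T\bH_d\bx$. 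Using $\bH_d^T\bH_d = d\,\bI$ (the Hadamard matrix is symmetric with orthogonal rows), this equals $\bx$, so $\calR_1$ is unbiased. (One should be slightly careful whether $\bH_d$ here denotes the $\pm1$ matrix or its normalization; with the $\pm1$ convention used in the recursion, $\bH_d^T\bH_d = d\bI$ and the normalization factors $1/\sqrt d$ are exactly what make everything consistent.)

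\emph{Variance bound.} Since $\calR_1$ is unbiased, $\bbE\|\bz-\bx\|_2^2 = \bbE\|\bz\|_2^2 - \|\bx\|_2^2 \le \bbE\|\bz\|_2^2$. For any realization, $\|\bz\|_2^2 = a^2\big(\frac{e^{\eps_0}+1}{e^{\eps_0}-1}\big)^2\|\bH_d(j)\|_2^2 = a^2\big(\frac{e^{\eps_0}+1}{e^{\eps_0}-1}\big)^2 d$, because every column of the $\pm1$ Hadamard matrix has squared norm $d$. Hence $\bbE\|\bz\|_2^2 = a^2 d\big(\frac{e^{\eps_0}+1}{e^{\eps_0}-1}\big)^2$, which gives the claimed bound. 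The main obstacle here is essentially bookkeeping the Hadamard normalization constants correctly so that the unbiasedness identity $\bbE[\bz]=\bx$ comes out clean; once that is pinned down, the variance bound is immediate from the deterministic norm of $\bz$ and the LDP bound is a one-line ratio computation.
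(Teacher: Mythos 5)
Your proof is correct and follows essentially the same route as the paper's: you bound communication by encoding the index $j$ plus a sign bit, you obtain $\eps_0$-LDP from the fact that the conditional Bernoulli bias lies in $[\tfrac{1}{e^{\eps_0}+1},\tfrac{e^{\eps_0}}{e^{\eps_0}+1}]$, you get unbiasedness from $\bbE[\bz\mid j]=\sqrt{d}\,y_j\bH_d(j)$ together with $\bH_d\bH_d^T=d\,\bI$, and you get the variance bound from the deterministic fact $\|\bz\|_2^2=a^2d\bigl(\tfrac{e^{\eps_0}+1}{e^{\eps_0}-1}\bigr)^2$. These are exactly the steps the paper takes.
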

We prove Lemma~\ref{R1-quantize-properties} in Appendix~\ref{app:L1-norm-algo1}.

Now we are ready to prove Theorem~\ref{thm:L1-norm_ub}.
Let $\calR_1(\bx)$ denote the output of Algorithm~\ref{R1-quantize-client} on input $\bx$.
As mentioned above, the server employs a simple estimator that simply averages the $n$ received messages, i.e., the server outputs $\hatx(\bz^n)=\frac{1}{n}\sum_{i=1}^n\bz_i=\frac{1}{n}\sum_{i=1}^n\calR_1(\bx_i)$. In the following, first we show the bound on $r_{\eps_0,b,n}^{1,d}\left(a\right)$ and then on $R_{\eps_0,b,n}^{1,d}\left(a\right)$ for $b=\log(d)+1$.
\begin{align}
\text{For }r_{\eps_0,b,n}^{1,d}\left(a\right): \qquad & \sup_{\{\bx_i\}\in\calB_1^{d}\left(a\right)}\bbE\left\|\barx - \hatx(\bz^{n})\right\|_{2}^{2} = 
\sup_{\{\bx_i\}\in\calB_1^{d}\left(a\right)}\bbE\left\|\frac{1}{n}\sum_{i=1}^n\(\bx_i-\calR_1(\bx_i)\)\right\|_{2}^{2} \notag \\
&\qquad\stackrel{\text{(a)}}{=} \sup_{\{\bx_i\}\in\calB_1^{d}\left(a\right)} \frac{1}{n^2}\sum_{i=1}^n\bbE\left\|\bx_i-\calR_1(\bx_i)\right\|_{2}^{2} 
\ \stackrel{\text{(b)}}{\leq}\ \frac{a^2d}{n}\left(\frac{e^{\eps_0}+1}{e^{\eps_0}-1}\right)^{2}, \label{L1-norm_ub_proof_interim2}
\end{align}
where (a) uses the fact that all clients use independent private randomness (which makes the random variables $\bx_i-\calR_1(\bx_i)$ independent for different $i$'s and also that $\calR_1$ is unbiased. (b) uses that $\calR_1$ has bounded variance.
Taking infimum in \eqref{L1-norm_ub_proof_interim2} over all $(\eps_0,b)$-CLDP mechanisms (where $b=\log(d)+1$) and estimators $\hatx$, we have that $r_{\eps_0,b,n}^{1,d}\left(a\right) \leq \frac{a^2d}{n}\left(\frac{e^{\eps_0}+1}{e^{\eps_0}-1}\right)^{2}$,
which is $\calO\(\frac{a^2d}{n\eps_0^2}\)$ when $\eps_0=\calO(1)$.

\begin{align}
\text{For }R_{\eps_0,b,n}^{1,d}\left(a\right): \qquad \sup_{\bq\in\calP_1^d\left(a\right)}\bbE\left\|\bmu_{\bq} - \hatx(\bz^n)\right\|_2^2
&\stackrel{\text{(c)}}{\leq} \sup_{\bq\in\calP_1^d\left(a\right)}\left[2\bbE\left\|\bmu_{\bq} - \barx\right\|_2^2 + 2\bbE\left\|\barx - \hatx(\bz^n)\right\|_2^2 \right] \notag \\
&\stackrel{\text{(d)}}{\leq} \frac{2a^2}{n} + \frac{2a^2d}{n}\left(\frac{e^{\eps_0}+1}{e^{\eps_0}-1}\right)^{2} \label{L1-norm_ub_proof_interim3}
\end{align}
In the LHS of (c), for any $\bq\in\calP_1^d\left(a\right)$, first we generate $n$ i.i.d.\ samples $\bx_1,\hdots,\bx_n$ and then compute $\bz_i=\calR_1(\bx_i)$ for all $i\in[n]$. We use the Jensen's inequality in (c). We used $\bbE\left\|\bmu_{\bq} - \barx\right\|_2^2 \leq \frac{a^2}{n}$ (see \eqref{1-norm-bnd}) in (d).
Taking infimum in \eqref{L1-norm_ub_proof_interim3} over all $(\eps_0,b)$-CLDP mechanisms (where $b=\log(d)+1$) and estimators $\hatx$, we have that $R_{\eps_0,b,n}^{1,d}\left(a\right) \leq \frac{2a^2}{n} + \frac{2a^2d}{n}\left(\frac{e^{\eps_0}+1}{e^{\eps_0}-1}\right)^{2}$, which is $\calO\(\frac{a^2d}{n\eps_0^2}\)$ when $\eps_0=\calO(1)$.

This completes the proof of Theorem~\ref{thm:L1-norm_ub}.

\subsection{Achievability for $\ell_2$-norm Ball: Proof of Theorem~\ref{thm:L2-norm_ub}}\label{subsec:L2-norm_ub_proof}
In this section, we propose an $\eps_0$-LDP mechanism that requires $\mathcal{O}\left(d\right)$-bits of communication per client using private randomness. Our proposed mechanism is a combination of the private-mechanism \textsf{Priv} of Duchi et al.~\cite[Section $4.2.3$]{duchi2018minimax} and the non-private quantization mechanism \textsf{Quan} of Mayekar and Tyagi~\cite[Section $4.2$]{mayekar2020limits}. 
For completeness, we describe both these mechanisms in Algorithm~\ref{l-2-privacy} and Algorithm~\ref{l-2-quantize}, respectively, and our proposed mechanism in Algorithm~\ref{R_2-quantize-client}.
Each client $i$ first privatize its input $\bx_i\in\calB_2^d\left(a\right)$ using \textsf{Priv} and then quantize the privatized result using \textsf{Quan} and sends the final result  $\bz_i=\textsf{Quan}(\textsf{Priv}(\bx_i))$ to the server, which outputs the average of all the received $n$ messages. Since the server is only taking an average of the received messages, we only present the client side of our mechanism in Algorithm~\ref{R_2-quantize-client}.

\begin{algorithm}
\caption{\textsf{$\ell_2$-MEAN-EST} ($\calR_2$: the client-side algorithm)}
\label{R_2-quantize-client}
\begin{algorithmic}[1]
\State \textbf{Input:} Vector $\bx\in\calB_{2}^{d}\left(a\right)$, and local privacy level $\eps_0>0$. 
\State Apply the randomized mechanism $\by=\textsf{Priv}\left(\bx\right)$.
\State Return $\bz=\textsf{Quan}\left(\by\right)$. 
\end{algorithmic}
\end{algorithm}

\begin{algorithm}
\caption{\textsf{Priv} (a private mechanism from \cite{duchi2018minimax})}
\label{l-2-privacy}
\begin{algorithmic}[1]
\State \textbf{Input:} Vector $\bx\in\calB_{2}^{d}\left(a\right)$, and local privacy level $\eps_0>0$. 
\State Compute $\widetilde{\bx}=\left\{\begin{array}{ll}
\hfill +a\frac{\bx}{\|\bx\|_2}&\ \text{w.p.}\ \frac{1}{2}+\frac{\|\bx\|_{2}}{2a}\\
-a\frac{\bx}{\|\bx\|_2}&\ \text{w.p.}\ \frac{1}{2}-\frac{\|\bx\|_{2}}{2a}
\end{array}\right.$
\State Sample $U\sim\text{Bernoulli}\left(\frac{e^{\eps_0}}{e^{\eps_0}+1}\right)$
\State $M \triangleq a\frac{\sqrt{\pi}}{2}\frac{\Gamma\left(\frac{d-1}{2}+1\right)}{\Gamma\left(\frac{d}{2}+1\right)}\frac{e^{\eps_0}+1}{e^{\eps_0}-1}$
\State $\bz=
\begin{cases}
\textsf{Unif}\left(\by:\by^{T}\widetilde{\bx}>0,\|\by\|_2=M\right)& \text{if}\ U=1\\
\textsf{Unif}\left(\by:\by^{T}\widetilde{\bx}\leq0,\|\by\|_2=M\right)& \text{if}\ U=0
\end{cases}$ 
\State Return $\bz$.
\end{algorithmic}
\end{algorithm}
\begin{lemma}[{\cite[Appendix~$I.2$]{duchi2018minimax}}]\label{l-2-privacy-properties} 
The mechanism \textsf{\em Priv} presented in Algorithm~\ref{l-2-privacy} is unbiased and outputs a bounded length vector, i.e., for every $\bx\in\calB_2^d\left(a\right)$, we have
\begin{equation*}
\bbE[\textsf{\em Priv}(\bx)] = \bx \quad \text{and} \quad \|\textsf{\em Priv}(\bx)\|_{2}^{2} = M^2 \leq a^2 d\(\frac{3\sqrt{\pi}}{4}\frac{e^{\eps_0}+1}{e^{\eps_0}-1}\)^2.
\end{equation*} 
\end{lemma}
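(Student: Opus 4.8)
The plan is to follow Duchi et al.~\cite[Appendix~I.2]{duchi2018minimax}, of which Lemma~\ref{l-2-privacy-properties} is essentially a restatement. The norm equality $\|\textsf{Priv}(\bx)\|_2^2=M^2$ requires no argument: Step~5 of Algorithm~\ref{l-2-privacy} always returns a vector drawn uniformly from a hemisphere of the radius-$M$ sphere, so $\|\bz\|_2=M$ with probability one, independently of $\bx$ and of all internal randomness. Thus only two things remain to be shown: the unbiasedness $\bbE[\textsf{Priv}(\bx)]=\bx$, and the bound $M^2\le a^2 d\big(\tfrac{3\sqrt\pi}{4}\tfrac{e^{\eps_0}+1}{e^{\eps_0}-1}\big)^2$.

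For unbiasedness I would use the tower rule, conditioning first on $\widetilde\bx$. First, the randomized rounding in Step~2 is unbiased: $\widetilde\bx$ equals $+a\bx/\|\bx\|_2$ with probability $\tfrac12+\tfrac{\|\bx\|_2}{2a}$ and $-a\bx/\|\bx\|_2$ with the complementary probability, so $\bbE[\widetilde\bx\mid\bx]=a\tfrac{\bx}{\|\bx\|_2}\cdot\tfrac{\|\bx\|_2}{a}=\bx$, and $\|\widetilde\bx\|_2=a$ always. Next fix $\widetilde\bx$ and set $\bu=\widetilde\bx/a$. Conditioned on $U=1$, Step~5 outputs $M\bv$ with $\bv$ uniform over the hemisphere $\{\bv:\|\bv\|_2=1,\ \bv^\top\bu>0\}$; by rotational invariance of the uniform law on the sphere the component of $\bbE[\bv\mid U=1]$ orthogonal to $\bu$ vanishes, so $\bbE[\bv\mid U=1]=c_d\,\bu$, where $c_d:=\bbE[\,|v_1|\,]$ and $v_1$ is the first coordinate of a uniformly random unit vector in $\mathbb{R}^d$; symmetrically $\bbE[\bv\mid U=0]=-c_d\,\bu$. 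Since $U\sim\mathrm{Bernoulli}\!\big(\tfrac{e^{\eps_0}}{e^{\eps_0}+1}\big)$, this gives $\bbE[\bz\mid\widetilde\bx]=M c_d\,\tfrac{e^{\eps_0}-1}{e^{\eps_0}+1}\,\bu$. The constant $c_d$ is evaluated from the fact that $v_1$ has density proportional to $(1-t^2)^{(d-3)/2}$ on $[-1,1]$; a Beta-integral computation gives $c_d=\Gamma(d/2)\big/\!\big(\sqrt\pi\,\Gamma(\tfrac{d+1}{2})\big)$, and $M$ in Algorithm~\ref{l-2-privacy} is chosen precisely so that $M c_d\,\tfrac{e^{\eps_0}-1}{e^{\eps_0}+1}=a$. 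Hence $\bbE[\bz\mid\widetilde\bx]=a\bu=\widetilde\bx$, and taking expectation over $\widetilde\bx$ yields $\bbE[\textsf{Priv}(\bx)\mid\bx]=\bbE[\widetilde\bx\mid\bx]=\bx$.

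For the norm bound, use the defining relation $M c_d\,\tfrac{e^{\eps_0}-1}{e^{\eps_0}+1}=a$, i.e.\ $M=a\cdot\tfrac1{c_d}\cdot\tfrac{e^{\eps_0}+1}{e^{\eps_0}-1}$ with $\tfrac1{c_d}=\sqrt\pi\,\Gamma(\tfrac{d+1}{2})/\Gamma(\tfrac d2)$, and bound the Gamma ratio. By log-convexity of $\Gamma$ (equivalently, Wendel's inequality with parameter $\tfrac12$ at $x=d/2$, which gives $\Gamma(x+\tfrac12)\le x^{1/2}\Gamma(x)$), we get $\Gamma(\tfrac{d+1}{2})/\Gamma(\tfrac d2)\le\sqrt{d/2}$, hence $M\le a\sqrt{\pi/2}\,\sqrt d\,\tfrac{e^{\eps_0}+1}{e^{\eps_0}-1}$. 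Since $\sqrt{\pi/2}=\sqrt\pi/\sqrt2\le\tfrac{3\sqrt\pi}{4}$ (because $1/\sqrt2\le 3/4$), squaring gives $M^2\le a^2 d\big(\tfrac{3\sqrt\pi}{4}\tfrac{e^{\eps_0}+1}{e^{\eps_0}-1}\big)^2$, as required.

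The only step requiring real care is the conditional-mean calculation for the hemisphere sampling: one must justify by rotational invariance that $\bbE[\bv\mid U=1]$ points along $\bu$, evaluate $c_d=\bbE[|v_1|]$ exactly via the single-coordinate marginal of the uniform distribution on $S^{d-1}$, and then check that this matches the normalization built into $M$; after that the Gamma-ratio bound needs only the correct direction of inequality and a constant valid for all $d\ge 1$ (the small cases $d=1,2$ being easy to verify directly). Everything else is routine bookkeeping with the tower rule.
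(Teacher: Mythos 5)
Your reconstruction is correct and follows the same path as Duchi et al.'s own argument (the paper merely cites~\cite[Appendix~I.2]{duchi2018minimax} and gives no proof): unbiasedness of the randomized rounding in Step~2, rotational symmetry reducing the hemisphere mean to $c_d\bu$ with $c_d=\bbE|v_1|$, the Beta-integral identity $c_d=\Gamma(d/2)/\big(\sqrt\pi\,\Gamma((d+1)/2)\big)$ (which checks out, e.g.\ $c_2=2/\pi$, $c_3=1/2$), the normalization $M c_d\,\tfrac{e^{\eps_0}-1}{e^{\eps_0}+1}=a$, and finally Wendel's inequality $\Gamma((d+1)/2)\le\sqrt{d/2}\,\Gamma(d/2)$ together with $1/\sqrt2\le 3/4$ to land on the stated constant.

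One thing you should flag rather than assert: you write that ``$M$ in Algorithm~\ref{l-2-privacy} is chosen precisely so that $M c_d\,\tfrac{e^{\eps_0}-1}{e^{\eps_0}+1}=a$'', but if you substitute the paper's literal formula $M=a\tfrac{\sqrt\pi}{2}\tfrac{\Gamma((d-1)/2+1)}{\Gamma(d/2+1)}\tfrac{e^{\eps_0}+1}{e^{\eps_0}-1}$ into that relation, you get $M c_d\,\tfrac{e^{\eps_0}-1}{e^{\eps_0}+1}=a/d$, not $a$ (since $\Gamma(d/2+1)=(d/2)\Gamma(d/2)$). The value of $M$ that actually yields unbiasedness — and the one your bound $M^2\le a^2d\big(\tfrac{3\sqrt\pi}{4}\tfrac{e^{\eps_0}+1}{e^{\eps_0}-1}\big)^2$ is really tracking, and the one in Duchi et al.\ — is $M=a\,\sqrt\pi\,\tfrac{\Gamma((d+1)/2)}{\Gamma(d/2)}\tfrac{e^{\eps_0}+1}{e^{\eps_0}-1}$, i.e.\ the display in Algorithm~\ref{l-2-privacy} appears to be missing a factor of $d$. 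Your proof implicitly uses the corrected value, which is the right thing to do, but it would be better to verify the normalization explicitly rather than take the algorithm's displayed constant on faith; doing so surfaces the discrepancy immediately.
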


\begin{algorithm}
\caption{\textsf{Quan} (a quantization mechanism from \cite{mayekar2020limits})}
\label{l-2-quantize}
\begin{algorithmic}[1]
\State \textbf{Input:} Vector $\bx\in\calB_{2}^{d}\left(a\right)$, where $a$ is the radius of the ball.
\State Compute $\widetilde{\bx}=\left\{\begin{array}{ll}
\hfill \frac{\bx}{\|\bx\|_1}&\ \text{w.p.}\ \frac{1+\|\bx\|_1}{2 a\sqrt{d}}\\
-\frac{\bx}{\|\bx\|_1}&\ \text{w.p.}\ \frac{1-\|\bx\|_1}{2 a\sqrt{d}}
\end{array}\right.$
\State Generate a discrete distribution $\bmu=\left(|\widetilde{x}_1|,\ldots,|\widetilde{x}_d|\right)$ where $\Pr[\bmu=i]=|\widetilde{x}_i|$.
\State Construct a $d$-dimensional vector $\by$ by sampling $y_j\sim \bmu$ for $j\in\left[d\right]$ 
\State Return $\bz=\frac{1}{d}\sum_{j=1}^{d}\(a\sqrt{d}\cdot\mathrm{sgn}(\widetilde{x}_{y_j})\cdot\be_{y_j}\)$.
\end{algorithmic}
\end{algorithm}
\begin{lemma}[{\cite[Theorem~$4.2$]{mayekar2020limits}}]\label{l-2-quan-properties} 
The mechanism $\textsf{\em Quan}$ presented in Algorithm~\ref{l-2-quantize} is unbiased and has bounded variance, i.e., for every $\bx\in\calB_2^d(a)$, we have
\begin{equation*}
\bbE[\textsf{\em Quan}(\bx)] = \bx \quad \text{and} \quad \bbE\|\textsf{\em Quan}(\bx)-\bx\|_{2}^{2} \leq 2\|\bx\|^2 \leq 2a^2.
\end{equation*} 
Furthermore, it requires $d\(\log(e)+1\)$-bits to represent its output.
\end{lemma}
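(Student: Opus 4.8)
The statement coincides with \cite[Theorem~$4.2$]{mayekar2020limits}, so the plan is to recover it by tracking the two mean-preserving randomized roundings that make up $\textsf{Quan}$, using the tower rule for unbiasedness and the law of total variance (conditioning on the intermediate vector $\widetilde{\bx}$) for the second-moment bound. For unbiasedness: the first rounding step produces $\widetilde{\bx}$ with $\|\widetilde{\bx}\|_1\in\{0,1\}$ and with probabilities calibrated so that $\bbE[\widetilde{\bx}\mid\bx]$ equals a fixed rescaling of $\bx$ (namely $\bx/(a\sqrt{d})$); whenever $\widetilde{\bx}\neq\mathbf 0$ the tuple $(|\widetilde x_1|,\ldots,|\widetilde x_d|)$ is a genuine distribution $\bmu$, so each of the $d$ i.i.d.\ draws contributes $a\sqrt{d}\,\mathrm{sgn}(\widetilde x_{y_j})\be_{y_j}$ with conditional mean $a\sqrt{d}\sum_i|\widetilde x_i|\,\mathrm{sgn}(\widetilde x_i)\be_i=a\sqrt{d}\,\widetilde{\bx}$. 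Hence $\bbE[\bz\mid\widetilde{\bx}]=a\sqrt{d}\,\widetilde{\bx}$, and the tower rule gives $\bbE[\textsf{Quan}(\bx)]=a\sqrt{d}\,\bbE[\widetilde{\bx}\mid\bx]=\bx$.

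For the variance I would split $\bbE\|\bz-\bx\|_2^2=\bbE\big\|\bz-a\sqrt{d}\,\widetilde{\bx}\big\|_2^2+\bbE\big\|a\sqrt{d}\,\widetilde{\bx}-\bx\big\|_2^2$, the cross term vanishing since $\bbE[\bz\mid\widetilde{\bx}]=a\sqrt{d}\,\widetilde{\bx}$. The first term is, conditionally on $\widetilde{\bx}$, the covariance of an average of $d$ i.i.d.\ bounded vectors, so its trace is $\tfrac{1}{d}\cdot a^2 d\cdot\mathrm{tr}\,\mathrm{Cov}\big(\mathrm{sgn}(\widetilde x_y)\be_y\big)=a^2\big(1-\|\widetilde{\bx}\|_2^2\big)\le a^2$; the essential point is that averaging over \emph{exactly} $d$ coordinate samples cancels the factor $d$ introduced by the $a\sqrt{d}$ rescaling. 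The second term is kept $O(\|\bx\|_2^2)$ by the calibration of the first rounding step against $a\sqrt{d}$, and adding the two contributions yields $\bbE\|\textsf{Quan}(\bx)-\bx\|_2^2\le 2\|\bx\|_2^2\le 2a^2$.

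For the communication bound, note that $\bz$ is a fixed rescaling of $\sum_{j=1}^d\mathrm{sgn}(\widetilde x_{y_j})\be_{y_j}$, hence is determined by the signed type of the $d$ draws $y_1,\ldots,y_d$ over the $2d$-symbol alphabet $\{\pm\be_1,\ldots,\pm\be_d\}$; a stars-and-bars count of the number of such types --- the same kind of estimate used in the communication part of Theorem~\ref{thm:main-opt-result} --- shows that $d(\log(e)+1)$ bits suffice. I expect the delicate step to be the variance estimate rather than unbiasedness or the bit count: a careless choice of scalings in the two rounding stages inflates the first term to $\Theta(d\,\|\bx\|_2^2)$, so one must check that each stage is calibrated to leak only $O(\|\bx\|_2^2)$ of variance, and pinning down the absolute constant $2$ requires carrying these scalings through the computation precisely.
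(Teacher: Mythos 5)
This lemma is not proved in the paper: it is a direct import of Theorem~4.2 from Mayekar--Tyagi~\cite{mayekar2020limits}, restated here so that $\calR_2=\textsf{Quan}\circ\textsf{Priv}$ can be analyzed in Lemma~\ref{R_2-quantize-properties}. There is therefore no argument in the paper to compare your sketch against; what you are reconstructing is the proof from the cited reference.

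Taking your reconstruction on its own terms: the structure is right (the two stages are mean-preserving, unbiasedness follows from the tower rule, and the bias--variance decomposition conditioned on $\widetilde{\bx}$ is the natural split), and your bound on the first piece, $a^2\bigl(1-\|\widetilde{\bx}\|_2^2\bigr)\leq a^2$, is correct. But the second piece, $\bbE\bigl\|a\sqrt{d}\,\widetilde{\bx}-\bx\bigr\|_2^2$, is exactly where the content of the theorem lives, and you dispose of it with ``kept $O(\|\bx\|_2^2)$ by the calibration.'' Expanding it gives $\bbE\bigl\|a\sqrt{d}\,\widetilde{\bx}\bigr\|_2^2 - \|\bx\|_2^2$; since $\widetilde{\bx}$ has $\ell_2$-norm $\|\bx\|_2/\|\bx\|_1$ whenever it is nonzero, this quantity is governed by the ratio $\|\bx\|_1/\|\bx\|_2$ and by $\Pr[\widetilde{\bx}\neq\mathbf{0}]$, and it is not obviously $O(\|\bx\|_2^2)$ without invoking the specific inequalities $\|\bx\|_2\leq\|\bx\|_1\leq\sqrt{d}\,\|\bx\|_2$ and the precise normalization of step~2. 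You flag this as the delicate step, which is honest, but to claim the lemma you would need to actually carry out this computation --- asserting ``$O(\|\bx\|_2^2)$'' is a gap, not a proof, and it is the gap that contains the constant $2$.

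Two smaller points worth fixing. First, as written, the two probabilities in step~2 of Algorithm~\ref{l-2-quantize} sum to $1/(a\sqrt{d})$, not to $1$; your sketch silently assumes a third outcome $\widetilde{\bx}=\mathbf{0}$ absorbing the rest of the mass, and both the calibration $\bbE[\widetilde{\bx}]=\bx/(a\sqrt{d})$ and the variance accounting depend on that reading, so it should be stated. Second, for the bit count you treat the $d$ draws as a type over the $2d$ symbols $\{\pm\be_1,\ldots,\pm\be_d\}$ chosen independently, but within one run of the algorithm the sign attached to coordinate $i$ is the \emph{fixed} quantity $\mathrm{sgn}(\widetilde{x}_i)$: the output is determined by an unsigned type over $[d]$ together with the sign pattern of $\widetilde{\bx}$ on the support of that type, which is a strictly smaller set than arbitrary signed types. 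The free-$2d$-alphabet stars-and-bars count of $\binom{3d-1}{d}$ gives roughly $d\log_2(3e)$ bits, which is looser than the stated $d(\log_2 e+1)$; recovering the constant in the lemma requires the tighter count matched to the actual output structure.
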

Note that the radius $a$ in Lemma~\ref{l-2-quan-properties} is equal to the length of any output of \textsf{Priv}, which is $M$ (see line 4 of Algorithm~\ref{l-2-privacy}).

\begin{lemma}\label{R_2-quantize-properties}
The mechanism $\calR_2$ presented in Algorithm~\ref{R_2-quantize-client} satisfies the following properties, where $\eps_0>0$:
\begin{enumerate}
\item $\calR_2$ is $\left(\eps_0,d(\log(e)+1)\right)$-CLDP.
\item $\calR_2$ is unbiased and has bounded variance, i.e., for every $\bx\in\calB_2^d\left(a\right)$, we have 
$$\bbE\left[\calR_2\left(\bx\right)\right]=\bx \quad \text{and} \quad \bbE\|\calR_2\left(\bx\right)-\bx\|_{2}^{2}\leq 6 a^2 d\(\frac{e^{\eps_0}+1}{e^{\eps_0}-1}\)^{2}.$$
\end{enumerate}
\end{lemma}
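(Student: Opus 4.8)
The plan is to obtain both claims by composing the two ingredient mechanisms, viewing \textsf{Quan} as a data-independent randomized post-processing of the output of \textsf{Priv}. \textbf{Privacy and communication.} The mechanism \textsf{Priv} of Algorithm~\ref{l-2-privacy} is $\eps_0$-LDP, and the internal randomness of \textsf{Quan} is independent of its input; hence by the post-processing property of differential privacy, $\calR_2=\textsf{Quan}\circ\textsf{Priv}$ is $\eps_0$-LDP. The output length is inherited directly from Lemma~\ref{l-2-quan-properties}: \textsf{Quan} (and therefore $\calR_2$) produces an output representable with $d(\log(e)+1)$ bits. Together these give that $\calR_2$ is $(\eps_0,d(\log(e)+1))$-CLDP.

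\textbf{Unbiasedness and variance.} Let $\by:=\textsf{Priv}(\bx)$. Conditioning on $\by$ and using unbiasedness of \textsf{Quan} (Lemma~\ref{l-2-quan-properties}) gives $\bbE[\calR_2(\bx)\mid\by]=\bbE[\textsf{Quan}(\by)\mid\by]=\by$, and taking expectation over $\by$ together with unbiasedness of \textsf{Priv} (Lemma~\ref{l-2-privacy-properties}) yields $\bbE[\calR_2(\bx)]=\bbE[\by]=\bx$. For the variance, write $\calR_2(\bx)-\bx=(\textsf{Quan}(\by)-\by)+(\by-\bx)$; the cross term vanishes in expectation because
\[ \bbE\left[\langle \textsf{Quan}(\by)-\by,\ \by-\bx\rangle \,\middle|\, \by\right] = \langle \bbE[\textsf{Quan}(\by)-\by\mid\by],\ \by-\bx\rangle = 0, \]
so that $\bbE\|\calR_2(\bx)-\bx\|_2^2 = \bbE\|\textsf{Quan}(\by)-\by\|_2^2 + \bbE\|\by-\bx\|_2^2$. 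For the first term, Lemma~\ref{l-2-quan-properties}, applied with ambient radius equal to $\|\by\|_2=M$ (as noted right after that lemma), gives $\bbE[\|\textsf{Quan}(\by)-\by\|_2^2\mid\by]\le 2\|\by\|_2^2=2M^2$. For the second term, unbiasedness of \textsf{Priv} and $\|\textsf{Priv}(\bx)\|_2^2=M^2$ give $\bbE\|\by-\bx\|_2^2=\bbE\|\textsf{Priv}(\bx)\|_2^2-\|\bx\|_2^2=M^2-\|\bx\|_2^2\le M^2$. Hence $\bbE\|\calR_2(\bx)-\bx\|_2^2\le 3M^2$, and substituting $M^2\le a^2 d\left(\frac{3\sqrt{\pi}}{4}\right)^2\left(\frac{e^{\eps_0}+1}{e^{\eps_0}-1}\right)^2$ from Lemma~\ref{l-2-privacy-properties} yields $3M^2\le \frac{27\pi}{16}a^2 d\left(\frac{e^{\eps_0}+1}{e^{\eps_0}-1}\right)^2\le 6a^2 d\left(\frac{e^{\eps_0}+1}{e^{\eps_0}-1}\right)^2$, since $27\pi/16<6$.

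The argument is essentially routine, being a composition of two pre-established lemmas. The only points that require attention are: (i) checking that the randomness of \textsf{Quan} does not depend on $\bx$, so that DP post-processing legitimately applies; (ii) the cancellation of the cross term in the variance decomposition, which is exactly where the conditional unbiasedness $\bbE[\textsf{Quan}(\by)\mid\by]=\by$ is invoked; and (iii) applying the \textsf{Quan} variance bound with the correct ambient radius $M=\|\by\|_2$ rather than $a$. None of these is expected to be a real obstacle, and the final numerical bookkeeping ($27\pi/16<6$) is immediate.
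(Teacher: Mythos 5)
Your proof is correct and follows essentially the same route as the paper's: decompose $\calR_2(\bx)-\bx$ as $(\textsf{Quan}(\by)-\by)+(\by-\bx)$, kill the cross term via conditional unbiasedness of \textsf{Quan}, bound the first term by $2M^2$ using Lemma~\ref{l-2-quan-properties} applied with ambient radius $M$, bound the second by $M^2$ via $\bbE\|\by-\bx\|_2^2\le\bbE\|\by\|_2^2$, and then substitute the bound on $M^2$ from Lemma~\ref{l-2-privacy-properties}, noting $3\cdot(3\sqrt{\pi}/4)^2=27\pi/16<6$. The only difference is that you spell out the cross-term cancellation and the final numerical check more explicitly than the paper does; the substance is identical.
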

We prove Lemma~\ref{R_2-quantize-properties} in Appendix~\ref{app:L-infty-norm_ub_proof}.

Now we are ready to prove Theorem~\ref{thm:L2-norm_ub}.
In order to bound $r_{\eps_0,b,n}^{2,d}\left(a\right)$ for $b= d(\log(e)+1)$, we follow exactly the same steps that we used to bound $r_{\eps_0,b,n}^{1,d}\left(a\right)$ and arrived at \eqref{L1-norm_ub_proof_interim2}. This would give $r_{\eps_0,b,n}^{2,d}\left(a\right)\leq  \frac{6a^2 d}{n}\left(\frac{e^{\eps_0}+1}{e^{\eps_0}-1}\right)^{2}$, which is $\calO\(\frac{a^2 d}{n\eps_0^2}\)$ when $\eps_0=\calO(1)$.
To bound $R_{\eps_0,b,n}^{2,d}\left(a\right)$, first note that when $\bx_1,\hdots,\bx_n\in\calB_{2}^d\left(a\right)$, then we have from \eqref{1-norm-bnd} that $\bbE\left\|\bmu_{\bq} - \barx\right\|_2^2 \leq \frac{a^2}{n}$. Here $\bq\in\calP_{2}^d\left(a\right)$ and $\bx_1,\hdots,\bx_n$ are sampled from $\bq$ i.i.d. Now, following exactly the same steps that we used to bound $R_{\eps_0,b,n}^{1,d}\left(a\right)$ and arrived at \eqref{L1-norm_ub_proof_interim3}. This would give $R_{\eps_0,b,n}^{2,d}\left(a\right) \leq \frac{2a^2}{n} + \frac{12 a^2 d}{n}\left(\frac{e^{\eps_0}+1}{e^{\eps_0}-1}\right)^{2}$ for $b= d(\log(e)+1)$. Note that $R_{\eps_0,b,n}^{2,d}\left(a\right)=\calO\(\frac{a^2 d}{n\eps_0^2}\)$ when $\eps_0=\calO(1)$.

This completes the proof of Theorem~\ref{thm:L2-norm_ub}.

\subsection{Achievability for $\ell_{\infty}$-norm Ball: Proof of Theorem~\ref{thm:L-infty-norm_ub}}\label{subsec:L-infty-norm_ub_proof}
In this section, we propose an $\eps_0$-LDP mechanism that requires $\mathcal{O}\left(\log\left(d\right)\right)$-bits per client using private randomness and $1$-bit of communication per client using public randomness. 
Each client $i$ has an input $\bx_i\in\calB_{\infty}^d\left(a\right)$.
It selects $j\sim \textsf{Unif}[d]$ and quantize $x_{i,j}$ according to \eqref{L-infty-1-bit-quantizer} and obtains $\bz_{i}\in\big\lbrace \pm a d\big(\frac{e^{\eps_0}+1}{e^{\eps_0}-1}\big)\be_j\big\rbrace$, which can be represented using only $1$ bit, where $\be_j$ is the $j$'th standard basis vector in $\bbR^d$. Client $i$ sends $\bz_i$ to the server. 
Server receives the $n$ messages $\lbrace\bz_1,\ldots,\bz_n\rbrace$ from the clients and outputs their average $\frac{1}{n}\sum_{i=1}^{n}\bz_i$. 
We present this mechanism in Algorithm~\ref{R_infty-quantize-client} -- we only present the client-side part of the algorithm, as server only averages the messages received from the clients.

\begin{algorithm}
\caption{\textsf{$\ell_\infty$-MEAN-EST} ($\calR_{\infty}$: the client-side algorithm)}
\label{R_infty-quantize-client}
\begin{algorithmic}[1]
\State \textbf{Input:} Vector $\bx\in\calB_{\infty}^{d}\left(a\right)$, and local privacy level $\eps_0>0$. 
\State Sample $j\sim \textsf{Unif}[d]$ and quantize $x_j$ as follows:
\begin{align}\label{L-infty-1-bit-quantizer}
\bz=\left\{\begin{array}{rl}
+a d\left(\frac{e^{\eps_0}+1}{e^{\eps_0}-1}\right)\be_j& \text{ w.p. } \frac{1}{2}+\frac{x_j}{2a}\frac{e^{\eps_0}-1}{e^{\eps_0}+1} \\
-a d\left(\frac{e^{\eps_0}+1}{e^{\eps_0}-1}\right)\be_j& \text{ w.p. } \frac{1}{2}-\frac{x_j}{2a}\frac{e^{\eps_0}-1}{e^{\eps_0}+1} 
\end{array} \right.
\end{align}
where $\be_j$ is the $j$'th standard basis vector in $\bbR^d$
\State Return $\bz$.
\end{algorithmic}
\end{algorithm}
\begin{lemma}\label{R_infty-quantize-properties} 
The mechanism $\calR_\infty$ presented in Algorithm~\ref{R_infty-quantize-client} satisfies the following properties, where $\eps_0>0$:
\begin{enumerate}
\item $\calR_\infty$ is $\left(\eps_0,\log\left(d\right)+1\right)$-CLDP and requires only $1$-bit of communication using public randomness. 
\item $\calR_{\infty}$ is unbiased and has bounded variance, i.e., for every $\bx\in\calB_{\infty}^d\left(a\right)$, we have
$$\bbE\left[\calR_\infty\left(\bx\right)\right]=\bx \quad \text{and} \quad \bbE\|\calR_\infty\left(\bx\right)-\bx\|_{2}^{2}\leq a^2 d^{2}\left(\frac{e^{\eps_0}+1}{e^{\eps_0}-1}\right)^{2}.$$
\end{enumerate}
\end{lemma}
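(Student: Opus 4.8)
The plan is to verify the two claimed properties of $\calR_\infty$ separately. For the privacy-and-communication claim, I would first observe that the output $\bz$ always has the form $\pm a d\(\frac{e^{\eps_0}+1}{e^{\eps_0}-1}\)\be_j$ for some $j\in[d]$, so it is fully described by the index $j$ (requiring $\log d$ bits) plus the sign bit, giving $b=\log(d)+1$ bits total; with public randomness, $j$ is shared, so only the sign bit needs to be sent. For the $\eps_0$-LDP guarantee, I would fix any output value $\bz=+ad\(\frac{e^{\eps_0}+1}{e^{\eps_0}-1}\)\be_j$ and compute $\Pr[\calR_\infty(\bx)=\bz] = \frac1d\(\frac12 + \frac{x_j}{2a}\frac{e^{\eps_0}-1}{e^{\eps_0}+1}\)$. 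Since $|x_j|\leq a$ (as $\bx\in\calB_\infty^d(a)$), the factor in parentheses ranges over $\[\frac{1}{e^{\eps_0}+1},\frac{e^{\eps_0}}{e^{\eps_0}+1}\]$, so the ratio of this probability for two different inputs $\bx,\bx'$ is at most $e^{\eps_0}$; the same holds for the negative-sign outputs, which establishes $\eps_0$-LDP.

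For unbiasedness, I would condition on the sampled coordinate $j$: $\bbE[\bz \mid j] = ad\(\frac{e^{\eps_0}+1}{e^{\eps_0}-1}\)\be_j \cdot \(\frac{x_j}{a}\frac{e^{\eps_0}-1}{e^{\eps_0}+1}\) = d\, x_j \be_j$, where the two sign-cases combine and the ugly constants cancel. Taking expectation over $j\sim\textsf{Unif}[d]$ then gives $\bbE[\bz] = \frac1d\sum_{j=1}^d d\,x_j\be_j = \bx$, as desired. For the variance bound, since $\bz$ is unbiased it suffices to bound $\bbE\|\bz\|_2^2$; because $\|\be_j\|_2=1$ and the magnitude of every output is exactly $ad\(\frac{e^{\eps_0}+1}{e^{\eps_0}-1}\)$, we get $\bbE\|\bz\|_2^2 = a^2 d^2\(\frac{e^{\eps_0}+1}{e^{\eps_0}-1}\)^2$ deterministically, and then $\bbE\|\bz-\bx\|_2^2 = \bbE\|\bz\|_2^2 - \|\bx\|_2^2 \leq a^2 d^2\(\frac{e^{\eps_0}+1}{e^{\eps_0}-1}\)^2$.

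I do not expect any genuine obstacle here — the proof is a routine computation essentially mirroring the argument for $\calR_1$ in Lemma~\ref{R1-quantize-properties}, the key difference being that here the $\ell_\infty$ structure lets each coordinate of $\bx$ live independently in $[-a,a]$, so no Hadamard rotation is needed and the per-coordinate scaling factor is $d$ (the number of coordinates we subsample from) rather than $\sqrt d$. The only point requiring a little care is checking that the probabilities in \eqref{L-infty-1-bit-quantizer} are genuinely valid (lie in $[0,1]$), which follows from $|x_j|\leq a$ and $\frac{e^{\eps_0}-1}{e^{\eps_0}+1}\in[0,1)$; and making sure the variance computation uses that $\bx$ is unbiasedly estimated so that the cross term vanishes. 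The full argument will be deferred to the appendix, analogous to the treatment of Lemma~\ref{R1-quantize-properties}.
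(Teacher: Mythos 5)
Your proposal is correct and takes essentially the same approach as the paper: bound the worst-case probability ratio by $e^{\eps_0}$ using $|x_j|\leq a$, verify unbiasedness by direct computation of the conditional expectation, and bound the variance by the second moment via $\bbE\|\bz-\bx\|_2^2 = \bbE\|\bz\|_2^2 - \|\bx\|_2^2$ together with the fact that $\|\bz\|_2 = ad\left(\frac{e^{\eps_0}+1}{e^{\eps_0}-1}\right)$ deterministically. If anything, your privacy computation (fixing a single output $\bz$ and computing its probability directly) is written a bit more cleanly than the paper's display, which contains an extraneous sum over coordinates.
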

We prove Lemma~\ref{R_infty-quantize-properties} in Appendix~\ref{app:L-infty-norm-algo}.

Now we are ready to prove Theorem~\ref{thm:L-infty-norm_ub}.
In order to bound $r_{\eps_0,b,n}^{\infty,d}\left(a\right)$ for $b= \log\left(d\right)+1$, we follow exactly the same steps that we used to bound $r_{\eps_0,b,n}^{1,d}\left(a\right)$ and arrived at \eqref{L1-norm_ub_proof_interim2}. This would give $r_{\eps_0,b,n}^{\infty,d}\left(a\right)\leq \frac{a^2 d^2}{n}\left(\frac{e^{\eps_0}+1}{e^{\eps_0}-1}\right)^{2}$, which is $\calO\(\frac{a^2 d^2}{n\eps_0^2}\)$ when $\eps_0=\calO(1)$.
To bound $R_{\eps_0,b,n}^{\infty,d}\left(a\right)$, first note that when $\bx_1,\hdots,\bx_n\in\calB_{\infty}^d\left(a\right)$, then we have from~\eqref{2-norm-bnd} (by substituting $p=\infty$) that $\bbE\left\|\bmu_{\bq} - \barx\right\|_2^2 \leq \frac{a^2 d}{n}$. Here $\bq\in\calP_{\infty}^d\left(a\right)$ and $\bx_1,\hdots,\bx_n$ are sampled from $\bq$ i.i.d. Now, following exactly the same steps that we used to bound $R_{\eps_0,b,n}^{1,d}\left(a\right)$ and arrived at \eqref{L1-norm_ub_proof_interim3}. This would give $R_{\eps_0,b,n}^{\infty,d}\left(a\right)\leq \frac{2a^2 d}{n} + \frac{2a^2d^2}{n}\left(\frac{e^{\eps_0}+1}{e^{\eps_0}-1}\right)^{2,d}$ for $b= \log\left(d\right)+1$.
Note that $R_{\eps_0,b,n}^{\infty,d}\left(a\right)=\calO\(\frac{a^2 d^2}{n\eps_0^2}\)$ when $\eps_0=\calO(1)$.

This completes the proof of Theorem~\ref{thm:L-infty-norm_ub}.

\subsection{Achievability for $\ell_{p}$-norm Ball for $p\in[1,\infty)$: Proof of Corollary~\ref{corol:L-p-norm_ub}}\label{subsec:L-p-norm_ub_proof}
In this section, first we propose two $\epsilon_0$-LDP mechanisms for $\ell_{p}$-norm ball $\calB_p^d(a)$ for $p\in[1,\infty)$ based on the inequalities between different norms, and our final mechanism will be chosen probabilistically from these two.
The first mechanism, which we denote by $\calR_p^{(1)}$, is based on the private mechanism $\calR_1$ (presented in Algorithm~\ref{R1-quantize-client}) that requires $\mathcal{O}\left(\log\left(d\right)\right)$ bits per client. The second mechanism, which we denote by $\calR_p^{(2)}$ is based on the private mechanism $\calR_2$ (presented in Algorithm~\ref{R_2-quantize-client}) that requires $\mathcal{O}\left(d\right)$ bits per client. 
Observe that for any $1\leq p\leq q\leq\infty $, using the relation between different norms ($\|\bu\|_q \leq \|\bu\|_p \leq d^{\frac{1}{p}-\frac{1}{q}}\|\bu\|_q$), we have
\begin{equation}
\calB_{q}^{d}\left(a\right)\subseteq \calB_{p}^{d}\left(a\right)\subseteq \calB_{q}^{d}\left(ad^{\frac{1}{p}-\frac{1}{q}} \right).
\end{equation} 

\begin{enumerate}
\item {\it Description of the private mechanism $\calR_p^{(1)}$:} Each client has a vector $\bx_i\in\calB_{p}^{d}\left(a\right)\subseteq \calB_{1}^{d}\left(ad^{1-\frac{1}{p}} \right)$. Thus, each client runs the private mechanism $\calR_1\left(\bx_i\right)$  presented in Algorithm~\ref{R1-quantize-client} with radius $ad^{1-\frac{1}{p}} $. Thus, the mechanism $\calR_p^{(1)}$ for $p\in[1,\infty)$ satisfies the following properties, where $\eps_0>0$:
\begin{itemize}
\item $\calR_p^{(1)}$ is $\left(\eps_0,\log\left(d\right)+1\right)$-CLDP and requires only $1$-bit of communication using public randomness. 
\item $\calR_p^{(1)}$ is unbiased and has bounded variance, i.e., for every $\bx\in\calB_p^d\left(a\right)$, we have
$$\bbE\left[\calR_p^{(1)}\left(\bx\right)\right]=\bx \quad \text{and} \quad \bbE\|\calR_p^{(1)}\left(\bx\right)-\bx\|_{2}^{2} \leq a^2d^{3-\frac{2}{p}}\left(\frac{e^{\eps_0}+1}{e^{\eps_0}-1}\right)^{2}.$$
\end{itemize}

\item {\it Description of the private mechanism $\calR_p^{(2)}$:} Each client has a vector $\bx_i\in\calB_{p}^{d}\left(a\right)\subseteq \calB_{2}^{d}\left(a\max\lbrace d^{\frac{1}{2}-\frac{1}{p}},1\rbrace \right)$. 
Thus, each client runs the private mechanism $\calR_2\left(\bx_i\right)$ presented in Algorithm~\ref{R_2-quantize-client} with radius $a\max\lbrace d^{\frac{1}{2}-\frac{1}{p}}, 1\rbrace$. Thus, the mechanism $\calR_p^{(2)}$ for $p\in[1,\infty)$ satisfies the following properties, where $\eps_0>0$:
\begin{itemize}
\item $\calR_p^{(2)}$ is $\left(\eps_0,d\left(\log\left(e\right)+1\right)\right)$-CLDP. 
\item $\calR_p^{(2)}$ is unbiased and has bounded variance, i.e., for every $\bx\in\calB_p^d\left(a\right)$, we have
$$\bbE\left[\calR_p^{(2)}\left(\bx\right)\right]=\bx \quad \text{and} \quad \bbE\|\calR_p^{(2)}\left(\bx\right)-\bx\|_{2}^{2} \leq 6a^2\max\lbrace d^{2-\frac{2}{p}},d\rbrace\left(\frac{e^{\eps_0}+1}{e^{\eps_0}-1}\right)^{2}.$$
\end{itemize}
\end{enumerate}
Note that $\calR_p^{(1)}$ requires low communication and has high variance, whereas, $\calR_p^{(2)}$ requires high communication and has low variance: $\calR_p^{(2)}$ requires exponentially more communication than $\calR_p^{(1)}$, whereas, $\calR_p^{(1)}$ has a factor of $d$ more variance than $\calR_p^{(2)}$.

To define our final mechanism $\calR_p$ for any norm $p\in[1,\infty)$, we choose $\calR_p^{(1)}$  with probability $\bar{p}$ and $\calR_p^{(2)}$ with probability $(1-\bar{p})$, where $\bar{p}$ is any number in $[0,1]$. 
Note that $\calR_p$ is $\eps_0$-LDP and requires $\bar{p}\log(d)+(1-\bar{p})d\log(e)+1$ expected communication, where expectation is taken over the sampling of choosing $\calR_p^{(1)}$ or $\calR_p^{(2)}$. We have the following bounds on $r_{\eps_0,b,n}^{p,d}(a)$ and $R_{\eps_0,b,n}^{p,d}(a)$:
\begin{align*}
r_{\eps_0,b,n}^{p,d}(a) \ &\leq \ \bar{p}\,d^{2-\frac{2}{p}}r_{\eps_0,b,n}^{1,d}(a) + (1-\bar{p})\max\lbrace d^{1-\frac{2}{p}}, 1\rbrace r_{\eps_0,b,n}^{2,d}(a) \\
\text{For }R_{\eps_0,b,n}^{p,d}(a) \ &\leq \ \bar{p}\,d^{2-\frac{2}{p}}R_{\eps_0,b,n}^{1,d}(a) + (1-\bar{p})\max\lbrace d^{1-\frac{2}{p}}, 1\rbrace R_{\eps_0,b,n}^{2,d}(a)
\end{align*}

This completes the proof of Corollary~\ref{corol:L-p-norm_ub}.
\section{Optimization: Privacy, Communication, and Convergence Analyses}\label{sec:OptPerf}

In this section, we establish the privacy, communication, and convergence guarantees of Algorithm~\ref{algo:optimization-algo} and prove Theorem~\ref{thm:main-opt-result}. We show these three results on privacy, communication, and convergence separately in the next three subsections.

\subsection{Proof of Theorem~\ref{thm:main-opt-result}: Privacy}\label{subsec:main-result_priv}
Recall from Algorithm~\ref{algo:optimization-algo} that each client applies the compressed LDP mechanism $\calR_p$ (hereafter denoted by $\calR$, for simplicity) with privacy parameter $\eps_0$ on each gradient. This implies that the mechanism $\mathcal{A}_{cldp}$ guarantees local differential privacy $\epsilon_0$ for each sample $d_{ij}$ per epoch. Thus, it remains to analyze the central DP of the mechanism $\mathcal{A}_{cldp}$.

Fix an iteration number $t\in[T]$. Let $\calM_t\left(\theta_t,\calD\right)$ denote the private mechanism at time $t$ that takes the dataset $\calD$ and an auxiliary input $\theta_t$ (which is the parameter vector at the $t$'th iteration) and generates the parameter $\theta_{t+1}$ as an output. 
Recall that the input dataset at client $i\in[m]$ is denoted by $\calD_i=\{d_{i1},d_{i2},\hdots,d_{ir}\}\in\frS^r$ and 
$\calD=\bigcup_{i=1}^m\calD_i$ denotes the entire dataset.
Thus, the mechanism $\calM_t$ on any input dataset $\calD=\bigcup_{i=1}^{m}\mathcal{D}_i\in\frS^n$ can be defined as:
\begin{align}
\calM_t(\theta_t;\calD) = \calH_{ks} \circ \samp_{m,k} \(\calG_1,\ldots,\calG_m\), \label{mech-t}
\end{align}
where $\calG_i=\samp_{r,s} \left(\calR(\bx_{i1}^t),\hdots,\calR(\bx_{ir}^t)\right)$ and $\bx_{ij}^t=\nabla_{\theta_t} f(\theta_t;d_{ij}), \forall i\in[m],j\in[r]$. Here, $\calH_{ks}$ denotes the shuffling operation on $ks$ elements and $\samp_{m,k}$ denotes the sampling operation for choosing a random subset of $k$ elements from a set of $m$ elements.

For convenience, in the rest of the proof, we suppress the auxiliary input $\theta_t$ and simply denote $\calM_t(\theta_t;\calD)$ by $\calM_t(\calD)$. We can do this because $\theta_t$ only affects the gradients, and the analysis in this section is for an arbitrary set of gradients.

In the following lemma, we state the privacy guarantee of the mechanism $\calM_t$ for each $t\in[T]$.
\begin{lemma}\label{lemm:priv_opt} 
Let $s=1$ and $q=\frac{k}{mr}$. Suppose $\calR$ is an $\epsilon_0$-LDP mechanism, where $\eps_0\leq \frac{\log\(qn/\log\left(1/\tilde{\delta}\right)\)}{2}$ and $\tilde{\delta}>0$ is arbitrary. Then, for any $t\in\left[T\right]$, the mechanism $\calM_t$ is $\left(\epsbar,\delbar\right)$-DP, where $\epsbar=\ln(1+q(e^{\tilde{\eps}}-1)),\delbar=q\tilde{\delta}$ with
$\tilde{\epsilon} = \mathcal{O}\left(\min\{\eps_0,1\}e^{\eps_0}\sqrt{\frac{\log\left(1/\tilde{\delta}\right)}{qn}}\right)$. In particular, if $\epsilon_0=\mathcal{O}\left(1\right)$, we get $\overline{\epsilon}=\mathcal{O}\left(\epsilon_0\sqrt{\frac{q\log\left(1/\tilde{\delta}\right)}{n}}\right)$.
\end{lemma}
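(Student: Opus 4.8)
The plan is to fix neighbouring datasets $\calD,\calD'$ that differ only in the point $d_{i^{*}j^{*}}$ of client $i^{*}$, so that the gradients $\bx_{ij}^{t}$ agree for all $(i,j)\neq(i^{*},j^{*})$ and the only discrepancy is in the gradient at $(i^{*},j^{*})$. Since $\calR$ is $\eps_0$-LDP, everything reduces to turning this local guarantee into a central one for $\calM_{t}$ by exploiting the two samplings and the shuffler on top. The first observation is that, because $s=1$, the shuffler always receives exactly $k$ messages (one $\calR$-output per sampled client), and the message produced from the distinguishing point reaches the shuffler \emph{only} on the data-independent event $\calE$ that client $i^{*}$ is sampled \emph{and} that $j^{*}$ is the index sampled inside client $i^{*}$. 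Hence $\Pr[\calE]=\frac{k}{m}\cdot\frac1r=q$, and moreover $qn=qmr=k$, so the hypothesis $\eps_0\le\tfrac12\log\!\big(qn/\log(1/\tilde\delta)\big)$ is exactly the precondition of the first part of Lemma~\ref{lemma:amplification_shuffling} for a population of size $k$.

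Next I would condition on the data-independent choices inside $\calM_{t}$ — the $k$ sampled clients and the single index sampled at each. On $\calE^{c}$ every message sent to the shuffler is $\calR$ applied to a gradient common to $\calD$ and $\calD'$, so the two conditional output laws coincide exactly. On $\calE$, conditioned on these choices the shuffler's input is a $k$-tuple whose coordinates are $k-1$ fixed values $\calR(\bg_{1}),\dots,\calR(\bg_{k-1})$ (common to $\calD,\calD'$) together with $\calR$ of the distinguishing gradient under $\calD$ (resp.\ under $\calD'$); these are the shuffles of two neighbouring $k$-tuples, so the first part of Lemma~\ref{lemma:amplification_shuffling} gives that $\calM_{t}(\calD)$ and $\calM_{t}(\calD')$ are, conditionally on $\calE$, $(\tilde\epsilon,\tilde\delta)$-indistinguishable with $\tilde\epsilon=\calO\!\big(\min\{\eps_0,1\}e^{\eps_0}\sqrt{\log(1/\tilde\delta)/k}\big)=\calO\!\big(\min\{\eps_0,1\}e^{\eps_0}\sqrt{\log(1/\tilde\delta)/(qn)}\big)$; averaging over the (data-independent) conditioning, which has the same law under $\calD$ and $\calD'$, preserves this.

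The final step combines the two cases. Writing the output laws as mixtures $\calM_{t}(\calD)\sim(1-q)\rho+qP$ and $\calM_{t}(\calD')\sim(1-q)\rho+qP'$, where $\rho$ is the shared ``$\calE$ fails'' law and $P,P'$ are $(\tilde\epsilon,\tilde\delta)$-indistinguishable, we are exactly in the position treated in the proof of the amplification-by-subsampling lemma (Lemma~\ref{lemma:amplification_sampling}): rerunning that coupling argument — with the inner $(\tilde\epsilon,\tilde\delta)$ guarantee now supplied by shuffling rather than by a black-box DP mechanism — yields that $\calM_{t}$ is $(\epsbar,\delbar)$-DP with $\epsbar=\ln(1+q(e^{\tilde\epsilon}-1))$ and $\delbar=q\tilde\delta$. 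To land the clean $\delbar=q\tilde\delta$ one needs slightly more than ``identical off $\calE$, close on $\calE$'': one also checks, again by coupling the sampling choices, that $\rho$ is itself $(\tilde\epsilon,\tilde\delta)$-indistinguishable from both $P$ and $P'$ (a selection configuration in which $d_{i^{*}j^{*}}$ is used can be coupled, via a single-coordinate change in the shuffler's input, to one in which it is not), and then feeds these relations through the hockey-stick-divergence computation underlying Lemma~\ref{lemma:amplification_sampling}. Finally, for $\eps_0=\calO(1)$ we have $\min\{\eps_0,1\}e^{\eps_0}=\calO(\eps_0)$, hence $\tilde\epsilon=\calO(\eps_0\sqrt{\log(1/\tilde\delta)/(qn)})$ and $\epsbar\le q(e^{\tilde\epsilon}-1)=\calO(q\tilde\epsilon)=\calO(\eps_0\sqrt{q\log(1/\tilde\delta)/n})$, which is the stated simplification. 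I expect the combination step to be the real obstacle: carrying the shuffling bound through the subsampling-amplification coupling without loosening $\epsbar$ or $\delbar$, in particular establishing the auxiliary indistinguishabilities between the ``used'' and ``not used'' conditional laws; a minor additional care point is that the shuffler emits an unordered multiset, so all couplings must be set up on the pre-shuffle tuples.
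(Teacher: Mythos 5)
Your proposal takes essentially the same route as the paper's proof: condition on the probability-$q$ event $\calE$ that the distinguishing point is sampled, invoke the shuffling-amplification lemma on the conditional $k$-tuple (noting $qn=k$), and then close via the subsampling coupling together with the auxiliary indistinguishabilities between the ``used'' and ``not-used'' conditional laws — exactly the paper's inequalities $A_{11}\le e^{\tilde\eps}A'_{11}+\tilde\delta$, $A_{11}\le e^{\tilde\eps}A_{10}+\tilde\delta$, and $A_{11}\le e^{\tilde\eps}A_0+\tilde\delta$, of which the last two are established by the biregular bipartite matching of sampling configurations that you sketch as a ``single-coordinate change in the shuffler's input''. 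You correctly flag those auxiliary relations as the crux; the paper's Appendix~\ref{app:supp_composition_priv} carries out precisely that combinatorial coupling and then finishes with the same convex-combination bookkeeping you describe.
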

We prove Lemma~\ref{lemm:priv_opt} in
Appendix~\ref{app:supp_composition_priv}.  In the statement of
Lemma~\ref{lemm:priv_opt}, we are amplifying the privacy by using the
subsampling as well as shuffling ideas.  For subsampling, note that we
do not pick a uniformly random subset of size $ks$ from $n$
points. So, we cannot directly apply the amplification by subsampling
result stated in Lemma~\ref{lemma:amplification_sampling}. However, as
it turns out that the only property we will need for privacy
amplification by subsampling is that each data point is picked by
probability $q=\frac{ks}{mr}$, which holds true in our setting. See
Appendix~\ref{app:supp_composition_priv} for more details.

Note that the Algorithm $\calA_{cldp}$ is a sequence of $T$ adaptive mechanisms $\calM_1,\hdots,\calM_T$, where each $\calM_t$ for $t\in[T]$ satisfies the privacy guarantee stated in Lemma~\ref{lemm:priv_opt}.
 Now, we invoke the strong composition stated in Lemma~\ref{lemm:strong-composition} to obtain the privacy guarantee of the algorithm $\calA_{cldp}$. We can conclude that for any $\delta'>0$, $\calA_{cldp}$ is $\left(\epsilon,\delta\right)$-DP for
\begin{align*}
\epsilon=\sqrt{2T\log\left(1/\delta^{\prime}\right)}\overline{\epsilon}+T\overline{\epsilon}\left(e^{\overline{\epsilon}}-1\right),\quad \delta=qT\tilde{\delta}+\delta^{\prime},
\end{align*}
where $\epsbar$ is from Lemma~\ref{lemm:priv_opt}.
We have from Lemma~\ref{lemm:strong-composition} that if $\epsbar=\mathcal{O}\left(\sqrt{\frac{\log\left(1/\delta^{\prime}\right)}{T}}\right)$, then $\epsilon=\mathcal{O}\left(\overline{\epsilon}\sqrt{T\log\left(1/\delta^{\prime}\right)}\right)$.
If $\eps_0=\calO(1)$, then we can satisfy this condition on $\epsbar$ by choosing $\eps_0=\calO\(\sqrt{\frac{n\log(1/\delta')}{qT\log(1/\tilde{\delta})}}\)$. By substituting the bound on $\overline{\epsilon}=\mathcal{O}\left(\epsilon_0\sqrt{\frac{q\log\left(1/\tilde{\delta}\right)}{n}}\right)$ from Lemma~\ref{lemm:priv_opt}, we have $\eps=\mathcal{O}\left(\epsilon_0\sqrt{\frac{qT\log\left(1/\tilde{\delta}\right)\log\left(1/\delta'\right)}{n}}\right)$.
By setting $\tilde{\delta}=\frac{\delta}{2qT}$ and $\delta'=\frac{\delta}{2}$, we get $\eps_0=\calO\(\sqrt{\frac{n\log(2/\delta)}{qT\log(2qT/\delta)}}\)$ and $\epsilon = \mathcal{O}\left(\epsilon_0\sqrt{\frac{qT\log\left(2qT/\delta\right)\log\left(2/\delta\right)}{n}}\right)$.
This completes the proof of the privacy part of Theorem~\ref{thm:main-opt-result}.

\subsection{Proof of Theorem~\ref{thm:main-opt-result}: Communication}\label{subsec:optimization_communication}
The $(\eps_0,b)$-CLDP mechanism $\calR_p:\calX\to\calY$ used in Algorithm~\ref{algo:optimization-algo} has output alphabet $\calY=\{1,2,\hdots,B=2^b\}$. So, the output of $\calR_p$ on any input can be represented by $b$ bits. Therefore, the na\"ive scheme for any client to send the $s$ compressed and private gradients requires $sb$ bits per iteration.
We can reduce this communication cost by using the histogram trick from \cite{mayekar2020limits} which was applied in the context of non-private quantization. The idea is as follows.
Since any client applies the {\em same} randomized mechanism $\mathcal{R}_p$ to the $s$ gradients, the output of these $s$ identical mechanisms can be represented accurately using the histogram of the $s$ outputs, which takes value from the set $\mathcal{A}_{B}^s = \lbrace \left(n_1,\ldots,n_{B}\right): \sum_{j=1}^{B}n_j=s \text{ and } n_j\geq0,\forall j\in[B] \rbrace$. Since the cardinality of this set is ${s+B-1 \choose s}\leq \left(\frac{e\left(s+B-1\right)}{s}\right)^{s}$, it requires at most $s\left(\log\left(e\right)+\log\left(\frac{s+B-1}{s}\right)\right)$ bits to send the $s$ compressed gradients. Since the probability that the client is chosen at any time $t\in\left[T\right]$ is given by $\frac{k}{m}$, the expected number of bits per client in Algorithm $\mathcal{A}_{cldp}$ is given by $\frac{k}{m}\times T\times s\left(\log\left(e\right)+\log\left(\frac{s+B-1}{s}\right)\right)$ bits, where expectation is taken over the sampling of $k$ out of $m$ clients in all $T$ iterations.

This completes the proof of the second part of Theorem~\ref{thm:main-opt-result}.
  
\subsection{Proof of Theorem~\ref{thm:main-opt-result} : Convergence} \label{subsec:main-result_utility}

 At iteration $t\in\left[T\right]$ of Algorithm~\ref{algo:optimization-algo}, server averages the $ks$ received compressed and privatized gradients and obtains $\overline{\mathbf{g}}_t=\frac{1}{ks}\sum_{i\in\calU_t}\sum_{j\in\calS_{it}}\mathbf{q}_t(d_{ij})$ (line $12$ of Algorithm~\ref{algo:optimization-algo}) and then updates the parameter vector as $\theta_{t+1}\gets \prod_{\mathcal{C}}\left( \theta_t -\eta_t \overline{\mathbf{g}}_t\right)$. 
 Here, $\mathbf{q}_t(d_{ij})=\calR_p\(\nabla_{\theta_t} f(\theta_t;d_{ij})\)$.
 Since the randomized mechanism $\mathcal{R}_p$ is unbiased, the average gradient $\overline{\mathbf{g}}_t$ is also unbiased, i.e., we have $\mathbb{E}\left[\overline{\mathbf{g}}_t\right]=\nabla_{\theta_t}F\left(\theta_t\right)$, where expectation is taken with respect to the random sampling of clients and the data points as well as the randomness of the mechanism $\mathcal{R}_p$. Now we show that $\overline{\mathbf{g}}_t$ has a bounded second moment.
 \begin{lemma}\label{lem:2nd-moment-bound}
For any $d\in\frS$, if the function $f\left(\theta;.\right):\mathcal{C}\to \mathbb{R}$ is convex and $L$-Lipschitz continuous with respect to the $\ell_g$-norm, which is the dual of $\ell_p$-norm, then we have
\begin{align}
\mathbb{E}\|\overline{\mathbf{g}}_t\|_2^2 \leq L^2\max\lbrace d^{1-\frac{2}{p}},1\rbrace \( 1+ \frac{cd}{qn}\left(\frac{e^{\epsilon_0}+1}{e^{\epsilon_0}-1}\right)^2 \),
\end{align}
where $c$ is a global constant: $c=4$ if $p\in\{1,\infty\}$ and $c=14$ if $p\notin\{1,\infty\}$. 
\end{lemma}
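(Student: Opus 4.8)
The plan is to split $\overline{\mathbf{g}}_t$ into a ``signal'' part and a ``noise'' part and bound each separately. Write $\overline{\mathbf{g}}_t = \hat{\mathbf{g}}_t + \big(\overline{\mathbf{g}}_t - \hat{\mathbf{g}}_t\big)$, where $\hat{\mathbf{g}}_t := \frac{1}{ks}\sum_{i\in\calU_t}\sum_{j\in\calS_{it}}\tilde{\mathbf{g}}_t(d_{ij})$ is the average of the clipped but \emph{un-noised} sampled gradients. Conditioned on the realized set $S$ of the $ks$ sampled index pairs (the gradients themselves being deterministic given $\theta_t$), the perturbations $\{\calR_p(\tilde{\mathbf{g}}_t(d_{ij})) - \tilde{\mathbf{g}}_t(d_{ij})\}_{(i,j)\in S}$ are mutually independent (each client draws fresh private randomness) and mean-zero (since $\calR_p$ is unbiased, by Lemma~\ref{R1-quantize-properties}, Lemma~\ref{R_2-quantize-properties}, or the construction in Section~\ref{subsec:L-p-norm_ub_proof}). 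Hence the cross term $\mathbb{E}\langle \hat{\mathbf{g}}_t,\, \overline{\mathbf{g}}_t - \hat{\mathbf{g}}_t\rangle$ vanishes and $\mathbb{E}\|\overline{\mathbf{g}}_t\|_2^2 = \mathbb{E}\|\hat{\mathbf{g}}_t\|_2^2 + \mathbb{E}\|\overline{\mathbf{g}}_t - \hat{\mathbf{g}}_t\|_2^2$. A convenient feature of this step is that it never uses the client/data sampling being uniform, so the nested non-uniform sampling in the algorithm is harmless here.

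For the signal term, each clipped gradient has $\|\tilde{\mathbf{g}}_t(d_{ij})\|_p \le L$ (by the clipping step, or directly because $f$ is $L$-Lipschitz w.r.t.\ $\ell_g$ and hence $\|\nabla f\|_p \le L$ by \cite[Lemma~2.6]{shalev2012online}), so the norm comparison $\|u\|_2 \le \max\{1, d^{1/2-1/p}\}\,\|u\|_p$ gives $\|\tilde{\mathbf{g}}_t(d_{ij})\|_2 \le \max\{1, d^{1/2-1/p}\}\,L$. Since $\hat{\mathbf{g}}_t$ is an average of such vectors, the triangle inequality bounds $\|\hat{\mathbf{g}}_t\|_2$ by the same quantity pointwise, and therefore $\mathbb{E}\|\hat{\mathbf{g}}_t\|_2^2 \le \max\{1, d^{1-2/p}\}\,L^2$, which is exactly the first term of the claimed bound.

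For the noise term, conditioning on $S$ and using independence and mean-zero, $\mathbb{E}\big[\|\overline{\mathbf{g}}_t - \hat{\mathbf{g}}_t\|_2^2 \,\big|\, S\big] = \frac{1}{(ks)^2}\sum_{(i,j)\in S}\mathbb{E}\|\calR_p(\tilde{\mathbf{g}}_t(d_{ij})) - \tilde{\mathbf{g}}_t(d_{ij})\|_2^2 \le V/(ks)$, where $V$ is the variance bound of $\calR_p$ when run with input radius equal to the gradient-norm bound, which is $L$ in the Lipschitz setting of the lemma. I would then substitute the variance bounds already proved: $V \le L^2 d\,\big(\tfrac{e^{\eps_0}+1}{e^{\eps_0}-1}\big)^2$ for $p=1$ (Lemma~\ref{R1-quantize-properties}); $V \le L^2 d^2\,\big(\tfrac{e^{\eps_0}+1}{e^{\eps_0}-1}\big)^2$ for $p=\infty$ (Lemma~\ref{R_infty-quantize-properties}); and for $p\in(1,\infty)$, running $\calR_p^{(2)}$ (i.e.\ $\calR_2$ with radius $L\max\{d^{1/2-1/p},1\}$) gives $V \le 6L^2\max\{d^{2-2/p},d\}\,\big(\tfrac{e^{\eps_0}+1}{e^{\eps_0}-1}\big)^2 = 6L^2\, d\max\{d^{1-2/p},1\}\,\big(\tfrac{e^{\eps_0}+1}{e^{\eps_0}-1}\big)^2$. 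Since $qn = ks$, in every case $V/(ks) \le L^2\max\{d^{1-2/p},1\}\,\frac{cd}{qn}\,\big(\tfrac{e^{\eps_0}+1}{e^{\eps_0}-1}\big)^2$ with $c=4$ for $p\in\{1,\infty\}$ and $c=14$ for $p\in(1,\infty)$, the constant being deliberately loose so a single $c$ covers all three sub-mechanisms. Adding the signal and noise bounds yields the lemma.

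The argument is essentially bookkeeping once the decomposition is set up; the two points that need care are (i) justifying the variance decomposition and the vanishing cross term despite the nested, non-uniform sampling — which works precisely because we condition on the realized sampled set and only then invoke that the privacy noise is fresh and unbiased — and (ii) matching the stated constant $c$, which requires tracking exactly which sub-mechanism ($\calR_1$, $\calR_\infty$, or $\calR_p^{(2)}$) is invoked in each regime of $p$ and that it is fed the correct radius, so that the earlier variance lemmas apply verbatim. I expect (ii), i.e.\ reconciling the per-regime variance constants with the single $c$, to be the only mildly fiddly part.
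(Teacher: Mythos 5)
Your proof is correct, and it takes a genuinely different (and arguably cleaner) route than the paper. The paper uses the standard bias--variance decomposition $\mathbb{E}\|\overline{\mathbf{g}}_t\|_2^2 = \|\mathbb{E}[\overline{\mathbf{g}}_t]\|_2^2 + \mathbb{E}\|\overline{\mathbf{g}}_t-\mathbb{E}[\overline{\mathbf{g}}_t]\|_2^2$: it bounds the first term by $\|\nabla F(\theta_t)\|_2^2 \le L^2\max\{d^{1-2/p},1\}$, and then bounds the \emph{total} variance (sampling variance plus privacy-noise variance, bundled together) by invoking Corollary~\ref{corol:L-p-norm_ub}. You instead split $\overline{\mathbf{g}}_t = \hat{\mathbf{g}}_t + (\overline{\mathbf{g}}_t - \hat{\mathbf{g}}_t)$ and condition on the realized sampled set $S$, which cleanly isolates the privacy noise as the only stochastic component in the second piece and absorbs the sampling variability into the pointwise bound on $\|\hat{\mathbf{g}}_t\|_2$. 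The two estimates land at the same form because a pointwise bound on $\|\hat{\mathbf{g}}_t\|_2^2$ controls both $\|\mathbb{E}[\hat{\mathbf{g}}_t]\|_2^2$ and the sampling variance at once.

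What your approach buys: the paper's variance step leans on the minimax-risk machinery of Section~\ref{sec:PrivMeanEst}, which is stated either for worst-case \emph{fixed} inputs ($r$) or for \emph{i.i.d.}\ samples from a distribution ($R$); the nested client/data sampling in Algorithm~\ref{algo:optimization-algo} is neither of these, so that step requires a small leap (and the paper's parenthetical ``$\bar{p}=1$, i.e., use the $\ell_2$ mechanism only'' is internally inconsistent -- it should be $\bar{p}=0$). Your conditional decomposition sidesteps this entirely: you only need the privacy noise to be conditionally mean-zero and independent given $S$, which is immediate from the construction, and then you plug in the per-mechanism variance bounds (Lemmas~\ref{R1-quantize-properties}, \ref{R_2-quantize-properties}, \ref{R_infty-quantize-properties}) with radius $L$ directly. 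Your constants per regime are actually tighter ($c\le 1$ for $p\in\{1,\infty\}$, $c\le 6$ for $p\in(1,\infty)$), and you correctly note that rounding up to $4$ and $14$ is harmless slack. The only thing to keep in mind is that the paper's stated constants are chosen to match the probabilistic-model constants in Theorems~\ref{thm:L1-norm_ub}--\ref{thm:L-infty-norm_ub}, which is why they are looser than what you derive; both are valid since the lemma is an inequality.
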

\begin{proof}
Under the conditions of the lemma, we have from \cite[Lemma~$2.6$]{shalev2012online} that $\|\nabla_{\theta} f\left(\theta;d\right)\| \leq L$ for all $d\in\frS$, which implies that $\|\nabla_{\theta}F(\theta)\| \leq L$. 
Thus, we have
\begin{align*}
\mathbb{E}\|\overline{\mathbf{g}}_t\|_2^2&=\|\mathbb{E}\left[\overline{\mathbf{g}}_t\right]\|_2^2+\mathbb{E}\|\overline{\mathbf{g}}_t-\mathbb{E}\left[\overline{\mathbf{g}}_t\right]\|_2^2\\
&\stackrel{\left(a\right)}{\leq} \max\lbrace d^{1-\frac{2}{p}},1\rbrace L^2+\mathbb{E}\|\overline{\mathbf{g}}_t-\mathbb{E}\left[\overline{\mathbf{g}}_t\right]\|_2^2 \\
&\stackrel{\left(b\right)}{\leq}\max\lbrace d^{1-\frac{2}{p}},1\rbrace L^2+\frac{c L^2 \max\lbrace d^{2-\frac{2}{p}},d\rbrace}{ks}\left(\frac{e^{\epsilon_0}+1}{e^{\epsilon_0}-1}\right)^2\\
&\stackrel{\left(c\right)}{=}\max\lbrace d^{1-\frac{2}{p}},1\rbrace L^2+\frac{c L^2 \max\lbrace d^{2-\frac{2}{p}},d\rbrace}{qn}\left(\frac{e^{\epsilon_0}+1}{e^{\epsilon_0}-1}\right)^2,
\end{align*}
where $c$ is a global constant, and $c=4$ if $p\in\{1,\infty\}$ and $c=14$ if $p\notin\{1,\infty\}$. 
Step $\left(a\right)$ follows from the fact that $\|\nabla_{\theta_t}F\left(\theta_t\right)\| \leq L$ together with the norm inequality $\|\bu\|_q\leq\|\bu\|_p\leq d^{\frac{1}{p}-\frac{1}{q}}\|\bu\|_q$ for $1\leq p\leq q\leq\infty$. Step $\left(b\right)$ follows from Corollary~\ref{corol:L-p-norm_ub} with $\overline{p}=1$, i.e., for any $p$-norm, we use the mechanism for $\ell_2$-norm ball only (together with norm inequality) which gives the smallest variance. Step (c) uses $q=\frac{ks}{n}$.
\end{proof}
 
Now, we can use standard SGD convergence results for convex functions. In particular, we use the following result from \cite{shamir2013stochastic}.
\begin{lemma}[SGD Convergence~\cite{shamir2013stochastic}]\label{lem:convergence_sgd} 
Let $F\left(\theta\right)$ be a convex function, and the set $\mathcal{C}$ has diameter $D$. Consider a stochastic gradient descent algorithm $\theta_{t+1}\gets \prod_{\mathcal{C}}\left( \theta_t-\eta_t \mathbf{g}_t\right)$, where $\mathbf{g}_t$ satisfies $\mathbb{E}\left[\mathbf{g}_t\right]=\nabla_{\theta_t}F\left(\theta_t\right)$ and $\mathbb{E}\|\mathbf{g}_t\|_{2}^{2} \leq G^{2}$. By setting $\eta_t=\frac{D}{G\sqrt{t}}$, we get
\begin{equation}
\mathbb{E}\left[F\left(\theta_{T}\right)\right] - F\left(\theta^{*}\right) \leq 2DG\frac{2+\log\left(T\right)}{\sqrt{T}}=\mathcal{O}\left(DG\frac{\log\left(T\right)}{\sqrt{T}}\right).
\end{equation}
\end{lemma}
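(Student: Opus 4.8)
This statement is quoted verbatim from \cite{shamir2013stochastic}, so the plan is simply to invoke that reference; for completeness I sketch how the argument goes. I would start from the one-step inequality that follows from the non-expansiveness of the Euclidean projection $\prod_{\calC}$: for any fixed $w\in\calC$,
\[
\|\theta_{t+1}-w\|_2^2 \;\le\; \|\theta_t-w\|_2^2 - 2\eta_t\langle\bg_t,\theta_t-w\rangle + \eta_t^2\|\bg_t\|_2^2 .
\]
Taking expectations, using $\bbE[\bg_t\mid\theta_t]=\nabla_{\theta_t}F(\theta_t)$, the convexity inequality $\langle\nabla_{\theta_t}F(\theta_t),\theta_t-w\rangle\ge F(\theta_t)-F(w)$, and $\bbE\|\bg_t\|_2^2\le G^2$, one obtains
\[
2\eta_t\,\bbE[F(\theta_t)-F(w)] \;\le\; \bbE\|\theta_t-w\|_2^2 - \bbE\|\theta_{t+1}-w\|_2^2 + \eta_t^2G^2 .
\]

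With $w=\theta^*$, dividing by $2\eta_t$, summing over $t\in[T]$, and applying Abel summation to the telescoping part --- using $\|\theta_t-\theta^*\|_2\le D$ and that $1/\eta_t$ is nondecreasing --- the telescoping contribution is at most $D^2/(2\eta_T)$, while $\tfrac{G^2}{2}\sum_{t=1}^T\eta_t\le DG\sqrt T$. With $\eta_t=D/(G\sqrt t)$ this already gives the \emph{averaged}-iterate guarantee $\frac1T\sum_{t=1}^T\bbE[F(\theta_t)-F(\theta^*)]=\calO\!\left(DG/\sqrt T\right)$, and in particular matches the target rate up to the logarithmic factor.

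The remaining, and only delicate, step is to upgrade this to a bound on the \emph{last} iterate $\theta_T$. Following \cite{shamir2013stochastic}, I would introduce the suffix averages $S_k:=\frac1k\sum_{t=T-k+1}^T\bbE[F(\theta_t)-F(\theta^*)]$, so that $S_T$ is the averaged quantity just controlled and $S_1=\bbE[F(\theta_T)-F(\theta^*)]$ is the target. Applying the displayed one-step inequality with the (random but $\calC$-valued) comparison point $w=\theta_{T-k}$, summing over $t=T-k+1,\dots,T$, and using $F(\theta_{T-k})-F(\theta^*)=S_{k+1}-S_k$, yields after rearrangement a recursion of the form $S_k\le\frac{k}{k+1}S_{k+1}+\frac{1}{k+1}R_k$, where $R_k$ collects the displacement and step-size terms. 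Bounding $R_k$ carefully --- this is the heart of the argument in \cite{shamir2013stochastic}, and requires controlling $\|\theta_t-\theta_{T-k}\|_2$ via the iterate increments rather than crudely using the diameter --- and unrolling the recursion from $k=1$ up to $k=T$ introduces a harmonic-sum factor $\sum_k 1/k=\calO(\log T)$, giving $S_1\le S_T+\calO(DG\log T/\sqrt T)=\calO(DG\log T/\sqrt T)$; tracking the constants reproduces the explicit bound $2DG(2+\log T)/\sqrt T$. The main obstacle is precisely this last conversion: the naive analysis only controls the running average $\frac1T\sum_t\theta_t$, and obtaining the $\log T$ last-iterate rate requires the suffix-averaging recursion, not any new idea about the stochastic gradients themselves.
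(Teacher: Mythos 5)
The paper does not prove this lemma; it is imported verbatim from Shamir and Zhang~\cite{shamir2013stochastic}, so your decision to cite it and sketch the argument is exactly what the paper does. Your outline of the underlying proof is essentially accurate: the non-expansiveness one-step inequality, the averaged-iterate $\calO(DG/\sqrt{T})$ bound via telescoping, and the upgrade to the last iterate via suffix averages $S_k$ with the recursion $S_k \le \frac{k}{k+1}S_{k+1} + \frac{1}{k+1}R_k$ and the resulting harmonic-sum $\log T$ factor is precisely the structure of Shamir--Zhang's Theorem~2. One small inaccuracy: in the constrained, bounded-diameter setting stated here, the crude bound $\|\theta_t - \theta_{T-k}\|_2 \le D$ together with Abel summation \emph{does} suffice to control the telescoping term $R_k$; the finer argument that tracks iterate increments rather than invoking the diameter is what Shamir and Zhang need for their \emph{unconstrained} variant, not for this one. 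This does not affect the correctness of your sketch, since the argument you are sketching is the one in the cited reference.
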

As shown in Lemma~\ref{lem:2nd-moment-bound} and above that Algorithm~\ref{algo:optimization-algo} satisfies the premise of Lemma~\ref{lem:convergence_sgd}. Now, using the bound on $G^2$ from Lemma~\ref{lem:2nd-moment-bound}, we have that the output $\theta_T$ of Algorithm~\ref{algo:optimization-algo} satisfies
\begin{align}\label{general_convergence-2}
\mathbb{E}\left[F\left(\theta_{T}\right)\right] - F\left(\theta^{*}\right) \leq \calO\(\frac{LD\log(T)\max\lbrace d^{\frac{1}{2}-\frac{1}{p}},1\rbrace}{\sqrt{T}} \( 1+ \sqrt{\frac{cd}{qn}}\left(\frac{e^{\epsilon_0}+1}{e^{\epsilon_0}-1}\right) \) \),
\end{align}
where we used the inequality $\sqrt{ 1+ \frac{cd}{qn}\left(\frac{e^{\epsilon_0}+1}{e^{\epsilon_0}-1}\right)^2 } \leq \( 1+ \sqrt{\frac{cd}{qn}}\left(\frac{e^{\epsilon_0}+1}{e^{\epsilon_0}-1}\right) \)$.

Note that if $\sqrt{\frac{cd}{qn}}\left(\frac{e^{\epsilon_0}+1}{e^{\epsilon_0}-1}\right) \leq \calO(1)$, then we recover the convergence rate of vanilla SGD without privacy. So, the interesting case is when $\sqrt{\frac{cd}{qn}}\left(\frac{e^{\epsilon_0}+1}{e^{\epsilon_0}-1}\right) \geq \Omega(1)$, which gives
\begin{equation*}
\mathbb{E}\left[F\left(\theta_{T}\right)\right] - F\left(\theta^{*}\right) \leq \calO\(\frac{LD\log(T)\max\lbrace d^{\frac{1}{2}-\frac{1}{p}},1\rbrace}{\sqrt{T}}\sqrt{\frac{cd}{qn}}\left(\frac{e^{\epsilon_0}+1}{e^{\epsilon_0}-1}\right)\).
\end{equation*}

This completes the proof of the third part of Theorem~\ref{thm:main-opt-result}.

\section{Discussion}\label{sec:Disc}

In this paper we have developed a compressed, private optimization
solution for a problem motivated by federated learning, where
distributed clients jointly build a common learning model. The main
technical contributions were developing order-optimal schemes for
private mean-estimation and combining them with privacy amplification
by sampling (of data and clients) as well as shuffling. We demonstrated
that iterative application of this enables us to get the same privacy,
optimization performance operating point as reported in
~\cite{erlingsson2020encode}, while obtaining order-wise improvement
in the number of bits required, per iteration, thereby getting these
communication gains for ``free''. Moreover, when the functions are
$L$-Lipschitz with respect to the $\ell_2$-norm, our scheme obtains
the optimal excess risk of the central differential privacy obtained
in~\cite{bassily2014private}, while operating in a distributed manner.

There are several open questions which are part of ongoing
investigations. These include sharper privacy analyses for these
schemes, which can improve the constants associated with the
performance parameters. It would also be important to extend these
ideas to non-convex functions and examine their numerical performance
for large-scale neural network models.

\section{Acknowledgment}
This work was supported in part by  a Google Faculty Research Award, NSF grant 1740047, and the UC-NL grant LFR-18-548554.
\bibliographystyle{alpha}
\bibliography{CompPrivRefs}

\appendix
\section{ Proof of Lemma~\ref{lemm:priv_opt}}~\label{app:supp_composition_priv}
Recall that the input dataset at client $i\in[m]$ is denoted by $\calD_i=\{d_{i1},d_{i2},\hdots,d_{ir}\}\in\frS^r$ and 
$\calD=\bigcup_{i=1}^m\calD_i$ denotes the entire dataset.
Recall from \eqref{mech-t} that the mechanism $\calM_t$ on input dataset $\calD$ can be defined as:
\begin{align}
\calM_t(\calD) = \calH_{ks} \circ \samp_{m,k} \(\calG_1,\ldots,\calG_m\),
\end{align}
where $\calG_i=\samp_{r,s} \left(\calR(\bx_{i1}^t),\hdots,\calR(\bx_{ir}^t)\right)$ and $\bx_{ij}^t=\nabla_{\theta_t} f(\theta_t;d_{ij}), \forall i\in[m],j\in[r]$. We define a mechanism $\calZ\left(\calD^{\left(t\right)}\right)=\calH_{ks} \left(\calR\left(\bx_{1}^t\right),\ldots,\calR\left(\bx_{ks}^t\right)\right)$ which is a shuffling of $ks$ outputs of local mechanism $\calR$, where $\calD^{\left(t\right)}$ denotes an arbitrary set of $ks$ data points and we index $\bx_i^t$'s from $i=1$ to $ks$ just for convenience. From the amplification by shuffling result \cite[Corollary~$5.3.1$]{balle2019privacy} (also see Lemma~\ref{lemma:amplification_shuffling}), the mechanism $\calZ$ is $(\tilde{\epsilon},\tilde{\delta})$-DP, where $\tilde{\delta}>0$ is arbitrary, and, if $\eps_0\leq \frac{\log\(ks/\log\left(1/\tilde{\delta}\right)\)}{2}$, then 
\begin{equation}\label{shuffling-priv-1}
\tilde{\epsilon} = \mathcal{O}\left(\min\{\epsilon_0,1\}e^{\epsilon_0}\sqrt{\frac{\log\left(1/\tilde{\delta}\right)}{ks}}\right).
\end{equation} 
Furthermore, when $\epsilon_0=\mathcal{O}\left(1\right)$, we get $\tilde{\epsilon}=\mathcal{O}\left(\epsilon_0\sqrt{\frac{\log\left(1/\tilde{\delta}\right)}{ks}}\right)$. 

Let $\calT\subseteq \lbrace 1,\ldots,m\rbrace$ denote the identities of the $k$ clients chosen at iteration $t$, and for $i\in\calT$, let $\calT_i\subseteq\lbrace 1,\ldots,r\rbrace$ denote the identities of the $s$ data points chosen at client $i$ at iteration $t$.\footnote{Though $\calT$ and $\calT_i, i\in\calT$ may be different at different iteration $t$, for notational convenience, we suppress the dependence on $t$ here.} For any $\calT\in\binom{[m]}{k}$ and $\calT_i\in\binom{[r]}{s},i\in\calT$, define $\overline{\calT}=\left(\calT,\calT_i,i\in\calT\right)$, $\calD^{\calT_i}=\{d_j : j\in\calT_i\}$ for $i\in\calT$, and $\calD^{\overline{\calT}}=\{\calD^{\calT_i} : i\in\calT\}$. Note that $\calT$ and $\calT_i, i\in\calT$ are random sets, where randomness is due to the sampling of clients and of data points, respectively. The mechanism $\calM_t$ can be equivalently written as $\calM_t=\calZ(\calD^{\overline{\calT}})$.

Observe that our sampling strategy is different from subsampling of choosing a uniformly random subset of $ks$ data points from the entire dataset $\calD$. Thus, we revisit the proof of privacy amplification by subsampling (see, for example, \cite{Jonathan2017sampling}) -- which is for uniform sampling -- to compute the privacy parameters of the mechanism $\calM_t$, where sampling is non-uniform. 
Define a dataset $\calD'=\left(\calD'_1\right)\bigcup\left(\cup_{i=2}^{m}\mathcal{D}_i\right)\in\frS^n$, where $\calD'_1=\lbrace d'_{11},d_{12},\ldots,d_{1r}\rbrace$ is different from the dataset $\calD_1$ in the first data point $d_{11}$. Note that $\calD$ and $\calD'$ are neighboring datasets -- where, we assume, without loss of generality, that the differing elements are $d_{11}$ and $d'_{11}$.

In order to show that $\calM_t$ is $(\epsbar,\delbar)$-DP, we need show that for an arbitrary subset $\calS$ of the range of $\calM_t$, we have
\begin{align}
\Pr\left[\calM_t\left(\calD\right)\in \calS\right] &\leq e^{\epsbar}\Pr\left[\calM_t\left(\calD'\right)\in \calS\right]  + \delbar \label{priv-comp-samp-dp1} \\
\Pr\left[\calM_t\left(\calD'\right)\in \calS\right] &\leq e^{\epsbar}\Pr\left[\calM_t\left(\calD\right)\in \calS\right]  + \delbar \label{priv-comp-samp-dp2}
\end{align}
Note that both \eqref{priv-comp-samp-dp1} and \eqref{priv-comp-samp-dp2} are symmetric, so it suffices to prove only one of them. We prove \eqref{priv-comp-samp-dp1} below.

Let $q=\frac{ks}{mr}$. We define conditional probabilities as follows:
\begin{align*}
A_{11}&=\Pr\left[\calZ(\calD^{\overline{\calT}})\in \calS |1\in\calT\ \text{and}\ 1\in\calT_1\right]\\
A'_{11}&=\Pr\left[\calZ(\calD^{'\overline{\calT}})\in \calS |1\in\calT\ \text{and}\ 1\in\calT_1\right]\\
A_{10}&=\Pr\left[\calZ(\calD^{\overline{\calT}})\in \calS |1\in\calT\ \text{and}\ 1\not\in\calT_1\right]=\Pr\left[\calZ(\calD^{'\overline{\calT}})\in \calS |1\in\calT\ \text{and}\ 1\not\in\calT_1\right]\\
A_{0}&=\Pr\left[\calZ(\calD^{\overline{\calT}})\in \calS |1\not\in\calT \right]=\Pr\left[\calZ(\calD^{'\overline{\calT}})\in \calS |1\not\in\calT\right]
\end{align*}
Let $q_1=\frac{k}{m}$ and $q_2=\frac{s}{r}$, and hence $q=q_1q_2$. Thus, we have
\begin{align*}
\Pr\left[\calM_t\left(\calD\right)\in \calS\right] &= qA_{11} + q_1\left(1-q_2\right)A_{10}+\left(1-q_1\right) A_{0}\\
\Pr\left[\calM_t\left(\calD'\right)\in \calS\right] &= qA'_{11} + q_1\left(1-q_2\right)A_{10}+\left(1-q_1\right) A_{0}
\end{align*}  
Note that the mechanism $\calZ$ is $(\tilde{\epsilon},\tilde{\delta})$-DP. 
Therefore, we have 
\begin{align}
A_{11} &\leq e^{\tilde{\epsilon}}A'_{11}+\tilde{\delta} \label{priv-comp-samp-1} \\
A_{11} &\leq e^{\tilde{\epsilon}}A_{10}+\tilde{\delta} \label{priv-comp-samp-2}
\end{align} 
Here \eqref{priv-comp-samp-1} is straightforward, but proving \eqref{priv-comp-samp-2} requires a combinatorial argument, which we give at the end of this proof.

We prove \eqref{priv-comp-samp-dp1} separately for two cases, first when $s=1$ and other when $s>1$; $k$ is arbitrary in both cases.
\subsection{For $s=1$ and arbitrary $k\in[m]$}
Since the mechanism $\calZ$ is $(\tilde{\epsilon},\tilde{\delta})$-DP, in addition to \eqref{priv-comp-samp-1}-\eqref{priv-comp-samp-2}, since $s=1$, we also have the following inequality:
\begin{align}
A_{11} &\leq e^{\tilde{\epsilon}}A_{0}+\tilde{\delta} \label{priv-comp-samp-3}
\end{align} 
Similar to \eqref{priv-comp-samp-2}, proving \eqref{priv-comp-samp-3} requires a combinatorial argument, which we will give at the end of this proof. Note that \eqref{priv-comp-samp-3} only holds for $s=1$ and may not hold for arbitrary $s$.

Inequalities \eqref{priv-comp-samp-1}-\eqref{priv-comp-samp-3} together imply $A_{11} \leq e^{\tilde{\eps}}\min\{A'_{11},A_{10},A_{0}\}+\tilde{\delta}$.
Now we prove \eqref{priv-comp-samp-dp1} for $\overline{\epsilon}=\ln(1+q(e^{\tilde{\epsilon}}-1)$ and $\overline{\delta}=q\tilde{\delta}$. Note that when $s=1$, we have $q_1=\frac{k}{m}$, $q_2=\frac{1}{r}$, and $q=\frac{k}{mr}$.
\begin{align*}
\Pr&\left[\calM_t\left(\calD\right)\in \calS\right] = qA_{11}+q_1\left(1-q_2\right)A_{10}+\left(1-q_1\right)A_{0} \\
&\leq q\(e^{\tilde{\epsilon}}\min\{A'_{11},A_{10},A_{0}\}+\tilde{\delta}\) +q_1\left(1-q_2\right)A_{10}+\left(1-q_1\right)A_{0}\\
&= q\left((e^{\tilde{\epsilon}}-1)\min\{A'_{11},A_{10},A_{0}\} + \min\{A'_{11},A_{10},A_{0}\}\right) + q_1\left(1-q_2\right)A_{10}+\left(1-q_1\right)A_{0} + q\tilde{\delta}\\
&\stackrel{\text{(a)}}{\leq} q(e^{\tilde{\epsilon}}-1)\min\{A'_{11},A_{10},A_{0}\} + qA'_{11} + q_1\left(1-q_2\right)A_{10}+\left(1-q_1\right)A_{0} + q\tilde{\delta}\\
&\stackrel{\text{(b)}}{\leq} q(e^{\tilde{\epsilon}}-1)\(qA'_{11}+q_1(1-q_2)A_{10}+(1-q_1)A_{0})\) + \(qA'_{11}+q_1\left(1-q_2\right)A_{10}+\left(1-q_1\right)A_{0}\) + q\tilde{\delta}\\
&=\left(1+q\left( e^{\tilde{\epsilon}}-1\right)\right)\left(qA'_{11}+q_1\left(1-q_2\right)A_{10}+\left(1-q_1\right)A_{0}\right) + q\tilde{\delta} \\
&= e^{\ln(1+q( e^{\tilde{\epsilon}}-1))}\Pr\left[\calM_t\left(\calD'\right)\in \calS\right]  + q\tilde{\delta}.
\end{align*}
Here, (a) follows from $\min\{A'_{11},A_{10},A_{0}\} \leq A'_{11}$, and (b) follows from the fact that minimum is upper-bounded by the convex combination.
By substituting the value of $\tilde{\eps}$ from \eqref{shuffling-priv-1} and using $ks=qn$, we get that for $\epsilon_0=\mathcal{O}\left(1\right)$, we have $\overline{\epsilon}=\mathcal{O}\left(\epsilon_0\sqrt{\frac{q\log\left(1/\tilde{\delta}\right)}{n}}\right)$.

\subsection{For $s>1$ and arbitrary $k\in[m]$}
Note that \eqref{priv-comp-samp-1}-\eqref{priv-comp-samp-2} together imply $A_{11} \leq e^{\tilde{\eps}}\min\{A'_{11},A_{10}\}+\tilde{\delta}$. Now we prove \eqref{priv-comp-samp-dp1} for $\overline{\epsilon}=\ln(1+q_2(e^{\tilde{\epsilon}}-1))$ and $\overline{\delta}=q\tilde{\delta}$.
\begin{align*}
\Pr&\left[\calM_t\left(\calD\right) \in \calS\right] =  qA_{11} + q_1(1-q_2) A_{10}+(1-q_1) A_{0} \\
&\leq q\(e^{\tilde{\epsilon}}\min\{A'_{11},A_{10}\} + \tilde{\delta}\) + q_1(1-q_2)A_{10}+(1-q_1)A_{0} \\
&= q\left((e^{\tilde{\epsilon}}-1)\min\{A'_{11},A_{10}\} + \min\{A'_{11},A_{10}\}\right)+ q_1(1-q_2)A_{10} + (1-q_1)A_{0}  +q\tilde{\delta}\\
&\stackrel{\text{(a)}}{\leq} q\(e^{\tilde{\epsilon}}-1)\min\{A'_{11},A_{10}\}\) + qA'_{11}+ q_1(1-q_2)A_{10} + (1-q_1)A_{0}  +q\tilde{\delta}\\
&\stackrel{\text{(b)}}{\leq} q\((e^{\tilde{\epsilon}}-1)(q_2A'_{11} + (1-q_2)A_{10})\) + \(qA'_{11}+ q_1(1-q_2)A_{10} + (1-q_1)A_{0}\)  +q\tilde{\delta}\\
&= q_2\((e^{\tilde{\epsilon}}-1)(q_1q_2A'_{11} + q_1(1-q_2)A_{10})\) + \(qA'_{11}+ q_1(1-q_2)A_{10} + (1-q_1)A_{0}\)  +q\tilde{\delta}\\
&\stackrel{\text{(c)}}{\leq} q_2\((e^{\tilde{\epsilon}}-1)(qA'_{11} + q_1(1-q_2)A_{10}) + (1-q_1)A_0\) + \(qA'_{11}+ q_1(1-q_2)A_{10} + (1-q_1)A_{0}\)  +q\tilde{\delta}\\
&= \(1+q_2\((e^{\tilde{\epsilon}}-1)\)(qA'_{11} + q_1(1-q_2)A_{10}) + (1-q_1)A_0\) + q\tilde{\delta}\\
&= e^{\ln(1+q_2( e^{\tilde{\epsilon}}-1))}\Pr\left[\calM_t\left(\calD'\right)\in \calS\right]  + q\tilde{\delta}
\end{align*}
Here, (a) follows from $\min\{A'_{11},A_{10}\} \leq A'_{11}$, (b) follows from the fact that minimum is upper-bounded by the convex combination, and (c) holds because $(1-q_1)A_0\geq0$. By substituting the value of $\tilde{\eps}$ from \eqref{shuffling-priv-1} and using $ks=qn$, we get that for $\epsilon_0=\mathcal{O}\left(1\right)$, we have $\overline{\epsilon}=\mathcal{O}\left(\epsilon_0\sqrt{\frac{q_2\log\left(1/\tilde{\delta}\right)}{q_1n}}\right)$. Note that when $q_1=1$ (i.e., we select all the clients in each iteration), then this gives the desired privacy amplification of $q=q_2$.

%
%
%
%
%
%
%
%

The proof of Lemma~\ref{lemm:priv_opt} is complete, except for that we have to prove \eqref{priv-comp-samp-2} and \eqref{priv-comp-samp-3}. Before proving \eqref{priv-comp-samp-2} and \eqref{priv-comp-samp-3}, we state an important remark about the privacy amplification in both the cases.

\begin{remark}
Note that when $s=1$ and $\eps_0=\calO(1)$, we have $\epsbar = \ln(1+q( e^{\tilde{\epsilon}}-1)) = \calO(q\tilde{\epsilon})$. So we get a privacy amplification by a factor of $q=\frac{ks}{mr}$ -- the sampling probability of each data point from the entire dataset. Here, we get a privacy amplification from both types of sampling, of clients as well of data points.

On the other hand, when $s>1$ and $\eps_0=\calO(1)$, we have $\epsbar = \ln(1+q_2( e^{\tilde{\epsilon}}-1)) = \calO(q_2\tilde{\epsilon})$, which, unlike the case of $s=1$, only gives the privacy amplification by a factor of $q_2=\frac{s}{r}$ -- the sampling probability of each data point from a client. So, unlike the case of $s=1$, here we only get a privacy amplification from sampling of data points, not from sampling of clients. 
Note that when $k=m$ and any $s\in[r]$ (which implies $q_1=1$ and $q=q_2$), we have $\overline{\epsilon}=\mathcal{O}\left(\epsilon_0\sqrt{\frac{q_2\log\left(1/\tilde{\delta}\right)}{n}}\right)$, which gives the desired amplification when we select all the clients in each iteration.
\end{remark}

\paragraph{Proof of \eqref{priv-comp-samp-2}.}
First note that the number of subsets $\calT_1\subset[r]$ such that $|\calT_1|=s,1\in\calT_1$ is equal to $\binom{r-1}{s-1}$ and the number of subsets $\calT_1\subset[r]$ such that $|\calT_1|=s,1\notin\calT_1$ is equal to $\binom{r-1}{s}$. It is easy to verify that $(r-s)\binom{r-1}{s-1}=s\binom{r-1}{s}$.

Consider the following bipartite graph $G=(V_1\cup V_2,E)$, where the left vertex set $V_1$ has $\binom{r-1}{s-1}$ vertices, one for each configuration of $\calT_1\subset[r]$ such that $|\calT_1|=s,1\in\calT_1$, the right vertex set $V_2$ has $\binom{r-1}{s}$ vertices, one for each configuration of $\calT_1\subset[r]$ such that $|\calT_1|=s,1\notin\calT_1$, and the edge set $E$ contains all the edges between neighboring vertices, i.e., if $(\bu,\bv)\in V_1\times V_2$ is such that $\bu$ and $\bv$ differ in only one element, then $(\bu,\bv)\in E$. 
Observe that each vertex of $V_1$ has $(r-s)$ neighbors in $V_2$ -- the neighbors of $\calT_1\in V_1$ will be $\{(\calT_1\setminus\{1\})\cup\{i\} : i\in[m]\setminus\calT_1\}\subset V_2$. Similarly, each vertex of $V_2$ has $s$ neighbors in $V_1$ -- the neighbors of $\calT_1\in V_2$ will be $\{(\calT_1\setminus\{i\})\cup\{1\} : i\in\calT_1\} \subset V_1$.

Now, fix any $\calT\in\binom{[m]}{k}$ s.t.\ $1\in\calT$, and for $i\in\calT\setminus\{1\}$, fix any $\calT_i\in\binom{[r]}{s}$, and consider an arbitrary $(\bu,\bv)\in E$. Since the mechanism $\calZ$ is $(\tilde{\epsilon},\tilde{\delta})$-DP, we have
\begin{equation}\label{bipartite-nbrs}
\Pr\left[\calZ(\calD^{\overline{\calT}})\in \calS |1\in\calT,\calT_1=\bu,\calT_i, i\in\calT\setminus\{1\}\right] \leq e^{\tilde{\eps}}\Pr\left[\calZ(\calD^{\overline{\calT}})\in \calS |1\in\calT,\calT_1=\bv,\calT_i, i\in\calT\setminus\{1\}\right] + \tilde{\delta}.
\end{equation} 
Now we are ready to prove \eqref{priv-comp-samp-2}.
\begin{align*}
A_{11} &= \Pr\left[\calZ(\calD^{\overline{\calT}})\in \calS |1\in\calT\ \text{and}\ 1\in\calT_1\right] \\
&= \sum_{\substack{\calT\in\binom{[m]}{k}: 1\in\calT \\ \calT_1\in\binom{[r]}{s}: 1\in\calT_1 \\ \calT_i\in\binom{[r]}{s}\text{ for }i\in\calT\setminus\{1\}}} \Pr[\calT,\calT_i,i\in\calT | 1\in\calT\ \text{and}\ 1\in\calT_1] \Pr[\calZ(\calD^{\overline{\calT}})\in\calS | \calT,\calT_1,\hdots,\calT_m] \\
&\stackrel{\text{(a)}}{=} \sum_{\substack{\calT\in\binom{[m]}{k}: 1\in\calT \\ \calT_i\in\binom{[r]}{s}\text{ for }i\in\calT\setminus\{1\}}} \Pr[\calT,\calT_i,i\in\calT\setminus\{1\} | 1\in\calT ] \sum_{\calT_1\in\binom{[r]}{s}: 1\in\calT_1}\Pr[\calT_1 | 1\in\calT_1] \Pr[\calZ(\calD^{\overline{\calT}})\in\calS | \calT,\calT_1,\hdots,\calT_m] \\
&= \sum_{\substack{\calT\in\binom{[m]}{k}: 1\in\calT \\ \calT_i\in\binom{[r]}{s}\text{ for }i\in\calT\setminus\{1\}}} \Pr[\calT,\calT_i,i\in\calT\setminus\{1\} | 1\in\calT ] \frac{1}{(r-s)\binom{r-1}{s-1}} \sum_{\calT_1\in\binom{[r]}{s}: 1\in\calT_1} (r-s)\Pr[\calZ(\calD^{\overline{\calT}})\in\calS | \calT,\calT_1,\hdots,\calT_m] \\
&= \sum_{\substack{\calT\in\binom{[m]}{k}: 1\in\calT \\ \calT_i\in\binom{[r]}{s}\text{ for }i\in\calT\setminus\{1\}}} \Pr[\calT,\calT_i,i\in\calT\setminus\{1\} | 1\in\calT ] \frac{1}{s\binom{r-1}{s}} \sum_{\calT_1\in\binom{[r]}{s}: 1\in\calT_1} (r-s)\Pr[\calZ(\calD^{\overline{\calT}})\in\calS | \calT,\calT_1,\hdots,\calT_m] \\
&\stackrel{\text{(b)}}{\leq} \sum_{\substack{\calT\in\binom{[m]}{k}: 1\in\calT \\ \calT_i\in\binom{[r]}{s}\text{ for }i\in\calT\setminus\{1\}}} \Pr[\calT,\calT_i,i\in\calT\setminus\{1\} | 1\in\calT ] \frac{1}{s\binom{r-1}{s}} \sum_{\calT_1\in\binom{[r]}{s}: 1\notin\calT_1} s\(e^{\tilde{\eps}}\Pr[\calZ(\calD^{\overline{\calT}})\in\calS | \calT,\calT_1,\hdots,\calT_m] + \tilde{\delta}\) \\
&= \sum_{\substack{\calT\in\binom{[m]}{k}: 1\in\calT \\ \calT_i\in\binom{[r]}{s}\text{ for }i\in\calT\setminus\{1\}}} \Pr[\calT,\calT_i,i\in\calT\setminus\{1\} | 1\in\calT ] \sum_{\calT_1\in\binom{[r]}{s}: 1\notin\calT_1} \Pr[\calT_1 | 1\notin\calT_1]\(e^{\tilde{\eps}}\Pr[\calZ(\calD^{\overline{\calT}})\in\calS | \calT,\calT_1,\hdots,\calT_m] + \tilde{\delta}\) \\
&\stackrel{\text{(c)}}{=} \sum_{\substack{\calT\in\binom{[m]}{k}: 1\in\calT \\ \calT_1\in\binom{[r]}{s}: 1\notin\calT_1 \\ \calT_i\in\binom{[r]}{s}\text{ for }i\in\calT\setminus\{1\}}} \Pr[\calT,\calT_i,i\in\calT | 1\in\calT\ \text{and}\ 1\notin\calT_1] \(e^{\tilde{\eps}}\Pr[\calZ(\calD^{\overline{\calT}})\in\calS | \calT,\calT_1,\hdots,\calT_m] + \tilde{\delta}\) \\
&\leq e^{\tilde{\eps}}\Pr\left[\calZ(\calD^{\overline{\calT}})\in \calS |1\in\calT\ \text{and}\ 1\notin\calT_1\right] + \tilde{\delta} \\
&= e^{\tilde{\eps}}A_{10} + \tilde{\delta}.
\end{align*}
Here, (a) and (c) follow from the fact that clients sample the data points independent of each other, and (b) follows from \eqref{bipartite-nbrs} together with the fact that there are $(r-s)\binom{r-1}{s-1}=s\binom{r-1}{s}$ edges in the bipartite graph $G=(V_1\cup V_2, E)$, where degree of vertices in $V_1$ is $(r-s)$ and degree of vertices in $V_2$ is $s$.

\paragraph{Proof of \eqref{priv-comp-samp-3}.}
First note that the number of subsets $\calT\in[m]$ such that $|\calT|=k,1\in\calT$ is equal to $\binom{m-1}{k-1}$ and the number of subsets $\calT\subset[m]$ such that $|\calT|=k,1\notin\calT$ is equal to $\binom{m-1}{k}$. It is easy to verify that $(m-k)\binom{m-1}{k-1}=k\binom{m-1}{k}$.

Consider the following bipartite graph $G=(V_1\cup V_2,E)$, where the left vertex set $V_1$ has $\binom{m-1}{k-1}r^{k-1}$ vertices, one for each configuration of $\left(\calT,\calT_i : i\in\calT\right)$ such that $\calT\subset[m]$, $|\calT|=k,1\in\calT$ and $\calT_1=1$, the right vertex set $V_2$ has $\binom{m-1}{k}r^{k}$ vertices, one for each configuration of $\left(\calT,\calT_i: i\in\calT\right)$ such that $\calT\subset[m]$, $|\calT|=k,1\notin\calT$, and the edge set $E$ contains all the edges between neighboring vertices, i.e., if $(\bu,\bv)\in V_1\times V_2$ is such that $\bu$ and $\bv$ differ in only one element, then $(\bu,\bv)\in E$. 
Observe that each vertex of $V_1$ has $r(m-k)$ neighbors in $V_2$. Similarly, each vertex of $V_2$ has $k$ neighbors in $V_1$.

Consider an arbitrary edge $(\bu,\bv)\in E$. By construction, there exists $\calT\in\binom{[m]}{k}$ with $1\in\calT$ and $\calT_{i}\in[r],i\in\calT$ such that $\bu=(\calT,\calT_i:i\in\calT)$ and $\calT'\in\binom{[m]}{k}$ with $1\notin\calT'$ and $\calT'_{i}\in[r],i\in\calT'$ such that $\bv=(\calT',\calT'_i:i\in\calT')$. Note that, since $(\bu,\bv)\in E$, $(\calT_i:i\in\calT)$ and $(\calT'_i:i\in\calT')$ have $k-1$ elements common. Now, since the mechanism $\calZ$ is $(\tilde{\epsilon},\tilde{\delta})$-DP, we have
\begin{equation}\label{bipartite-nbrs-3}
\Pr\left[\calZ(\calD^{\overline{\calT}})\in \calS | \calT, \calT_i,i\in\calT\right] \leq e^{\tilde{\eps}}\Pr\left[\calZ(\calD^{\overline{\calT'}})\in \calS | \calT', \calT'_i, i\in\calT'\right] + \tilde{\delta}.
\end{equation} 
Now we are ready to prove \eqref{priv-comp-samp-3}.
\begin{align*}
A_{11} &= \Pr\left[\calZ(\calD^{\overline{\calT}})\in \calS |1\in\calT\ \text{and}\ \calT_1=1\right] \\
&= \sum_{\substack{\calT\in\binom{[m]}{k}: 1\in\calT \\ \calT_{i}\in [r] \text{ for } i\in\calT : \calT_1=1}} \Pr[\calT,\calT_i,i\in\calT |1\in\calT\ \text{and}\ \calT_1=1] \Pr[\calZ(\calD^{\overline{\calT}})\in\calS | \calT,\calT_i,i\in\calT] \\
&=\frac{1}{\binom{m-1}{k-1}r^{k-1}}\sum_{\substack{\calT\in\binom{[m]}{k}: 1\in\calT \\ \calT_{i}\in [r] \text{ for } i\in\calT : \calT_1=1}} \Pr[\calZ(\calD^{\overline{\calT}})\in\calS | \calT,\calT_i,i\in\calT]\\
&= \frac{1}{(m-k)\binom{m-1}{k-1}r^k}\sum_{\substack{\calT\in\binom{[m]}{k}: 1\in\calT \\ \calT_{i}\in [r] \text{ for } i\in\calT : \calT_1=1}} r(m-k)\Pr[\calZ(\calD^{\overline{\calT}})\in\calS | \calT,\calT_i,i\in\calT]\\
&\stackrel{\text{(a)}}{=} \frac{1}{k\binom{m-1}{k}r^{k}}\sum_{\substack{\calT\in\binom{[m]}{k}: 1\in\calT \\ \calT_{i}\in [r] \text{ for } i\in\calT : \calT_1=1}} r(m-k) \Pr[\calZ(\calD^{\overline{\calT}})\in\calS | \calT,\calT_i,i\in\calT] \\
&\stackrel{\text{(b)}}{\leq}\frac{1}{k\binom{m-1}{k}r^{k}}\sum_{\substack{\calT\in\binom{[m]}{k} : 1\notin\calT \\ \calT_{i}\in[r] \text{ for } i\in\calT}} k\left(e^{\epsilon} \Pr[\calZ(\calD^{\overline{\calT}})\in\calS | \calT,\calT_i,i\in\calT]+\tilde{\delta} \right)\\
&=\frac{1}{\binom{m-1}{k}r^{k}}\sum_{\substack{\calT\in\binom{[m]}{k}: 1\notin\calT \\ \calT_{i}\in[r]\text{ for } i\in\calT}} \left(e^{\epsilon} \Pr[\calZ(\calD^{\overline{\calT}})\in\calS | \calT,\calT_i,i\in\calT]+\tilde{\delta}\right)\\
&=\sum_{\substack{\calT\in\binom{[m]}{k}: 1\notin\calT \\ \calT_{i}\in[r]\text{ for } i\in\calT}} \Pr[\calT,\calT_i,i\in\calT | 1\notin\calT] \left(e^{\epsilon} \Pr[\calZ(\calD^{\overline{\calT}})\in\calS | \calT,\calT_i,i\in\calT]+\tilde{\delta}\right)\\
&= e^{\tilde{\eps}} \Pr\left[\calZ(\calD^{\overline{\calT}})\in \calS |1\notin\calT \right] + \tilde{\delta} \\
&=e^{\tilde{\eps}} A_0 + \tilde{\delta}
\end{align*}
Here, (a) uses $(m-k)\binom{m-1}{k-1}=k\binom{m-1}{k}$, and (b) follows from \eqref{bipartite-nbrs-3} together with the fact that there are $r(m-k)\binom{m-1}{k-1}r^{k-1}=k\binom{m-1}{k}r^{k}$ edges in the bipartite graph $G=(V_1\cup V_2, E)$, where degree of vertices in $V_1$ is $r(m-k)$ and degree of vertices in $V_2$ is $k$. \\

\noindent This completes the proof of Lemma~\ref{lemm:priv_opt}.

\section{Minimax Risk Estimation}\label{app:minimax}

\begin{lemma} \label{lem:deterministic-estimator}
For the minimax problems~\eqref{eqn1_1} and~\eqref{eqn1_2}, the optimal estimator $\hatx\left(\by^{n}\right)$ is a deterministic function. In other words, the randomized decoder does not help  in reducing the minimax risk. 
\end{lemma}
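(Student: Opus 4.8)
The plan is to reduce any randomized estimator to a deterministic one by averaging out its internal randomness, exploiting two facts: the squared $\ell_2$-loss $\bv\mapsto\|\bmu-\bv\|_2^2$ is convex, and the output constraint set $\calB_p^d$ is convex and closed. Concretely, I would fix arbitrary $(\eps,b)$-CLDP mechanisms $\{\calM_i:i\in[n]\}$ (which determine the distribution of the transcript $\by^n$) and write an arbitrary randomized estimator as $\hatx(\by^n)=g(\by^n,U)$, where $g$ is a deterministic measurable map and $U$ is a random seed drawn independently of $\by^n$. Define the deterministic estimator $\hatx^{\mathrm{det}}(\by^n):=\bbE_U[g(\by^n,U)]$; since $g(\by^n,U)\in\calB_p^d$ almost surely and $\calB_p^d$ is convex and closed, we get $\hatx^{\mathrm{det}}(\by^n)\in\calB_p^d$, so $\hatx^{\mathrm{det}}$ is an admissible estimator.

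The second step is a pointwise Jensen inequality. In the probabilistic model \eqref{eqn1_2} the target is $\bmu_{\bq}$, and in the worst-case model \eqref{eqn1_1} it is $\barx$; in either case, for each fixed value of $\by^n$ and each fixed target $\bmu$, convexity of $\bv\mapsto\|\bmu-\bv\|_2^2$ gives $\|\bmu-\hatx^{\mathrm{det}}(\by^n)\|_2^2\le\bbE_U\|\bmu-g(\by^n,U)\|_2^2$. I would then take expectation over $\by^n$ (interchanging $\bbE_{\by^n}$ and $\bbE_U$ on the right-hand side, which is legitimate by Tonelli's theorem since the integrand is nonnegative), obtaining $\bbE\|\bmu-\hatx^{\mathrm{det}}(\by^n)\|_2^2\le\bbE\|\bmu-\hatx(\by^n)\|_2^2$. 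Taking the supremum over $\bq\in\calP_p^d$ (resp.\ over $\{\bx_i\}\in\calB_p^d$) on both sides, and then the infimum over the fixed mechanisms, shows that restricting the inner infimum to deterministic estimators does not increase the minimax risk in either \eqref{eqn1_1} or \eqref{eqn1_2}. Since deterministic estimators form a subclass of all estimators, the two infima coincide, and if a minimizer exists it can be taken deterministic; this is exactly the claim.

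There is no genuine obstacle here — the statement is a textbook consequence of convexity of the loss. The only points I would flag explicitly are: (i) admissibility of $\hatx^{\mathrm{det}}$, which is why we need the output set to be convex, and this holds because $\calB_p^d$ is a norm ball; and (ii) the measurability and interchange-of-integration bookkeeping, handled by Tonelli thanks to nonnegativity of the squared loss. An alternative phrasing I could adopt is the standard minimax/Bayes argument: for any prior on the target the Bayes-optimal estimator is the posterior mean $\bbE[\bmu\mid\by^n]$, which is deterministic and $\calB_p^d$-valued by convexity; but the direct averaging argument above is cleaner since it works immediately with the supremum formulation in \eqref{eqn1_1}–\eqref{eqn1_2} without invoking any minimax duality.
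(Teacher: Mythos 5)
Your proposal is correct and follows essentially the same route as the paper: both replace the randomized decoder by its conditional expectation given the transcript $\by^n$ and observe that this cannot increase the squared-$\ell_2$ risk --- you via Jensen's inequality, the paper via the equivalent bias--variance decomposition $\bbE\big[\|\bmu-\hatx\|_2^2 \mid \by^n\big] = \|\bmu - \bbE[\hatx\mid\by^n]\|_2^2 + \sigma^2_{\hatx(\by^n)}$. Your explicit check that the averaged estimator remains $\calB_p^d$-valued (by convexity of the norm ball) is a small admissibility point the paper leaves implicit, but it does not change the argument.
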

\begin{proof}
Towards a contradiction, suppose that the optimal estimator $\hatx$ is a randomized decoder defined as follows. For given clients' responses $\by^{n}$, let the probabilistic estimator generate an estimate $\hatx\left(\by^{n}\right)$ whose mean and trace of the covariance matrix are given by $\bmu_{\hatx(\by^n)}=\mathbb{E}\left[\hatx(\by^n)\right]$ and $\sigma_{\hatx(\by^n)}^{2}=\mathbb{E}\left[\left\|\hatx\left(\by^{n}\right)-\mu_{\hatx(\by^{n})}\right\|_{2}^{2}\big|Y^{n}\right]$, respectively, where expectation is taken with respect to the randomization of the decoder, conditioned of $Y^n$.
\begin{align*}
\mathbb{E}\left[\left\|\overline{\bx}-\hatx\left(\by^{n}\right)\right\|_{2}^{2}\big|\by^{n}\right]&=\mathbb{E}\left[\left\|\overline{\bx}-\bmu_{\hatx(\by^{n})}+\bmu_{\hatx(\by^{n})}-\hatx\left(\by^{n}\right)\right\|_{2}^{2}\big|\by^{n}\right]\\
&=\mathbb{E}\left[\left\|\overline{\bx}-\bmu_{\hatx(\by^{n})}\right\|_{2}^{2}\big|\by^{n}\right]+\mathbb{E}\left[\left\|\bmu_{\hatx(\by^{n})}-\hatx\left(\by^{n}\right)\right\|_{2}^{2}\big|\by^{n}\right]\\
&\hspace{1cm} + 2\mathbb{E}\left\langle\overline{\bx}-\bmu_{\hatx(\by^{n})}, \bmu_{\hatx(\by^{n})}-\hatx\left(\by^{n}\right)\big|\by^{n}\right\rangle\\
&\stackrel{\text{(a)}}{=} \mathbb{E}\left[\left\|\overline{\bx}-\bmu_{\hatx(\by^{n})}\right\|_{2}^{2}\big|\by^{n}\right]+\sigma_{\hatx(\by^{n})}^{2} \\
&>\mathbb{E}\left[\left\|\overline{\bx}-\bmu_{\hatx(\by^{n})}\right\|_{2}^{2}\big|\by^{n}\right]
\end{align*}
In (a), we used that $\bmu_{\hatx(\by^n)}=\mathbb{E}\left[\hatx(\by^n)\right]$ to eliminate the last term. Similarly, we can prove that $\mathbb{E}\left[\|\bmu_{\textbf{q}}-\hatx\left(\by^{n}\right)\|_{2}^{2}\big|\by^{n}\right]> \mathbb{E}\left[\|\bmu_{\textbf{q}}-\bmu_{\by^{n}}\|_{2}^{2}\big|\by^{n}\right]$. Hence, the deterministic estimator $\hatx\left(\by^{n}\right)=\mu_{\hatx(\by^{n})}$ has a lower minimax risk than the probabilistic estimator. 
\end{proof}

\section{Compressed and Private Mean Estimation}\label{app:PrivMeanEst}

\subsection{Achievability for $\ell_1$-norm Ball: Proof of Theorem~\ref{thm:L1-norm_ub}}\label{app:L1-norm-algo1}

\begin{lemma*}[Restating Lemma~\ref{R1-quantize-properties}]
The mechanism $\calR_1$ presented in Algorithm~\ref{R1-quantize-client} satisfies the following properties:
\begin{enumerate}
\item $\calR_1$ is $\left(\eps_0,\log\left(d\right)+1\right)$-LDP and requires only $1$-bit of communication using public-randomness. 
\item $\calR_1$ is unbiased and has bounded variance, i.e., for every $\bx\in\calB_1^d\left(a\right)$, we have
$$\bbE\left[\calR_1\left(\bx\right)\right]=\bx \quad \text{and} \quad \bbE\|\calR_1\left(\bx\right)-\bx\|_{2}^{2}\leq d\left(\frac{e^{\eps_0}+1}{e^{\eps_0}-1}\right)^{2}.$$
\end{enumerate}
\end{lemma*}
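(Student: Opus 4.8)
The plan is to verify the three claimed properties of $\calR_1$ directly from the description in Algorithm~\ref{R1-quantize-client}. First, for the privacy and communication claim: the output $\bz$ takes only two possible values, $\pm a\bH_d(j)\bigl(\frac{e^{\eps_0}+1}{e^{\eps_0}-1}\bigr)$, once the index $j$ is fixed, so the output is specified by the pair $(j,\text{sign})$, which needs $\log d + 1$ bits; when $j$ is drawn from public randomness, only the sign bit must be transmitted, giving $1$-bit communication. For the $\eps_0$-LDP guarantee, I would fix an output value and bound the ratio of probabilities over two inputs $\bx,\bx'$. Since $j\sim\textsf{Unif}[d]$ is independent of the input, it suffices to bound, for each fixed $j$, the ratio of the conditional probabilities in \eqref{L1-hadamard_1-bit-quantizer}. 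Each coordinate $y_j$ of $\by=\frac{1}{\sqrt d}\bH_d\bx$ satisfies $|y_j|\le \frac{\|\bH_d(j)\|_\infty\|\bx\|_1}{\sqrt d}=\frac{\|\bx\|_1}{\sqrt d}\le \frac{a}{\sqrt d}$ (entries of $\bH_d$ are $\pm1$), hence $\frac{\sqrt d\, y_j}{2a}\frac{e^{\eps_0}-1}{e^{\eps_0}+1}\in\bigl[-\frac12\frac{e^{\eps_0}-1}{e^{\eps_0}+1},\frac12\frac{e^{\eps_0}-1}{e^{\eps_0}+1}\bigr]$, so each branch probability lies in $\bigl[\frac{1}{e^{\eps_0}+1},\frac{e^{\eps_0}}{e^{\eps_0}+1}\bigr]$; the ratio of any two such probabilities is at most $e^{\eps_0}$, which is exactly the $\eps_0$-LDP condition.

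Next, unbiasedness: condition on the drawn coordinate $j$. From \eqref{L1-hadamard_1-bit-quantizer}, $\bbE[\bz\mid j] = a\bH_d(j)\bigl(\frac{e^{\eps_0}+1}{e^{\eps_0}-1}\bigr)\cdot\bigl(\frac{\sqrt d\, y_j}{a}\frac{e^{\eps_0}-1}{e^{\eps_0}+1}\bigr) = \sqrt d\, y_j\,\bH_d(j)$. Averaging over $j\sim\textsf{Unif}[d]$ gives $\bbE[\bz] = \frac{\sqrt d}{d}\sum_{j=1}^d y_j\bH_d(j) = \frac{1}{\sqrt d}\bH_d^\top\by$ (using that $\bH_d$ is symmetric, $\bH_d(j)$ is also the $j$-th row), and since $\by=\frac{1}{\sqrt d}\bH_d\bx$ and $\bH_d^\top\bH_d = d\bI_d$, this equals $\bx$. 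For the variance bound, I would compute $\bbE\|\bz\|_2^2$. Conditioned on $j$, $\|\bz\|_2^2 = a^2\bigl(\frac{e^{\eps_0}+1}{e^{\eps_0}-1}\bigr)^2\|\bH_d(j)\|_2^2 = a^2 d\bigl(\frac{e^{\eps_0}+1}{e^{\eps_0}-1}\bigr)^2$ deterministically (since $\|\bH_d(j)\|_2^2 = d$), so $\bbE\|\bz\|_2^2 = a^2 d\bigl(\frac{e^{\eps_0}+1}{e^{\eps_0}-1}\bigr)^2$. Then $\bbE\|\calR_1(\bx)-\bx\|_2^2 = \bbE\|\bz\|_2^2 - \|\bx\|_2^2 \le a^2 d\bigl(\frac{e^{\eps_0}+1}{e^{\eps_0}-1}\bigr)^2$, which is the claimed bound. (The version of the lemma restated in Appendix~\ref{app:L1-norm-algo1} states the $a=1$ case; the general bound scales by $a^2$.)

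I expect the only mildly delicate point to be the privacy verification — specifically, making sure the conditioning on the public coordinate $j$ is handled correctly and that the bound $|y_j|\le a/\sqrt d$ holds uniformly (this uses that all Hadamard entries are $\pm 1$, so $\|\bH_d(j)\|_\infty = 1$ and $\|\bx\|_1\le a$ controls $|y_j|$). Everything else is a short direct computation using $\bH_d^\top\bH_d = d\bI_d$ and the two-point conditional law in \eqref{L1-hadamard_1-bit-quantizer}. No step should present a genuine obstacle.
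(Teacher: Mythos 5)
Your proposal is correct and follows essentially the same route as the paper's own proof in Appendix~B.1: the privacy bound hinges on the uniform estimate $|y_j|\le a/\sqrt d$ (which forces each conditional branch probability into $[\tfrac{1}{e^{\eps_0}+1},\tfrac{e^{\eps_0}}{e^{\eps_0}+1}]$), unbiasedness uses $\bH_d^\top\bH_d = d\bI_d$, and the variance bound follows from $\bbE\|\bz\|_2^2 = a^2 d\bigl(\tfrac{e^{\eps_0}+1}{e^{\eps_0}-1}\bigr)^2$ deterministically. Your presentation of the privacy step (conditioning on the selected coordinate $j$, which is determined by the output since the $2d$ values are distinct) is a slightly cleaner phrasing of the same bound, and you correctly note the missing $a^2$ factor in the appendix restatement, but there is no genuine difference from the paper's argument.
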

\begin{proof}
We show these properties one-by-one below.
\begin{enumerate}
\item Observe that the output of the mechanism $\calR_1$ can be represented using the index $j\in\left[d\right]$ and one bit of the sign of $\lbrace \pm a\bH_d\left(j\right)\left(\frac{e^{\eps_0}+1}{e^{\eps_0}-1}\right) \rbrace$. Hence, it requires only $\log\left(d\right)+1$ bits for communication. Furthermore, the randomness  $j\sim \textsf{Unif}\left[d\right]$ is independent of the input $\bx$. Thus, if the client has access to a public randomness $j$, then the client needs only to send one bit to represent its sign. Now, we show that the mechanism $\calR_1$ is $\eps_0$-LDP. Let $\calZ = \big\lbrace \pm a\bH_d(j) \big(\frac{e^{\eps_0}+1}{e^{\eps_0}-1}\big) : j=1,2,\hdots,d \big\rbrace$ denote all possible $2d$ outputs of the mechanism $\calR_1$. We get 
\begin{align*}
\sup_{\bx,\bx'\in\calB_{1}^{d}\left(a\right)}\sup_{\bz\in\calZ}\frac{\Pr[\calR_1(\bx)=\bz]}{\Pr[\calR_1(\bx')=\bz]} &\leq \sup_{\bx,\bx^{\prime}\in\calB_{1}^{d}\left(a\right)} \frac{\frac{1}{d}\sum_{j=1}^d \(\frac{1}{2} + \frac{\sqrt{d}|y_j|}{2a}\frac{e^{\eps_0}-1}{e^{\eps_0}+1}\)}{\frac{1}{d}\sum_{j=1}^d \(\frac{1}{2} - \frac{\sqrt{d}|y'_j|}{2a}\frac{e^{\eps_0}-1}{e^{\eps_0}+1}\)} \\
&= \sup_{\bx,\bx^{\prime}\in\calB_{1}^{d}\left(a\right)} \frac{\frac{1}{d}\sum_{j=1}^d \(a(e^{\eps_0}+1) + \sqrt{d}|y_j|(e^{\eps_0}-1)\)}{\frac{1}{d}\sum_{j=1}^d \(a(e^{\eps_0}+1) - \sqrt{d}|y'_j|(e^{\eps_0}-1)\)} \\
&\stackrel{\text{(a)}}{\leq} \frac{2 a e^{\eps_0}}{2a}=e^{\eps_0},
\end{align*}
where (a) uses the fact that for every $j\in[d]$, we have $|y_j| \leq \nicefrac{a}{\sqrt{d}}$ and $|y'_j| \leq \nicefrac{a}{\sqrt{d}}$.

\item Fix an arbitrary $\bx\in\calB_{1}^{d}\left(a\right)$.
\begin{align*}
\text{Unbiasedness:} \quad \bbE\left[\calR_1\left(\bx\right)\right]&=\frac{1}{d}\sum_{j=1}^{d}a\bH_d\left(j\right)  \left(\frac{e^{\eps_0}+1}{e^{\eps_0}-1}\right)\(\frac{\sqrt{d}y_j}{a} \frac{e^{\eps_0}-1}{e^{\eps_0}+1}\) \\
&=\frac{1}{d}\sum_{j=1}^{d}\bH_d\left(j\right)\sqrt{d}y_j 
\ \stackrel{\text{(b)}}{=}\frac{1}{d}\sum_{j=1}^{d}\bH_d\left(j\right)\bH_d^{T}(j)\bx 
\ \stackrel{\text{(c)}}{=} \bx 
\end{align*}
where (b) uses $\by=\frac{1}{\sqrt{d}}\bH_d\bx$ and (c) uses $\sum_{j=1}^d \bH_d(j)\bH_d^{T}(j) = \bH_d\bH_d^T = d\bI_d$.

\begin{align*}
\text{Bounded variance:} \quad \bbE\|\calR_1\left(\bx\right)-\bx\|_{2}^{2} &\leq \bbE\|\calR_1(\bx)\|^2 = \bbE[\calR_1(\bx)^T\calR_1(\bx)] \\
&= \frac{1}{d}\sum_{j=1}^d a^2\bH_d(j)^T\bH_d(j)\(\frac{e^{\eps_0}+1}{e^{\eps_0}-1}\)^2 \\
&= a^2 d\left(\frac{e^{\eps_0}+1}{e^{\eps_0}-1}\right)^{2} \tag{Since $\bH_d(j)^T\bH_d(j)=d, \forall j\in[d]$}
\end{align*}
\end{enumerate}
This completes the proof of Lemma~\ref{R1-quantize-properties}.
\end{proof}

\subsection{Achievability for $\ell_2$-norm Ball: Proof of Theorem~\ref{thm:L2-norm_ub}}\label{app:L-infty-norm_ub_proof}
\begin{lemma*}[Restating Lemma~\ref{R_2-quantize-properties}]
The mechanism $\calR_2$ presented in Algorithm~\ref{R_2-quantize-client} satisfies the following properties, where $\eps_0>0$:
\begin{enumerate}
\item $\calR_2$ is $\left(\eps_0,d(\log(e)+1)\right)$-LDP.
\item $\calR_2$ is unbiased and has bounded variance, i.e., for every $\bx\in\calB_2^d\left(a\right)$, we have 
$$\bbE\left[\calR_2\left(\bx\right)\right]=\bx \quad \text{and} \quad \bbE\|\calR_2\left(\bx\right)-\bx\|_{2}^{2}\leq 6a^2 d\(\frac{e^{\eps_0}+1}{e^{\eps_0}-1}\)^{2}.$$
\end{enumerate}
\end{lemma*}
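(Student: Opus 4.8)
The plan is to treat $\calR_2=\textsf{Quan}\circ\textsf{Priv}$ as a composition and combine the already-established properties of \textsf{Priv} (Lemma~\ref{l-2-privacy-properties}) and \textsf{Quan} (Lemma~\ref{l-2-quan-properties}). For Part~1, since \textsf{Priv} is $\eps_0$-LDP and \textsf{Quan} is a randomized map applied to $\textsf{Priv}(\bx)$ whose internal randomness does not depend on $\bx$, the post-processing invariance of differential privacy gives that $\calR_2$ is $\eps_0$-LDP; the communication bound follows from the last sentence of Lemma~\ref{l-2-quan-properties}, which says any output of \textsf{Quan}, hence of $\calR_2$, is encodable in $d(\log(e)+1)$ bits. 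Thus $\calR_2$ is $(\eps_0,d(\log(e)+1))$-CLDP. Unbiasedness in Part~2 is equally quick: conditioning on $\by=\textsf{Priv}(\bx)$ and using unbiasedness of \textsf{Quan} yields $\bbE[\calR_2(\bx)\mid\by]=\by$, so by the tower rule and unbiasedness of \textsf{Priv}, $\bbE[\calR_2(\bx)]=\bbE[\textsf{Priv}(\bx)]=\bx$.

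For the variance bound, I would write, with $\by=\textsf{Priv}(\bx)$,
\[
\calR_2(\bx)-\bx=\bigl(\textsf{Quan}(\by)-\by\bigr)+\bigl(\by-\bx\bigr).
\]
Conditioned on $\by$, the first summand has mean zero (unbiasedness of \textsf{Quan}) while the second is $\by$-measurable, so the cross term drops and $\bbE\|\calR_2(\bx)-\bx\|_2^2=\bbE\|\textsf{Quan}(\by)-\by\|_2^2+\bbE\|\by-\bx\|_2^2$. The key point is that \textsf{Priv} outputs a vector of \emph{deterministic} Euclidean norm $M$ (line~4 of Algorithm~\ref{l-2-privacy}), so \textsf{Quan} is invoked with effective ball radius $M$: Lemma~\ref{l-2-quan-properties} then gives $\bbE[\|\textsf{Quan}(\by)-\by\|_2^2\mid\by]\le 2\|\by\|_2^2=2M^2$, while $\bbE\|\by-\bx\|_2^2=\bbE\|\textsf{Priv}(\bx)\|_2^2-\|\bx\|_2^2=M^2-\|\bx\|_2^2\le M^2$. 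Hence $\bbE\|\calR_2(\bx)-\bx\|_2^2\le 3M^2$, and substituting $M^2\le a^2d\bigl(\tfrac{3\sqrt\pi}{4}\tfrac{e^{\eps_0}+1}{e^{\eps_0}-1}\bigr)^2=\tfrac{9\pi}{16}a^2d\bigl(\tfrac{e^{\eps_0}+1}{e^{\eps_0}-1}\bigr)^2$ from Lemma~\ref{l-2-privacy-properties} gives $3M^2\le\tfrac{27\pi}{16}a^2d\bigl(\tfrac{e^{\eps_0}+1}{e^{\eps_0}-1}\bigr)^2\le 6a^2d\bigl(\tfrac{e^{\eps_0}+1}{e^{\eps_0}-1}\bigr)^2$ since $27\pi/16\approx 5.30<6$.

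I do not expect a substantive obstacle here — this is a routine post-processing/composition argument. The two things to keep track of are: (i) the radius parameter fed to \textsf{Quan} is the fixed output-norm $M$ of \textsf{Priv}, not the original $a$, which is what makes the variance scale like $M^2$ (and ultimately like $a^2 d$); and (ii) the conditioning bookkeeping that annihilates the cross term in the bias--variance split. The final constant $6$ is then just the numerical slack in $27\pi/16<6$.
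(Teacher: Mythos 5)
Your proof matches the paper's argument step for step: post-processing for the LDP claim, the encoding length from Lemma~\ref{l-2-quan-properties}, the tower-rule argument for unbiasedness, and the bias--variance split $\bbE\|\calR_2(\bx)-\bx\|_2^2 = \bbE\|\textsf{Quan}(\by)-\by\|_2^2 + \bbE\|\by-\bx\|_2^2 \le 3M^2$ followed by substituting $M^2 \le a^2 d\bigl(\tfrac{3\sqrt\pi}{4}\tfrac{e^{\eps_0}+1}{e^{\eps_0}-1}\bigr)^2$. The only cosmetic difference is that you make the numerical slack $27\pi/16 < 6$ explicit rather than folding it into the constant as the paper does.
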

\begin{proof} 
We prove these properties one-by-one below.
\begin{enumerate}
\item It was shown by Duchi et al.~\cite[Section~$4.2.3$]{duchi2018minimax} that \textsf{Priv} is an $\eps_0$-LDP mechanism.
Now, since $\calR_2 = \textsf{Quan}\circ\textsf{Priv}$ is a post-processing of a differentially-private mechanism \textsf{Priv} and post-processing preserves differential privacy, we have that $\calR_2$ is also $\eps_0$-LDP.
The claim that $\calR_2$ uses $d(\log(e)+1)$ bits of communication follows because $\calR_2$ outputs the result of \textsf{Quan}, which produces an output which can be represented using $d(\log(e)+1)$ bits; see \cite{mayekar2020limits}.

\item Unbiasedness of $\calR_2$ follows because $\calR_2 = \textsf{Quan}\circ\textsf{Priv}$ and both \textsf{Priv} and \textsf{Quan} are unbiased. To prove that variance is bounded, fix an $\bx\in\calB_2^d\left(a\right)$.
\begin{align*}
\bbE\|\calR_2(\bx)-\bx\|_{2}^{2} &= \bbE\|\textsf{Quan}\(\textsf{Priv}(\bx)\)-\bx\|_{2}^{2} \\
&= \bbE\|\textsf{Quan}\(\textsf{Priv}(\bx)\) - \textsf{Priv}(\bx) + \textsf{Priv}(\bx) - \bx\|_{2}^{2} \\
&\stackrel{\text{(a)}}{=} \bbE\|\textsf{Quan}\(\textsf{Priv}(\bx)\)-\textsf{Priv}(\bx)\|_{2}^{2} + \bbE\|\textsf{Priv}(\bx)-\bx\|_{2}^{2} \\
&\stackrel{\text{(b)}}{\leq} 2\|\textsf{Priv}(\bx)\|^2 + \bbE\|\textsf{Priv}(\bx)\|^2 \\
&\stackrel{\text{(c)}}{\leq} 3\|\textsf{Priv}(\bx)\|^2 \stackrel{\text{(d)}}{\leq} 6d\(\frac{e^{\eps_0}+1}{e^{\eps_0}-1}\)^{2}.
\end{align*}
In (a) we used the fact that $\textsf{Quan}$ and $\textsf{Priv}$ are unbiased, which implies that the cross multiplication term is zero. 
In (b) we used Lemma~\ref{l-2-quan-properties} to write $\bbE\|\textsf{Quan}\(\textsf{Priv}(\bx)\)-\textsf{Priv}(\bx)\|_{2}^{2} \leq 2\|\textsf{Priv}(\bx)\|^2$ and used the unbiasedness of \textsf{Priv} together with the fact that variance is bounded by the second moment to write $\bbE\|\textsf{Priv}(\bx)-\bx\|_{2}^{2}\leq\bbE\|\textsf{Priv}(\bx)\|_{2}^{2}$. In (c) we used that the length of \textsf{Priv} on any input remains fixed, i.e., $\bbE\|\textsf{Priv}(\bx)\|^2=\|\textsf{Priv}(\bx)\|^2 = M^2$ (where $M$ is from the line 4 of Algorithm~\ref{l-2-privacy}) holds for any $\bx\in\calB_2^d(a)$. In (d) we used the bound on $\|\textsf{Priv}(\bx)\|_2^2$ from Lemma~\ref{l-2-privacy-properties}.
\end{enumerate}
This completes the proof of Lemma~\ref{R_2-quantize-properties}.
\end{proof}

\subsection{Achievability for $\ell_{\infty}$-norm Ball: Proof of Theorem~\ref{thm:L-infty-norm_ub}}\label{app:L-infty-norm-algo}

\begin{lemma*}[Restating Lemma~\ref{R_infty-quantize-properties}]
The mechanism $\calR_\infty$ presented in Algorithm~\ref{R_infty-quantize-client} satisfies the following properties:
\begin{enumerate}
\item $\calR_\infty$ is $\left(\eps_0,\log\left(d\right)+1\right)$-LDP and requires only $1$-bit of communication using public-randomness. 
\item $\calR_{\infty}$ is unbiased and has bounded variance, i.e., for every $\bx\in\calB_{\infty}^d\left(a\right)$, we have
$$\bbE\left[\calR_\infty\left(\bx\right)\right]=\bx \quad \text{and} \quad \bbE\|\calR_\infty\left(\bx\right)-\bx\|_{2}^{2}\leq a^2 d^{2}\left(\frac{e^{\eps_0}+1}{e^{\eps_0}-1}\right)^{2}.$$
\end{enumerate}
\end{lemma*}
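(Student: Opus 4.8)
The plan is to mirror the proof of Lemma~\ref{R1-quantize-properties} for the $\ell_1$ case, with the Hadamard rotation removed, since here the randomizer acts directly on the coordinates of $\bx$. First I would verify the communication claim: the output $\bz$ is determined by the sampled index $j\in[d]$ together with a single sign bit, so it takes $\log(d)+1$ bits to describe; and since $j\sim\textsf{Unif}[d]$ is drawn independently of $\bx$, if $j$ is supplied by public randomness then the client only needs to transmit the sign bit, giving the $1$-bit public-randomness guarantee.

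Next, for the $\eps_0$-LDP property, I would enumerate the $2d$ possible outputs $\bz = \pm ad\bigl(\frac{e^{\eps_0}+1}{e^{\eps_0}-1}\bigr)\be_j$ and compute $\Pr[\calR_\infty(\bx)=\bz] = \frac1d\bigl(\frac12 \pm \frac{x_j}{2a}\frac{e^{\eps_0}-1}{e^{\eps_0}+1}\bigr)$. The key point is that $\bx\in\calB_\infty^d(a)$ forces $|x_j|\le a$ for every coordinate, so this probability lies in $\bigl[\frac{1}{d(e^{\eps_0}+1)},\ \frac{e^{\eps_0}}{d(e^{\eps_0}+1)}\bigr]$; taking the worst-case ratio over two inputs $\bx,\bx'$ and the same output $\bz$ gives exactly $e^{\eps_0}$. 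This same bound $|x_j|\le a$ also certifies that the two quantization probabilities in \eqref{L-infty-1-bit-quantizer} are valid, i.e.\ lie in $[0,1]$.

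For unbiasedness and variance, I would condition on $j$. A direct computation of the two-point expectation gives $\bbE[\bz\mid j] = ad\bigl(\frac{e^{\eps_0}+1}{e^{\eps_0}-1}\bigr)\cdot\frac{x_j}{a}\frac{e^{\eps_0}-1}{e^{\eps_0}+1}\,\be_j = d\,x_j\be_j$, and averaging over $j\sim\textsf{Unif}[d]$ yields $\bbE[\calR_\infty(\bx)] = \sum_{j=1}^d x_j\be_j = \bx$. For the second moment, note that $\|\bz\|_2^2 = a^2 d^2\bigl(\frac{e^{\eps_0}+1}{e^{\eps_0}-1}\bigr)^2$ deterministically (since $\|\be_j\|_2=1$), so using $\bbE\|\calR_\infty(\bx)-\bx\|_2^2 = \bbE\|\calR_\infty(\bx)\|_2^2 - \|\bx\|_2^2 \le \bbE\|\calR_\infty(\bx)\|_2^2$ (valid because $\calR_\infty$ is unbiased) gives the stated bound $a^2 d^2\bigl(\frac{e^{\eps_0}+1}{e^{\eps_0}-1}\bigr)^2$.

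I do not anticipate a genuine obstacle here; the only points needing care are (i) confirming that $|x_j|\le a$ is precisely what makes the privacy ratio collapse to $e^{\eps_0}$ and simultaneously keeps the quantization probabilities well-defined, and (ii) keeping track of the $d$ and $a$ factors — the extra factor of $d^2$ (as opposed to the $d$ in the $\ell_1$ bound) comes from the output magnitude scaling like $ad$ rather than $a$, which is the price of estimating under the $\ell_\infty$ geometry.
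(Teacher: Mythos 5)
Your proposal is correct and follows essentially the same route as the paper: the communication, unbiasedness, and variance arguments are identical, and your LDP step is a slightly cleaner rendering of the same probability-ratio computation (the paper's displayed ratio sums over $j$, which is looser notation for the same pointwise bound you state directly, namely that each output probability lies in $\bigl[\tfrac{1}{d(e^{\eps_0}+1)},\tfrac{e^{\eps_0}}{d(e^{\eps_0}+1)}\bigr]$). No gaps.
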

\begin{proof}
We prove these properties one-by-one below.
\begin{enumerate}
\item Observe that the output of the mechanism $\calR_\infty$ can be represented using the index $j\in\left[d\right]$ and one bit for the sign of $\big\lbrace \pm ad\big(\frac{e^{\eps_0}+1}{e^{\eps_0}-1}\big)\be_j \big\rbrace$. Hence, it requires only $\log\left(d\right)+1$ bits for communication. Furthermore, the randomness  $j\sim \textsf{Unif}\left[d\right]$ is independent of the input $\bx$. Thus, if the client has access to a public randomness $j$, then the client needs only to send one bit for its sign. Now, we show that the mechanism $\calR_\infty$ is $\eps_0$-LDP. Let $\calZ =  \big\lbrace \pm ad\big(\frac{e^{\eps_0}+1}{e^{\eps_0}-1}\big)\be_j : j=1,2,\hdots,d \big\rbrace$ denote all possible $2d$ outputs of the mechanism $\calR_\infty$. We get
\begin{align}
\sup_{\bx,\bx^{\prime}\in\calB_{\infty}^{d}\left(a\right)}\sup_{\bz\in\calZ}\frac{\Pr\left[\calR_\infty\left(\bx\right)=\bz\right]}{\Pr\left[\calR_\infty\left(\bx\right)=\bz\right]}&\leq \sup_{\bx,\bx^{\prime}\in\calB_{\infty}^{d}\left(a\right)} 
\frac{\frac{1}{d}\sum_{i=1}^d\(\frac{1}{2}+\frac{|x_j|}{2a}\frac{e^{\eps_0}-1}{e^{\eps_0}+1}\)}{\frac{1}{d}\sum_{i=1}^d\(\frac{1}{2}-\frac{|x'_j|}{2a}\frac{e^{\eps_0}-1}{e^{\eps_0}+1}\)}\\
&= \sup_{\bx,\bx^{\prime}\in\calB_{\infty}^{d}} \frac{\frac{1}{d}\sum_{i=1}^d\(a(e^{\eps_0}+1)+|x_j|(e^{\eps_0}-1)\)}{\frac{1}{d}\sum_{i=1}^d\(a(e^{\eps_0}+1)-|x'_j|(e^{\eps_0}-1)\)}\\
&\stackrel{\text{(a)}}{\leq} \frac{2ae^{\eps_0}}{2a}=e^{\eps_0},
\end{align}
where in (a) we used the fact that for every $j\in[d]$, we have $|x_j|\leq a$ and $|x'_j|\leq a$.
\item Fix an arbitrary $\bx\in\calB_{\infty}^{d}$.
\begin{align*}
\text{Unbiasedness:} \quad \bbE\left[\calR_\infty\left(\bx\right)\right]&=\frac{1}{d}\sum_{j=1}^{d} \be_j ad\left(\frac{e^{\eps_0}+1}{e^{\eps_0}-1}\right)\left(\frac{x_j}{a}\frac{e^{\eps_0}-1}{e^{\eps_0}+1}\right)\\
&=\sum_{j=1}^{d}\be_j x_j \\
&=\bx
\end{align*}
\begin{align*}
\text{Bounded variance:}\quad \bbE\|\calR_\infty(\bx)-\bx\|_{2}^{2} &\leq \bbE\|\calR_{\infty}(\bx)\|^2 = \bbE[\calR_{\infty}(\bx)^T\calR_{\infty}(\bx)] \notag \\
&=\frac{1}{d}\sum_{j=1}^d a^2 d^{2}\left(\frac{e^{\eps_0}+1}{e^{\eps_0}-1}\right)^{2} \notag \\
&= a^2 d^{2}\left(\frac{e^{\eps_0}+1}{e^{\eps_0}-1}\right)^{2}
\end{align*}
\end{enumerate}
This completes the proof of Lemma~\ref{R_infty-quantize-properties}.
\end{proof}

\end{document}